\newtheorem{theorem}{Theorem}
\newtheorem{example}{Example}
\newtheorem{proposition}{Proposition}
\newtheorem{lemma}{Lemma}
\def\iinfty{{\boldsymbol{\infty}}}
\def\lambdabar{\overline{\lambda}}
\newcommand{\Ran}{\mathrm{Ran}}
\newcommand{\Ker}{\mathrm{Ker}}
\newcommand{\diag}{\mathrm{diag}}
\newcommand{\R}{\mathbb{R}}
\newcommand{\one}{\mathds{1}}
\newcommand{\C}{\mathbb{C}}
\newcommand{\Z}{\mathbb{Z}}
\newcommand{\E}{\mathbb{E}}
\newcommand{\argmin}{\mathop{\mathrm{arg\,min}}}
\newcommand{\argmmin}[2]{\mathop{\begin{array}[t]{lr} \argmin  &  #1  \\
                                        \begin{array}[t]{l} #2 \end{array}\end{array}}}
\newcommand{\argmax}{\mathop{\mathrm{arg\,max}}}
\newcommand{\Argmin}{\mathop{\mathrm{Arg\,min}}}
\def\pp{{\bf p}}
\def\PPsi{{\boldsymbol{\Psi}}}
\def\llambda{{\boldsymbol{\lambda}}}
\def\qq{{\bf q}}
\def\hh{{\bf h}}
\def\pp{{\bf p}}
\def\FF{{\cal F}}
\def\xx{{\bf x}}
\def\yy{{\bf y}}
\title{ \textbf{{\sc Processing stationary noise: model and parameter selection in variational methods.}}}
\author{J\'er\^ome Fehrenbach \thanks{IMT-UMR5219, Universit\'e de Toulouse, CNRS, Toulouse, France ({\tt jerome.fehrenbach@math.univ-toulouse.fr})} \and Pierre Weiss \thanks{ITAV-USR3505, Universit\'e de Toulouse, CNRS, Toulouse, France ({\tt pierre.weiss@itav-recherche.fr})}}
\begin{document}
\maketitle

\begin{abstract}
Additive or multiplicative stationary noise recently became an important issue in applied fields such as microscopy or satellite imaging. Relatively few works address the design of dedicated denoising methods compared to the usual white noise setting. We recently proposed a variational algorithm to tackle this issue. In this paper, we analyze this problem from a statistical point of view and provide deterministic properties of the solutions of the associated variational problems. 
In the first part of this work, we demonstrate that in many practical problems, the noise can be assimilated to a colored Gaussian noise. We provide a quantitative measure of the distance between a stationary process and the corresponding Gaussian process.
In the second part, we focus on the Gaussian setting and analyze denoising methods which consist of minimizing the sum of a total variation term and an $l^2$ data fidelity term. While the constrained formulation of this problem allows to easily tune the parameters, the Lagrangian formulation can be solved more efficiently since the problem is strongly convex. Our second contribution consists in providing analytical values of the regularization parameter in order to approximately satisfy Morozov's discrepancy principle.
\end{abstract}

\paragraph{Keywords } Stationary noise, Berry-Esseen theorem, Morozov principle, Total variation, Image Deconvolution, Negative norm models, Destriping, Convex analysis and optimization.

\graphicspath{{Images/pdf/}}
\DeclareGraphicsExtensions{.pdf}

\pagestyle{myheadings}
\thispagestyle{plain}
\markboth{Parameter and model selection}{for the removal of stationary noise.}

\section{Introduction}

In a recent paper \cite{vsnr}, a variational method that decomposes an image into the sum of a piecewise smooth component and a set of stationary processes was proposed. This algorithm has a large number of applications such as deconvolution or denoising when \textit{structured patterns} degrade the image contents. A typical example of application that received a considerable attention lately is destriping \cite{munch2009stripe,leischner2010formalin,carfantan2010statistical,shu2011destripe,vsnr}. It was also shown to generalize the negative norm models \cite{Meyer,Vese,Osher,Aujol} in the discrete setting \cite{vsnr2}. Figures  \ref{fig:blonde}, \ref{fig:SPIM}, \ref{fig:comparisons} show examples of applications of this algorithm in an additive noise setting and Figure \ref{fig:lena} shows an example with a multiplicative noise model.

This algorithm is based on the hypothesis that the observed image $u_0$ can be written as:
\begin{equation}
 u_0 = u + \sum_{i=1}^m b_i
\end{equation}
where $u$ denotes the original image and $(b_i)_{i\in \{1,\cdots,m\}}$ denotes a set of realizations of independent stochastic processes $B_i$. 
These processes are further assumed to be stationary and read $B_i=\psi_i\ast \Lambda_i$ where $\psi_i$ denotes a known kernel and $\Lambda_i$ are i.i.d. random vectors. The decomposition algorithm can then be deduced from a Bayesian approach, leading to large scale convex optimization problems of size $m\times n$ where $n$ is the number of pixels/voxels in the image.

This method is now used routinely in the context of microscopy imaging. Its main weakness for a broader use lies in the difficulty to set its parameters adequately. One basically needs to input the filters $\psi_i$ and the marginals of each random vectors $\Lambda_i$, which is uneasy even for imaging specialists. Our aim in this paper is to provide a set of mathematically founded rules to simplify the parameter selection and reduce computing times. We do not tackle the problem of finding the filters $\psi_i$ (which is a problem similar to blind deconvolution), but focus on the choice of the marginals of $\Lambda_i$. 

The outline of the paper is as follows.
Notation are described in section \ref{sec:notation}.
In section \ref{sec:decompositionalgorithm}, we review the main principles motivating the decomposition algorithm.
In section \ref{sec:marginalselection}, we show that - from a statistical point of view and for many applications - assuming that $\lambda_i$ is a Gaussian process is nearly equivalent to selecting other marginals. This has the double advantage of simplifying the analysis of the model properties and reducing the computational complexity.
In section \ref{sec:parameterselection}, we show that when $b_i$ are drawn from Gaussian processes, parameter selection can be performed in a \textit{deterministic} way, by analyzing the primal-dual optimality conditions. We also show that the proposed ideas allows to reduce the problem dimension from $m\times n$ to $n$ variables, thus dividing the storage cost and computing times by a factor roughly equal to $m$. 
The appendix \ref{sec:appendix} contains the proofs of the results stated in section \ref{sec:parameterselection}. 

\begin{figure}[htb]
\begin{minipage}[b]{1.0\linewidth}
  \centering
  \centerline{
    \includegraphics[width=3cm]{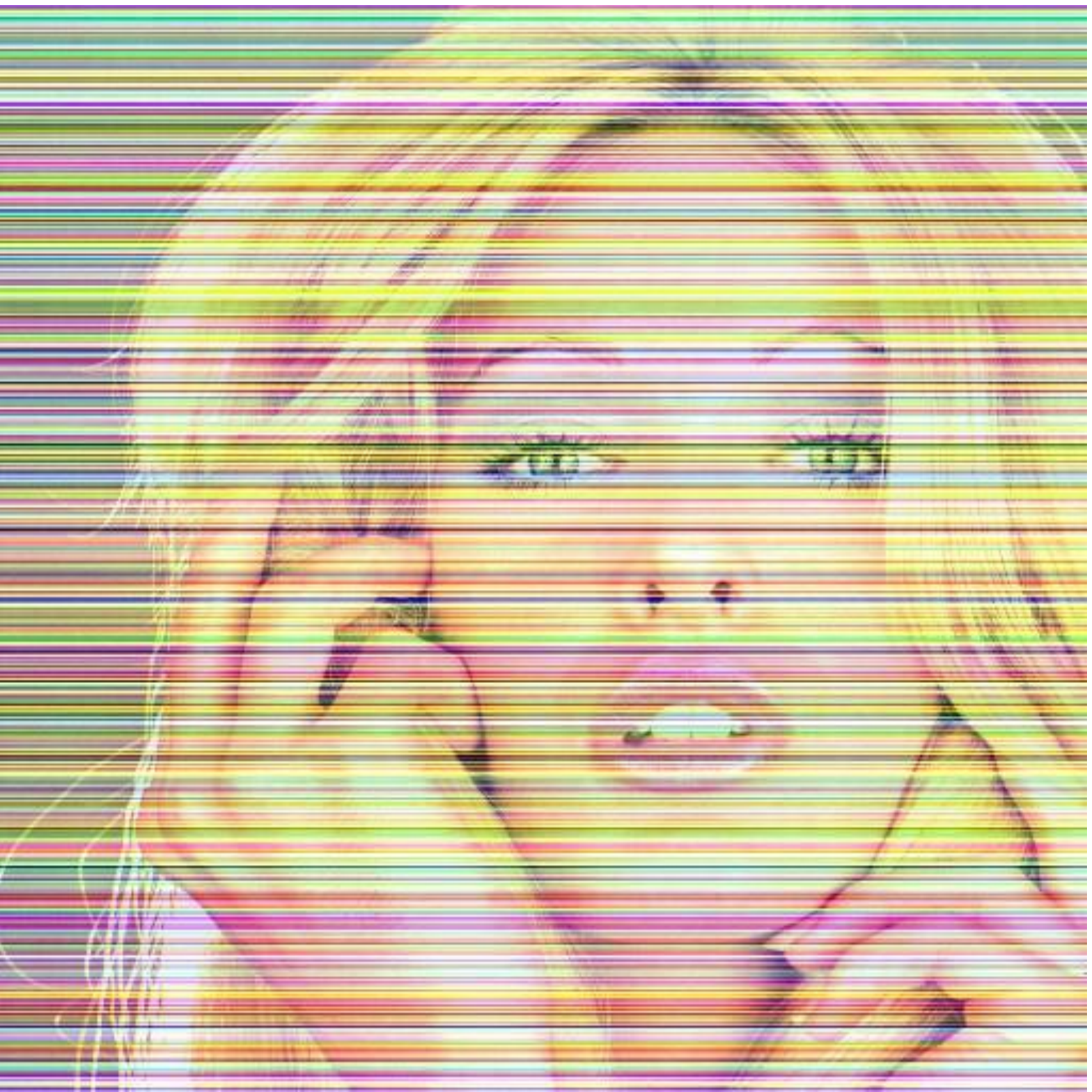}
    \includegraphics[width=3cm]{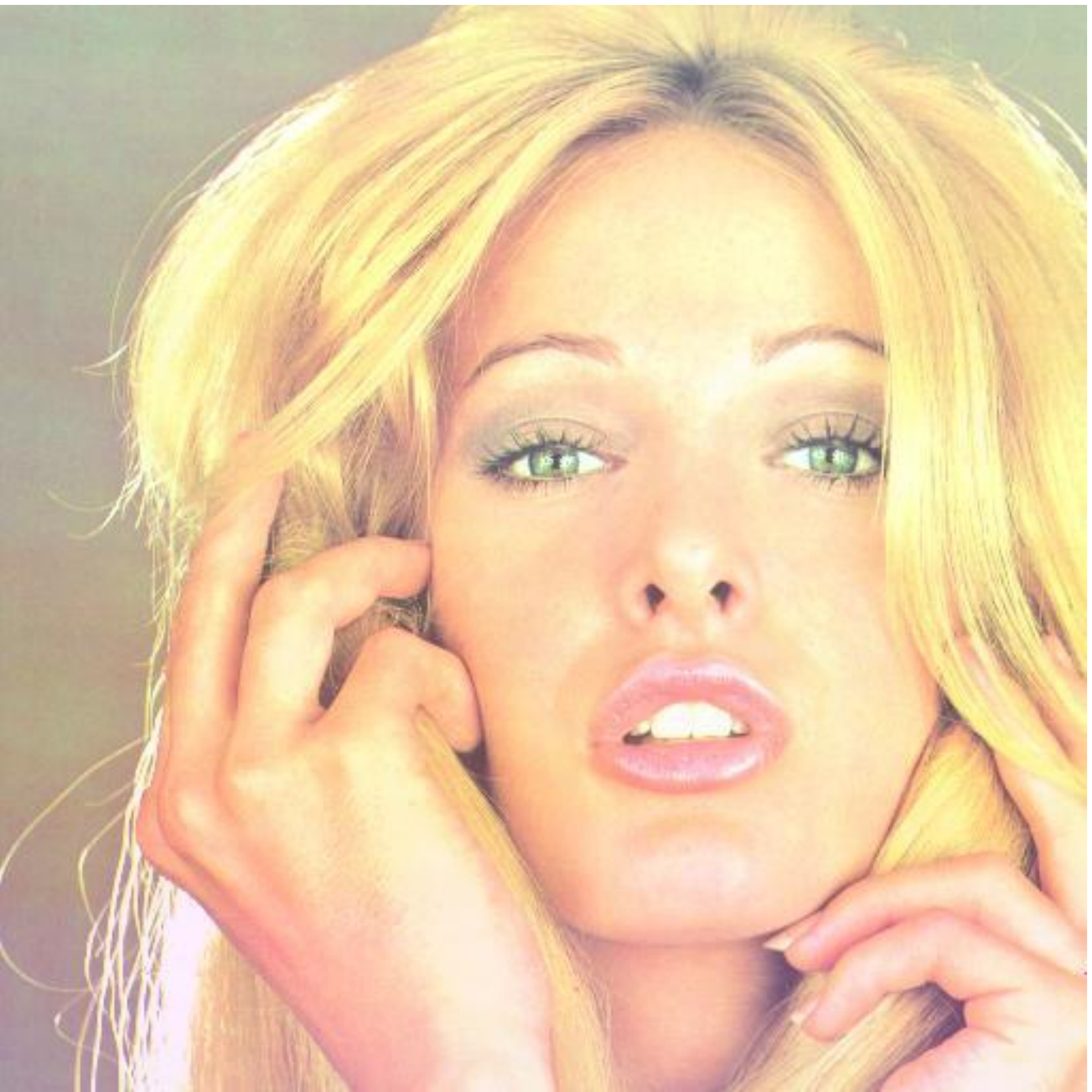}
    \includegraphics[width=3cm]{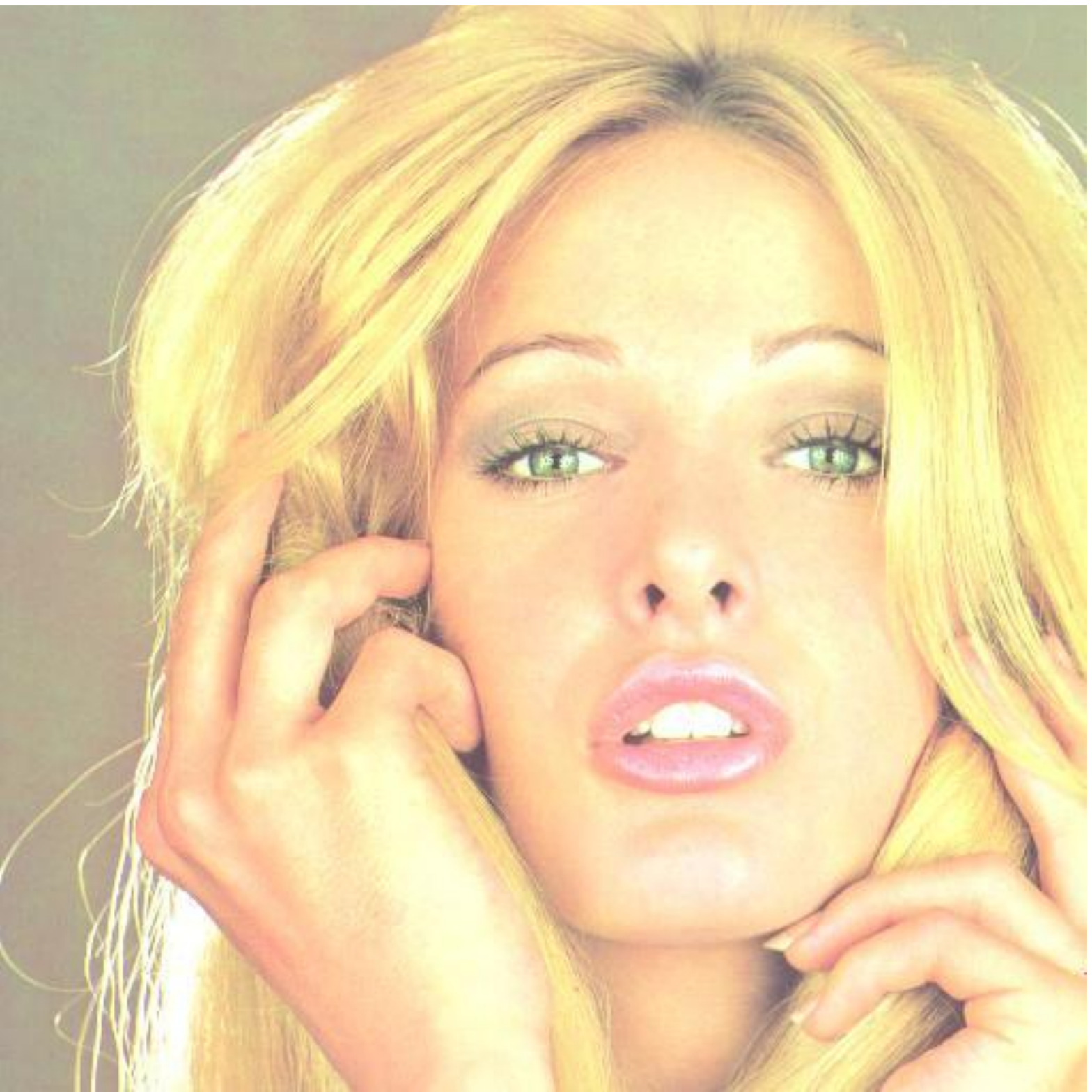}}
  \centerline{
    \includegraphics[width=3cm]{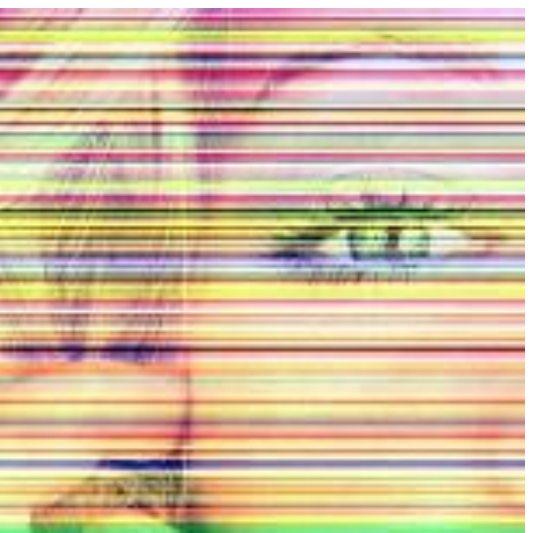}
    \includegraphics[width=3cm]{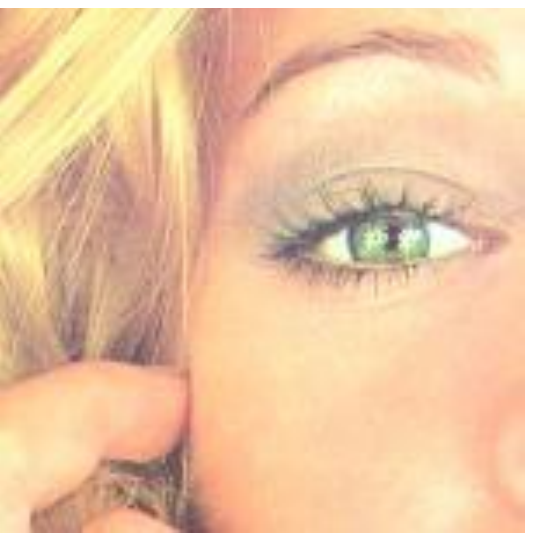}
    \includegraphics[width=3cm]{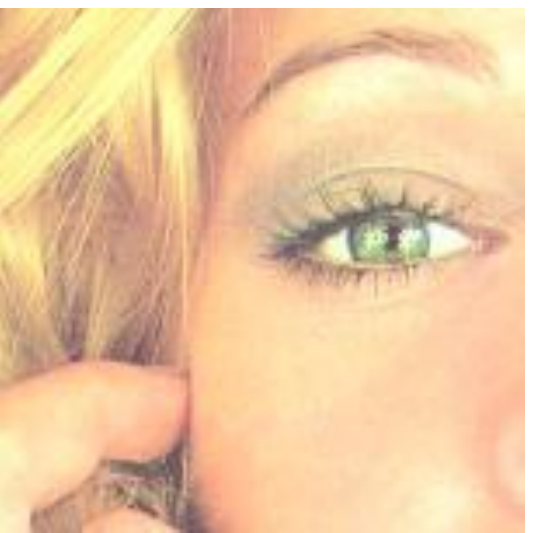}}
\end{minipage}
\caption{Top: full size images - Bottom: zoom on a small part. From left to right: Noisy image (16,5dB), denoised using the method proposed in \cite{vsnr} (PSNR=32,3dB), original image.}
\label{fig:blonde}
\end{figure}

\begin{figure}[htb]
\begin{minipage}[b]{1.0\linewidth}
  \centering
  \centerline{
    \includegraphics[width=5cm]{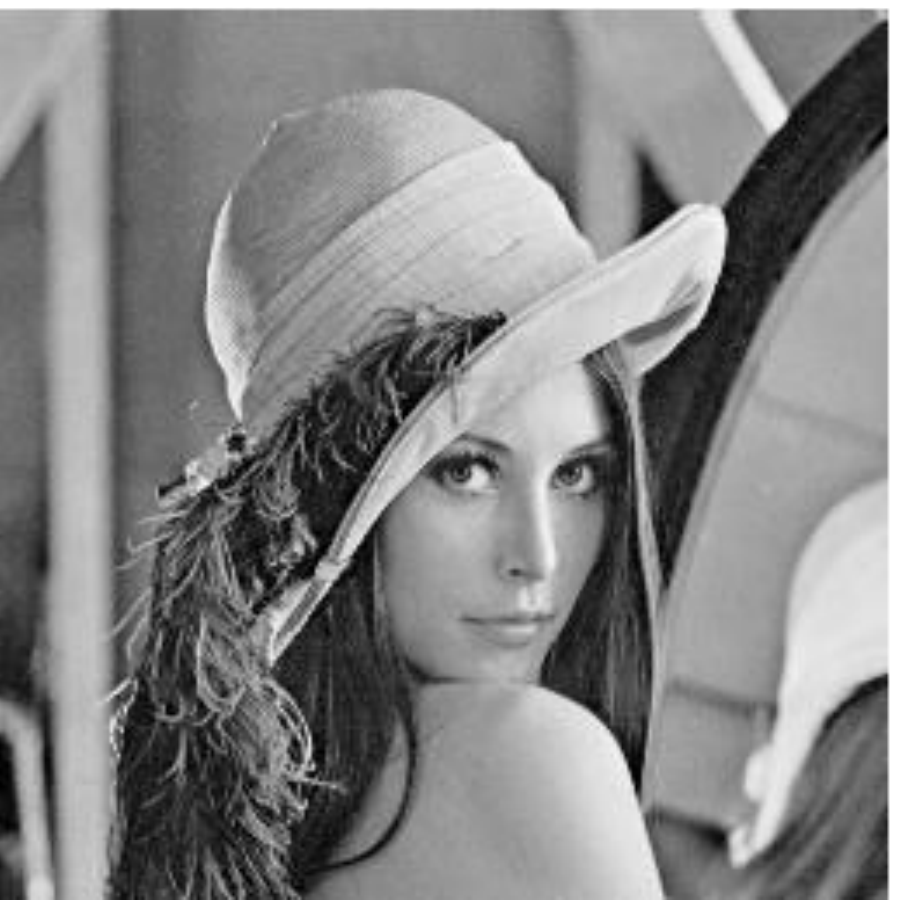} \ 
    \includegraphics[width=5cm]{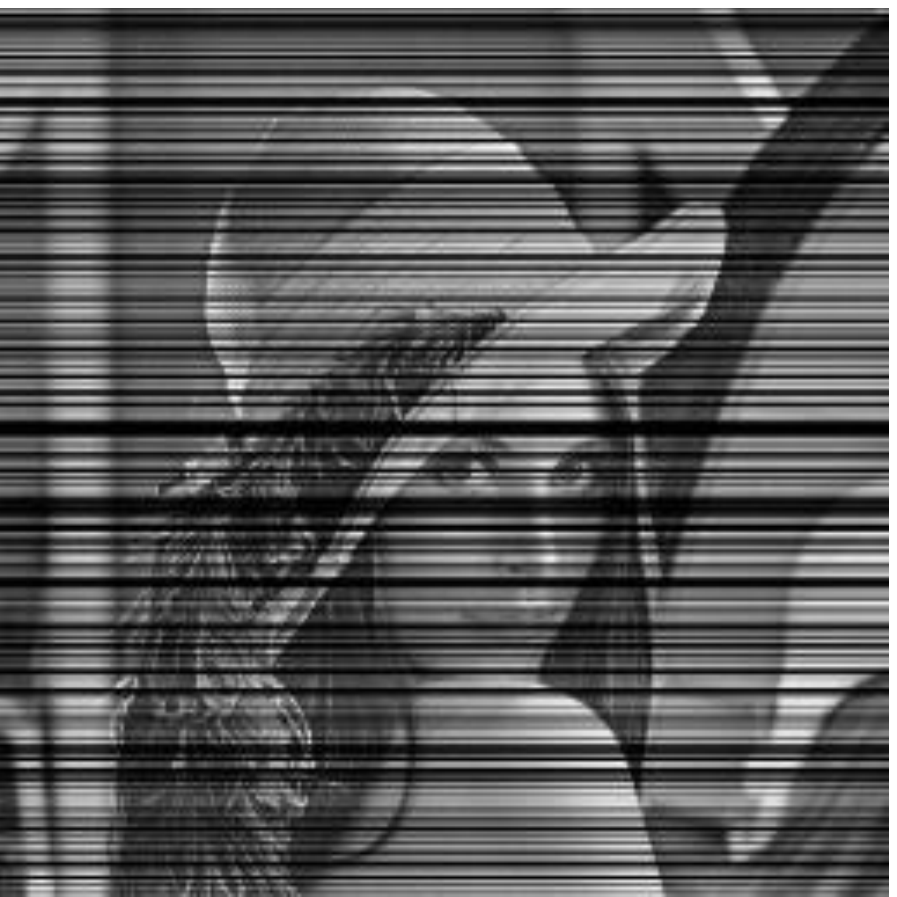}
  }
  \centering{
    \includegraphics[width=5cm]{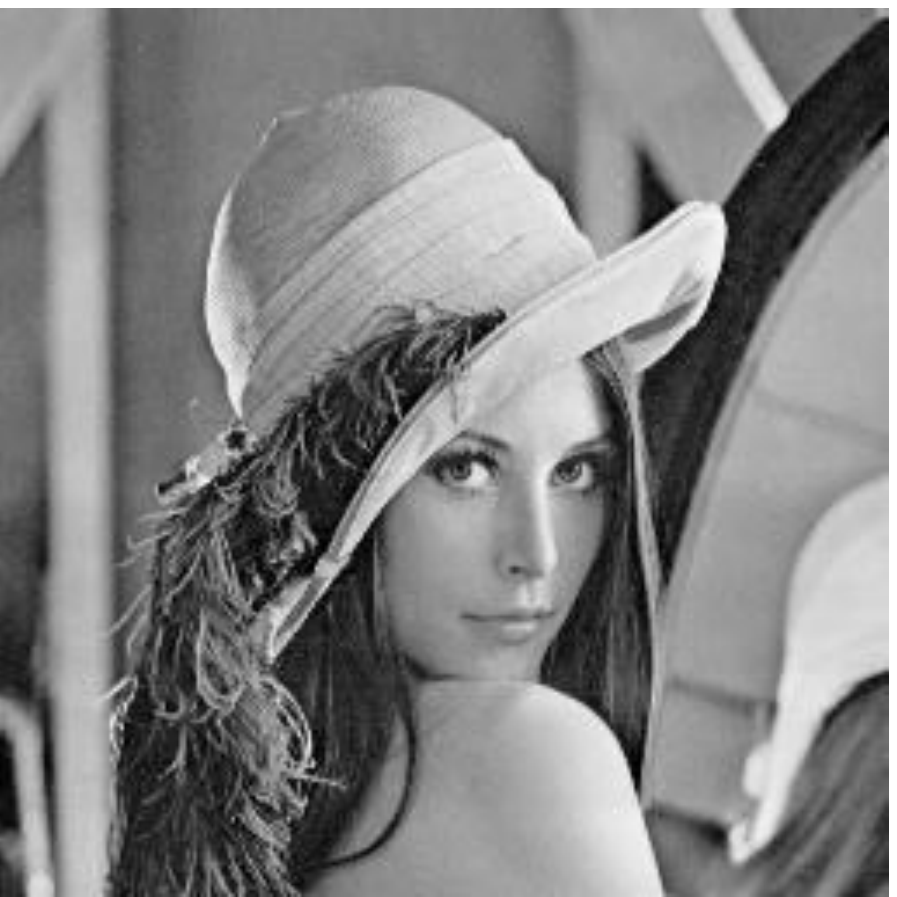} \
    \includegraphics[height=5cm]{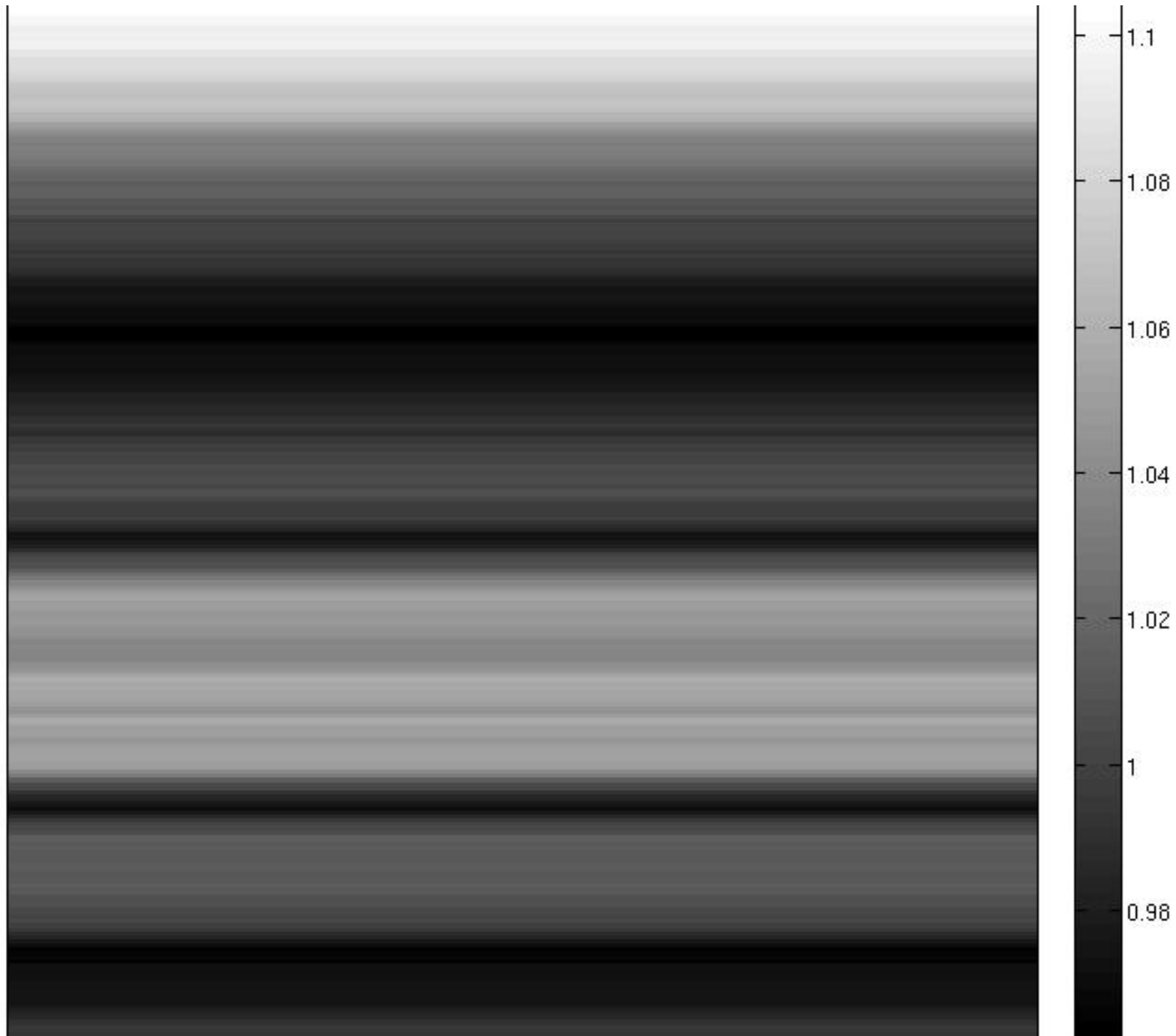}
  }
\end{minipage}
\caption{An example involving a multiplicative noise model. From left to right. Original image - Noisy image. It is obtained by multiplying each line of the original image by a uniform random variable in $[0.1,1]$. SNR=10.6dB - Denoised using the method proposed in \cite{vsnr} on the logarithm of the noisy image. SNR=29.1dB - Ratio between the original image and the denoised image. The multiplicative factor is retrieved accurately.}
\label{fig:lena}
\end{figure}

\begin{figure}[htb]
\begin{minipage}[b]{1.0\linewidth}
  \centering
  \centerline{
    \includegraphics[width=5cm,angle=90]{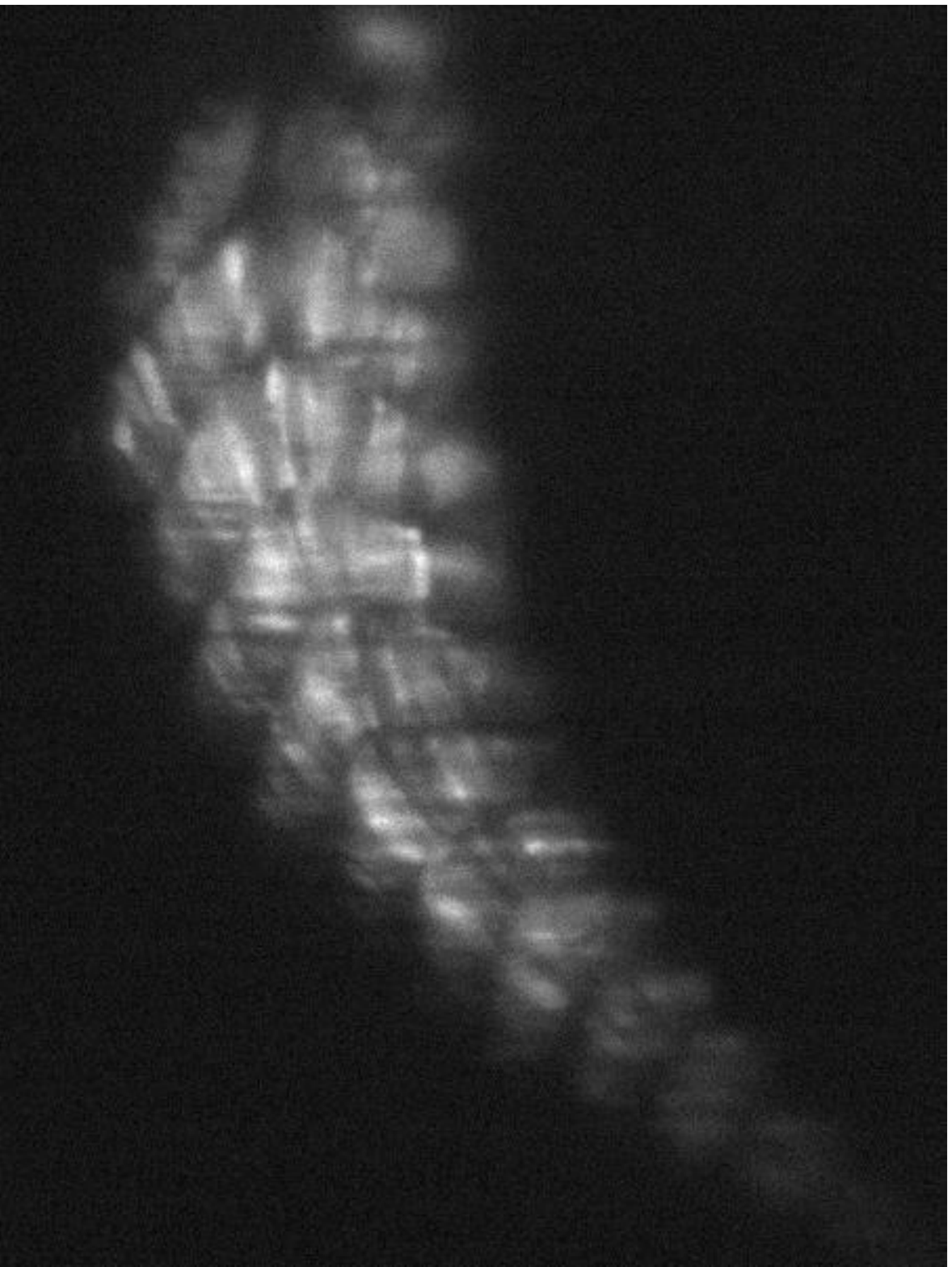}\ 
    \includegraphics[width=5cm,angle=90]{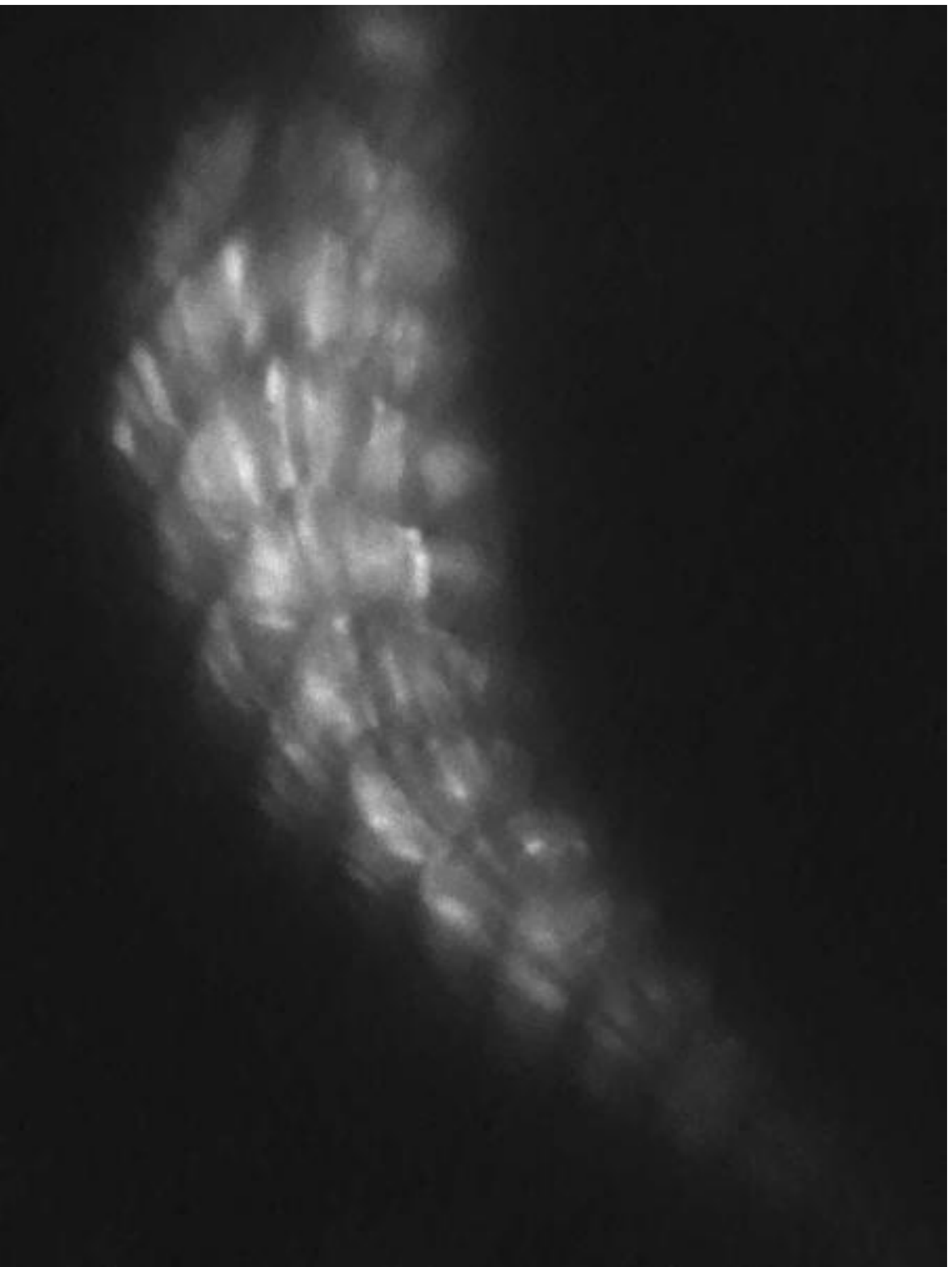}}
\end{minipage}
\caption{Left: SPIM image of a zebrafish embryo Tg.SMYH1:GFP Slow myosin Chain I specific fibers. Right: denoised image using VSNR. (Image credit: Julie Batut).}
\label{fig:SPIM}
\end{figure}

\section{Notation and context} 

\subsection{Notation}
\label{sec:notation}

We consider discrete $d$-dimensional images $u\in \R^n$, where $n=n_1\cdot n_2 \cdots n_d$ denotes the pixels number.
The pixels locations belong to the set $\Omega=\{1,\cdots,  n_1\}\times \cdots \times  \{1,\cdots, n_d\}$.
The pixel value of $u$ at location $\xx\in \Omega$ is denoted $u(\xx)= u(x_1, \cdots x_d)$.
Let $u\in \R^n$ denote an image. The image $u^{mean}\in \R^n$ has all its components equal to the mean of $u$. 
The standard $l^p$-norms on $\R^n$ are denoted $\|\cdot\|_p$.
Discrete vector fields $\qq = \begin{pmatrix}
                               q_1 \\ \vdots \\ q_d
                              \end{pmatrix}
\in \R^{n\times d}$ are denoted by bold symbols. The isotropic $l^p$-norms on vector fields are denoted $\|\cdot\|_\pp$ and defined by:
\begin{equation*}
 \|\qq\|_\pp = \|\sqrt{q_1^2+\cdots + q_d^2}\|_p.
\end{equation*}

The discrete partial derivative in direction $k$ is defined by
\begin{equation*}
 \partial_k u(\cdot,x_k,\cdot) = 
\left\{
\begin{array}{ll}
 u(\cdot,x_k+1,\cdot)-u(\cdot,x_k,\cdot)  & \textrm{if \ } 1\leq x_k < n_k\\
 u(\cdot,1,\cdot)-u(\cdot,n_k,\cdot)      & \textrm{if \ } x_k=n_k.
\end{array}\right.
\end{equation*}
Using periodic boundary conditions allows to rewrite partial derivatives as circular convolutions: $\partial_k u = d_k \star u$ where $d_k$ is a finite difference filter. The discrete gradient operator in $d$-dimension is defined by:
\begin{equation*}
 \nabla = 
\left( \begin{array}{c}
\partial_1 \\ \partial_2 \\ \vdots \\ \partial_d
\end{array} \right).
\end{equation*}
The discrete isotropic total variation of $u\in \R^n$ is defined by $TV(u)=\|\nabla u\|_{\boldsymbol 1}$.
Let $\|\cdot\|_\alpha$ and $\|\cdot\|_\beta$ denote norms on $\R^n$ and $\R^m$ respectively and $A:\R^n\rightarrow \R^m$ denote a linear operator.
The subordinate operator norm $\|A\|_{\alpha \rightarrow \beta}$ is defined as follows:
\begin{equation}
\label{eq:opnorm}
 \|A\|_{\alpha \rightarrow \beta} = \max_{\|x\|_\alpha\leq 1} \|Ax\|_\beta.
\end{equation}

Let $u$ and $v$ be two $d$-dimensional images.
The pointwise product between $u$ and $v$ is denoted $u\odot v$ and the pointwise division is denoted $u\oslash v$.
The conjugate of a number or a vector $a$ is denoted $\bar a$.
The transconjugate of a matrix $\mathcal{M}\in \C^{m\times n}$ is denoted $\mathcal{M}^*$.
The canonical basis of $\R^n$ is denoted $(e_i)_{i\in \{1,\cdots,n\}}$.
The discrete Fourier basis of $\C^n$ is denoted $(f_i)_{i\in \{1,\cdots,n\}}$.
We use the convention that $\|f_i\|_\infty=1, \ \forall i$ so that $\|f_i\|_2=\sqrt{n}$ (see e.g. \cite{Mallat}).
In all the paper $\FF=\begin{pmatrix}
                         f_1^* \\ \vdots \\ f_n^*
                        \end{pmatrix}$ 
denotes the $d$-dimensional discrete Fourier transform matrix. 
The inverse Fourier transform is denoted $\FF^{-1}$ and satisfies $\FF^{-1}=\frac{\FF^*}{n}$.
The discrete Fourier transform of $u$ is denoted $\FF u$ or $\hat u$.
It satisfies $\|\hat u\|_2 = \sqrt{n}\|u\|_2$.
The discrete symmetric of $u$ is denoted $\tilde u$ and defined by $\tilde u=\FF^{-1}\bar{\hat{u}}$.
The convolution product bewteen $u$ and $\psi$ is denoted $u\star \psi$ and defined for any $\xx\in X$ by:
\begin{equation}
 u\star \psi (\xx) = \sum_{\yy\in \Omega} u(\yy) \psi(\xx - \yy)
\end{equation}
where periodic boundary conditions are used. It satifies 
\begin{equation}
 u\star \psi = \FF^{-1}\left( \hat u  \odot \hat \psi \right).
\end{equation}
Since the discrete convolution is a linear operator, it can be represented by a matrix. 
The convolution matrix associated to a kernel $\psi$ is denoted in capital letters $\Psi$:
\begin{equation}
 \Psi u = u\star\psi.
\end{equation}
The transpose of a convolution operator with $\psi$ is a convolution operator with the symmetrized kernel: $\Psi^T u = \tilde \psi \star u$.


\subsection{Decomposition algorithm}
\label{sec:decompositionalgorithm}

The VSNR algorithm (Variational Stationary Noise Removal) is described in \cite{vsnr}. The starting point of our algorithm is the following image formation model:
\begin{equation}
\label{eq:imageformation}
u_0= u +\sum_{i=1}^m \lambda_i \star \psi_i 
\end{equation}
where $u_0\in \R^n$ is the observed image and $u \in \R^n$ is the image to recover. Each $\psi_i\in \R^n$ is a known filter and each $\lambda_i \in \R^n$ is the realization of a random vector with i.i.d. entries. We assume that its density reads $\pp(\lambda_i)\propto \exp(-\phi_i(\lambda_i))$ where $\phi_i$ is a \textit{separable} function of kind 
\begin{equation}
\label{eq:separable}
\phi_i(\lambda_i)=\sum_{\xx\in\Omega} \varphi_i(\lambda_i(\xx)).
\end{equation} 
with $\varphi_i:\R\rightarrow \R\cup\{+\infty\}$ (typical examples are $l^p$ to the $p$ norms). 
Note that hypothesis  \eqref{eq:separable} is a simple consequence of the i.i.d. hypothesis. 

Our aim is to recover both the stationary components $b_i=\lambda_i\star \psi_i$ and the image $u$.
Assuming that the noise $\displaystyle b=\sum_{i=1}^m b_i$ and the image are drawn from independent random vectors, 
the maximum a posteriori (MAP) approach leads to the following optimization problem: 
\begin{equation*}
 \textrm{Find\ } (\llambda^*,u^*)\in \argmax_{\llambda \in \R^{n\times m}, u\in \R^n} \pp(\llambda,u| u_0).
\end{equation*}
Bayes' rule allows to reformulate this problem as:
\begin{equation*}
\textrm{Find\ } \llambda^* \in \argmin_{\llambda \in \R^{n\times m}, u\in \R^n} -\log\pp(u_0|\llambda,u) - \log\pp(\llambda)-\log\pp(u),
\label{eq:MAP*}
\end{equation*}
where $u=u_0-\sum_{i=1}^m \lambda_i\star \psi_i$. 
Since we assumed independence of the $\lambda_i$s,
$$
-\log\pp(\llambda)=\sum_{i=1}^m - \log\pp(\lambda_i).
$$
If we further assume that $\pp(u)\propto\exp(-\|\nabla u\|_{\boldsymbol 1})$, the optimization problem we aim at solving finally writes:
\begin{equation}
\textrm{Find } \ \ \llambda \in \Argmin_{\llambda \in \R^{n\times m}} \left|\left|\nabla \left(u_0-\sum_{i=1}^m\lambda_i\star \psi_i\right)\right|\right|_{\boldsymbol 1}+\sum_{i=1}^m \phi_i(\lambda_i).
\label{eq:OptExact}
\end{equation}

This problem is convex and can be solved efficiently using first order algorithms such as Chambolle-Pock's primal-dual method \cite{CP,vsnr2}.
The filters $\psi_i$ and the functions $\phi_i$ are user defined and should be selected using prior knowledge on the noise properties.
Unfortunately, the choice of $\phi_i$ proved to be very complicated in applications. 
Even for the special case $\phi_i(\cdot) = \frac{\alpha_i}{2} \|\cdot\|_2^2$, $\alpha_i$ is currently obtained by trial and error and interesting values vary in the range $[10^{-8},10^{10}]$ depending on the filters $\psi_i$ and the noise level. It is thus essential to restrict the range of these parameters in order to ease the task of end-users. 

Problem \eqref{eq:OptExact} is a very large scale problem since typical 3D images contain from $10^8$ to $10^9$ voxels. Most automatized parameter selection methods such as generalized cross validation  \cite{golub1979generalized} or generalized SURE \cite{2012-vaiter-acha} require to solve several instances of \eqref{eq:OptExact}. This leads to excessive computational times in our setting. In this paper, we propose to estimate the parameters $\alpha_i$ according to Morozov principle \cite{morozov1966solution}. Contrarily to recent contributions \cite{van2011sparse,AravkinSIOPT1012} which find solutions of the constrained problems by iteratively solving the unconstrained problem \eqref{eq:OptExact}, our aim is to obtain an analytical approximate value of $\alpha_i$. This approach is motivated by the fact that in denoising applications, the users usually have a crude idea of the noise level, so 
that it makes no sense to reach exactly a given noise level. Note that the constrained problem could be solved directly by using methods such as the ADMM \cite{ng2010solving,teuber2013minimization}. However, when $\phi_i(\cdot) = \frac{\alpha_i}{2} \|\cdot\|_2^2$, the Lagrangian formulation is strongly convex, while the constrained one is not, and efficient methods that converge in $O\left( \frac{1}{k^2}\right)$ can be devised in the strongly convex setting \cite{weiss2009efficient,CP}.






\section{Effectiveness of the Gaussian model in the non Gaussian setting} 
\label{sec:marginalselection}

In this section we analyze the statistical properties of random processes that can be written as $\Lambda\ast \psi$ where $\Lambda$ is a white noise process.
Our main result is that the stationary noise $b_i=\lambda_i\star \psi_i$ can be assimilated to a \textit{Gaussian} colored noise for many applications of interest even if $\Lambda$ is non Gaussian.
 The heuristic reason is that if convolutions kernels with a large support are considered, then many pixels have a significant contribution to one pixel of the estimated noise component. Therefore, a central limit theorem implies that the sum of these contributions can be assimilated to a sum of Gaussian processes.


\subsection{Distance of stationary processes to the Gaussian distribution}
\label{sec:distancestat}

Our results are simple consequences of the Berry-Esseen theorem \cite{billingsley2009convergence} that quantifies the distance between a sum of independent random variables and a Gaussian.


\begin{theorem}[Berry-Esseen]
\label{thm:BerryEsseen}
Let $X_1,X_2,\cdots,X_n$ be independent centered random variables of finite variance $\sigma_i^2$ and finite third order moment $\rho_i=\E(|X_i|^3)$.
\begin{equation*}
\textrm{Let\ \ } \ \ S_n=\frac{X_1+X_2+\cdots+X_n}{\sqrt{\sigma_1^2+\sigma_2^2+\cdots+\sigma_n^2}}.
\end{equation*}
Let $F_n$ denote the cumulative distribution functions (cdf) of $S_n$.
Let $\Phi$ denote the cdf of the standard normal distribution. Then
\begin{equation}
\|F_n-\Phi\|_\infty\leq C_0 \frac{\sum_{i=1}^n \rho_i}{(\sum_{i=1}^n \sigma_i^2)^{3/2}}
\label{eq:Berry} 
\end{equation}
where $C_0\leq 0.56$.
\end{theorem}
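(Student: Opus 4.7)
The plan is to follow the classical Fourier-analytic route. Writing $\sigma = \sqrt{\sum_{i=1}^n \sigma_i^2}$ and $\beta = \sum_{i=1}^n \rho_i$, I let $\phi_n(t) = \E\!\left[e^{itS_n}\right] = \prod_{i=1}^n \phi_{X_i}(t/\sigma)$ denote the characteristic function of $S_n$. The whole argument is reduced to comparing $\phi_n$ to $e^{-t^2/2}$ on a well-chosen interval and invoking a smoothing inequality that transfers a characteristic-function estimate into a uniform estimate on the cumulative distribution functions.

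First I would invoke Esseen's smoothing lemma: for any $T>0$,
\begin{equation*}
\|F_n - \Phi\|_\infty \;\leq\; \frac{1}{\pi}\int_{-T}^{T} \frac{|\phi_n(t) - e^{-t^2/2}|}{|t|}\,dt \;+\; \frac{c}{T},
\end{equation*}
where $c$ is an absolute constant arising from the maximum of the Gaussian density. The choice $T = \sigma^{3}/\beta$ is natural because it balances the two terms and matches the scale where the Taylor expansion of each $\phi_{X_i}(t/\sigma)$ remains controlled.

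The second step is to bound $|\phi_n(t) - e^{-t^2/2}|$ on $[-T,T]$. For each centered variable $X_i$, a third-order Taylor expansion gives $\phi_{X_i}(s) = 1 - \tfrac{1}{2}\sigma_i^2 s^2 + R_i(s)$ with $|R_i(s)| \leq \tfrac{1}{6}\rho_i |s|^3$. Substituting $s=t/\sigma$ and passing to logarithms via the inequality $|\log(1+z) - z| \leq |z|^2$ valid for $|z| \leq 1/2$, one shows that for $|t| \leq T$,
\begin{equation*}
\Bigl|\log \phi_n(t) + \tfrac{t^2}{2}\Bigr| \;\leq\; K\,\frac{\beta}{\sigma^3}|t|^3
\end{equation*}
for an explicit constant $K$. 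Exponentiating and using $|e^a - e^b| \leq \max(e^a, e^b)|a-b|$ yields $|\phi_n(t) - e^{-t^2/2}| \leq K' e^{-t^2/4} \frac{\beta}{\sigma^3}|t|^3$, whose integral against $|t|^{-1}$ on $\R$ is a finite absolute constant times $\beta/\sigma^3$. Combined with the $c/T$ term and the choice of $T$, this produces the announced bound $\|F_n-\Phi\|_\infty \leq C\,\beta/\sigma^3$.

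The main obstacle is not the qualitative argument, which is standard, but squeezing the constant down to the sharp value $C_0 \leq 0.56$ advertised in the statement. Crude bookkeeping in the steps above yields constants of order several units (Esseen's original value was around $7.5$, later improved by Van Beeck, Shiganov, Tyurin, and most recently Shevtsova). Obtaining $0.56$ requires more delicate ingredients: a sharper smoothing inequality with an optimized kernel, a refined log-expansion that keeps the fourth-order correction when it helps, and a careful split between the regimes $|t|$ small (where the Taylor bound is tight) and $|t|$ of order $T$ (where one uses instead that $|\phi_{X_i}(t/\sigma)|$ is strictly smaller than $1$). I would cite the Shevtsova refinement for the final constant rather than rederive it, since the qualitative inequality \eqref{eq:Berry} is all that is subsequently used in the paper.
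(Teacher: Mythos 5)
The paper does not prove this statement: Theorem \ref{thm:BerryEsseen} is the classical Berry--Esseen theorem, quoted with a reference to the literature, and the constant $C_0\leq 0.56$ is imported from the known refinements of the original bound. Your proposal is therefore not being compared against a proof in the paper but against a citation. As an outline of the standard Fourier-analytic argument it has the right architecture (Esseen's smoothing inequality, comparison of characteristic functions on $[-T,T]$ with $T\asymp\sigma^3/\beta$ in your notation), and you are right that no amount of bookkeeping in this scheme produces $0.56$; that constant comes from the modern refinements (Shevtsova and predecessors) and must be cited.

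One step of your sketch would fail as written: the passage to logarithms requires $|\phi_{X_i}(t/\sigma)-1|\leq 1/2$ for every $i$ and every $|t|\leq T$, and in the non-identically-distributed setting this is false on the whole interval. For instance, a single Rademacher summand among many small-variance ones gives $\phi_{X_1}(t/\sigma)=\cos(t/\sigma)$, which vanishes at $|t|=\pi\sigma/2$, typically well inside $[-T,T]$, so $\log\phi_{X_1}$ is not even defined there. The classical proofs avoid taking logarithms of individual factors and instead use telescoping product inequalities of the form $\left|\prod_i a_i-\prod_i b_i\right|\leq\sum_i|a_i-b_i|$ for factors of modulus at most one, together with separate decay estimates for $|\phi_n(t)|$ as a whole; this is where most of the real work in the theorem lives. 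Since you ultimately defer to the literature for the only quantitatively relevant part of the statement (the value of $C_0$), the efficient and honest course is the one the paper takes: state the theorem and cite it, as the bound \eqref{eq:Berry} is used only as a black box in what follows.
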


In our problem, we consider random vectors of kind:
\begin{equation}
\label{eq:defB}
B=\psi\star \Lambda = \Psi \Lambda,
\end{equation}
so that
\begin{equation*}
B(\xx)=\sum_{\yy\in \Omega} \Lambda(\xx-\yy)\psi(\yy), 
\end{equation*}
where $(\Lambda(\xx))_{\xx\in \Omega}$ are i.i.d. random variables. Let us further assume that they are of finite second and third order moment \footnote{This hypothesis is not completely necessary, but simplifies the exposition.}. Denote $\sigma^2=\E(\Lambda(\yy)^2)<+\infty$ and $\rho=\E(|\Lambda(\yy)|^3)< +\infty$.
The mean of $B$ is $\mathbb{E}(B)=0$ since convolution operators preserve the set of vectors with zero mean. 
Moreover its covariance matrix is $\mathop{Cov}(B)=\sigma^2 \Psi^T\Psi$ with $\Psi^T\Psi=\FF^{-1} \mathop{diag}(|\hat \psi|^2)\FF$ whatever the distribution of $\Lambda$. Since Gaussian processes are completely described by their mean and covariance matrix, it suffices to prove that any coordinate $B(\xx)$ is close to a Gaussian for the whole process $B$ to be near Gaussian. The following results state that $B$ is close to a Gaussian random vector whatever the law of $\Lambda$ if the filter $\psi$ satisfies a geometrical criterion discussed later.

\begin{proposition}
 \label{prop:NonAsymptot}
Let $G$ denote the cdf of $\frac{B(\xx)}{s}$ where $s=\|\psi\|_2$. 
This cdf is independent of $\xx$, moreover:
\begin{equation}
\label{eq:boubou}
 \|G-\Phi\|_\infty\leq 0.56 \frac{\rho}{\sigma^3} \frac{\|\psi\|_3^3}{\|\psi\|_2^3}.
\end{equation}
\end{proposition}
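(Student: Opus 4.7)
The plan is to apply the Berry-Esseen theorem directly: because $B(\xx)$ is by construction a weighted sum of i.i.d.\ random variables, it falls squarely within the hypotheses of Theorem~\ref{thm:BerryEsseen}, and the inequality \eqref{eq:boubou} is exactly the bound that Berry-Esseen produces.

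First I would fix $\xx\in\Omega$ and expand $B(\xx)=\sum_{\yy\in\Omega}\psi(\yy)\Lambda(\xx-\yy)$, then set $X_\yy := \psi(\yy)\Lambda(\xx-\yy)$. Under the periodic boundary convention the map $\yy\mapsto \xx-\yy$ is a bijection of $\Omega$, so the variables $(X_\yy)_{\yy\in\Omega}$ are independent; each is centered and satisfies $\mathrm{Var}(X_\yy)=\psi(\yy)^2\sigma^2$ and $\E|X_\yy|^3=|\psi(\yy)|^3\rho$. Summing,
\[
\sum_{\yy\in\Omega}\mathrm{Var}(X_\yy)=\sigma^2\|\psi\|_2^2,\qquad \sum_{\yy\in\Omega}\E|X_\yy|^3=\rho\,\|\psi\|_3^3.
\]
Plugging these into \eqref{eq:Berry} applied to the normalized sum $S_n=B(\xx)/(\sigma\|\psi\|_2)$ gives an $\|\cdot\|_\infty$-deviation bounded by $0.56\,\rho\|\psi\|_3^3/(\sigma^3\|\psi\|_2^3)$, which is exactly \eqref{eq:boubou}.

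For the fact that $G$ does not depend on $\xx$, I would reindex by $\yy'=\xx-\yy$: this rewrites $B(\xx)=\sum_{\yy'\in\Omega}\psi(\xx-\yy')\Lambda(\yy')$ as a linear combination of the same i.i.d.\ family $(\Lambda(\yy'))_{\yy'}$ whose coefficient multiset $\{\psi(\xx-\yy'):\yy'\in\Omega\}$ is just the multiset of values of $\psi$, independent of $\xx$; the distribution of $B(\xx)$ is therefore translation-invariant, and so is the cdf of $B(\xx)/s$.

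There is no real obstacle here — the whole argument is a single moment computation feeding into Theorem~\ref{thm:BerryEsseen}. The only bit of bookkeeping is aligning the canonical Berry-Esseen normalization by the total standard deviation $\sigma\|\psi\|_2$ with the proposition's normalization $s=\|\psi\|_2$; since the right-hand side of Berry-Esseen is scale-free this does not affect the constant, and the inequality in \eqref{eq:boubou} holds as stated.
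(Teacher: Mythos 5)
Your proof is correct and follows exactly the same route as the paper's: expand $B(\xx)$ as a weighted sum of the i.i.d.\ variables $\Lambda$, compute $\E(|\Lambda(\xx-\yy)\psi(\yy)|^2)=\psi(\yy)^2\sigma^2$ and $\E(|\Lambda(\xx-\yy)\psi(\yy)|^3)=|\psi(\yy)|^3\rho$, and feed these into Theorem~\ref{thm:BerryEsseen}, with stationarity giving independence of the cdf from $\xx$. You are in fact slightly more careful than the paper in flagging the mismatch between the normalization $s=\|\psi\|_2$ in the statement and the Berry--Esseen normalization $\sigma\|\psi\|_2$ (so that $G$ should really be compared to the cdf of $\mathcal{N}(0,\sigma^2)$ unless $\sigma=1$), a point the paper's proof passes over silently.
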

\begin{proof}
The independence w.r.t. $\xx$ is a direct consequence of the stationarity of $B$.
Bound \eqref{eq:boubou} is a direct consequence of Berry-Esseen theorem \ref{thm:BerryEsseen}. 
It suffices to notice that $\E(|\Lambda(\xx-\yy)\psi(\yy)|^2)=\psi(\yy)^2\sigma^2$,  $\E(|\Lambda(\xx-\yy)\psi(\yy)|^3)=|\psi(\yy)|^3\rho$ for any $(\xx,\yy)\in \Omega^2$ and to apply theorem \ref{thm:BerryEsseen}. 
\end{proof}

Thus, if  $\frac{\|\psi\|_3^3}{\|\psi\|_2^3}$ is sufficiently small, the distribution of $B$ will be near Gaussian. 
The following result clarifies this condition in an asymptotic regime. 
\begin{proposition}
\label{prop:AymptotOne}
Let $\psi:\R_+^d\rightarrow \R$ denote a function.
Let $\Omega_n=[1,n]^d\cap \Z^d$ denote a Euclidean grid. Let $\displaystyle s_n=\sqrt{\sum_{\xx\in \Omega_n}\psi^2(\xx)}$.
If $\Lambda(\xx)$ is of finite second and third order moment and the sequence $(\psi(\xx))_{\xx\in \Z^d}$ is uniformly bounded $|\psi(\xx)|\leq M<+\infty, \ \forall \xx\in \Z^d$ and has infinite variance $\displaystyle\lim_{n\rightarrow +\infty} s_n=+\infty$, then for all $\xx\in \Omega_n$:
\begin{equation}
\frac{B(\xx)}{s_n} \stackrel{(\mathcal{D})}{\rightarrow} \mathcal{N}(0,\sigma^2).
\end{equation}
\end{proposition}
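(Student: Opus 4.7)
The plan is to invoke Proposition \ref{prop:NonAsymptot} directly, reducing the convergence in distribution of $B(\xx)/s_n$ to the convergence of the Berry--Esseen bound to zero. Since the distribution of $B(\xx)/s_n$ has mean $0$ and variance $\sigma^2$ independently of $n$, Proposition \ref{prop:NonAsymptot} (applied on the grid $\Omega_n$) yields that the cdf $G_n$ of $B(\xx)/(\sigma s_n)$ satisfies
\begin{equation*}
\|G_n - \Phi\|_\infty \leq 0.56\, \frac{\rho}{\sigma^3}\,\frac{\|\psi\|_3^3}{\|\psi\|_2^3},
\end{equation*}
where the norms are computed over $\Omega_n$. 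Convergence in sup-norm of cdfs to a continuous cdf $\Phi$ is equivalent to convergence in distribution, so it suffices to establish that the right-hand side vanishes as $n \to \infty$.

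To control the ratio $\|\psi\|_3^3 / \|\psi\|_2^3$, I would exploit the uniform boundedness hypothesis $|\psi(\xx)| \leq M$. This gives the pointwise bound $|\psi(\xx)|^3 = |\psi(\xx)|\cdot \psi(\xx)^2 \leq M\, \psi(\xx)^2$, and summing over $\Omega_n$ yields $\|\psi\|_3^3 \leq M\, \|\psi\|_2^2 = M\, s_n^2$. Since $\|\psi\|_2^3 = s_n^3$, we obtain
\begin{equation*}
\frac{\|\psi\|_3^3}{\|\psi\|_2^3} \leq \frac{M}{s_n},
\end{equation*}
which tends to $0$ under the assumption $s_n \to +\infty$.

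Combining the two displays, $\|G_n - \Phi\|_\infty \to 0$, so $B(\xx)/(\sigma s_n)$ converges in distribution to $\mathcal{N}(0,1)$, and therefore $B(\xx)/s_n \stackrel{(\mathcal{D})}{\to} \mathcal{N}(0,\sigma^2)$, as claimed. There is no real obstacle here: once one notices that $|t|^3 \leq M\, t^2$ for $|t|\leq M$, the Berry--Esseen constant on the right-hand side of \eqref{eq:boubou} collapses at rate $1/s_n$, and the conclusion follows immediately. The only mild subtlety is to make clear that the stationarity argument used to show $\xx$-independence of the cdf in Proposition \ref{prop:NonAsymptot} carries over verbatim here, so that the limit statement holds uniformly in $\xx \in \Omega_n$.
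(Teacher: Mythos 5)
Your proof is correct and follows essentially the same route as the paper: both reduce the claim to the Berry--Esseen bound of Proposition \ref{prop:NonAsymptot} and kill the ratio $\|\psi\|_3^3/\|\psi\|_2^3$ via the pointwise inequality $|\psi(\xx)|^3\leq M\,\psi(\xx)^2$, giving the rate $M/s_n\to 0$. If anything, your normalization by $\sigma s_n$ before invoking the standard normal cdf is slightly more careful than the paper's own statement.
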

\begin{proof}
Let us denote:
\begin{equation}
\label{eq:fn}
 f(n)=\frac{\sum_{\xx \in \Omega_n} |\psi(\xx)^3|}{\left( \sum_{\xx \in \Omega_n} \psi(\xx)^2\right)^{3/2}}.
\end{equation}
We have 
\begin{align*}
\sum_{\xx \in \Omega_n} |\psi(\xx)|^3 & \leq \sum_{\xx \in \Omega_n} \|\psi\|_\infty \psi(\xx)^2 \\
				     & \leq M s_n^2.
\end{align*}
Thus:
\begin{align*}
 f(n)\leq\frac{M s_n^2}{s_n^3}=\frac{M}{s_n}.
\end{align*}
The right-hand side in (\ref{eq:Berry}) is $f(n)$ and goes to $0$ as $n\rightarrow +\infty$. 
Lindeberg-Feller theorem could also be used in this context and allow to avoid moment conditions. 
\end{proof}

\subsection{Examples}

We present different examples of kernels where the Theorem \ref{thm:BerryEsseen} applies.

\begin{example}
We first consider a kernel that is the indicator function of a "large" set, namely $\psi(\xx)=1$ if $\xx\in I$, and $\# I=N$.
Then the upper bound in Equation \eqref{eq:Berry} is $C_0/\sqrt{N}$. It becomes small when $N$ becomes large.
\end{example}

\begin{example}
Let us study the case of  kernels with a (slow enough) power decay: $\psi(\xx)=|\xx|^\alpha$, for some $-d/2<\alpha<0$. In this case, the quantity $s_n$ tends to infinity since it is asymptotic to 
$$\int_{[1,n]^d} |\xx|^{2\alpha} d\xx \sim K \int_{r=1}^nr^{d-1}r^{2\alpha}dr\sim Kn^{d+2\alpha}$$
for some constant $K$. Therefore Proposition \ref{prop:AymptotOne} applies. This result is still valid for $\alpha\ge0$.
\end{example}

\begin{example} We treat the case of an anisotropic Gaussian filter $\psi$ with axes aligned with the coordinate axes. In this case the variance is finite and proposition \ref{prop:AymptotOne} does not apply. However we can give an explicit value of the upper bound in \eqref{eq:boubou}, which ensures that the process is close from a Gaussian. Let us assume that 
$$\psi(\xx)=Ke^{-\sum_{i=1}^d x_i^2/2\sigma_i^2},$$
where $K$ is a normalizing constant and $\xx=(x_1,x_2,\ldots,x_d)\in\Z^d$.  
We provide in this case  an upper bound for $f(n)$ in terms of $(\sigma_i)$.
For the sake of simplicity we assume that $K=1$.
\begin{align*}
\sum_{\xx \in \Z^d} |\psi(\xx)^3|&=\sum_{(n_1,\dots,n_d)\in\Z^d}e^{-3\sum_{1\le i\le d}n_i^2/2\sigma_i^2}\\
&=\prod_{i=1}^d\left(\sum_{n\in\Z}e^{-3n^2/2\sigma_i^2}\right)\\
&=\prod_{i=1}^d\left(1+2\sum_{n>0}e^{-3n^2/2\sigma_i^2}\right)\end{align*}
and similarly
\begin{align*}
\sum_{\xx \in \Z^d} |\psi(\xx)^2|&=\prod_{i=1}^d\left(1+2\sum_{n>0}e^{-n^2/\sigma_i^2}\right)
\end{align*}
We use the following inequalities
$$\dfrac{1}{2}\sqrt{\dfrac{\pi}{\alpha}}-1\le\int_{1}^{+\infty}e^{-\alpha t^2}dt\le\sum_{n>0}e^{-\alpha n^2}\le \int_0^{+\infty}e^{-\alpha t^2}dt=\dfrac{1}{2}\sqrt{\dfrac{\pi}{\alpha}}$$
to obtain
$$\max\left(1,\sqrt{\dfrac{\pi}{\alpha}}-1\right)\le1+2\sum_{n>0}e^{-\alpha n^2}\le 1+\sqrt{\dfrac{\pi}{\alpha}}.$$

It follows that
$$\lim_{n\to\infty}f(n)\le\prod_{i=1}^d\dfrac{\left(1+\sigma_i\sqrt{{2\pi}/{3}}\right)}{\max\left(1,\sigma_i\sqrt{\pi}-1\right)^{3/2}}=\prod_{i=1}^d g(\sigma_i).$$
Note that $\displaystyle g(\sigma) = \mathop{O}_{+\infty}(\frac{1}{\sqrt{\sigma}})$. In other words if the Gaussian kernel has sufficiently large variances, the constant in the upper bound of \eqref{eq:Berry} is small.
\end{example}

\begin{example}\label{ex:illustrate}
In this example, we illustrate the theorem on a practical setting.
Let us assume that $\Lambda(\xx)$ is a Bernoulli-uniform random variable in order to model sparse processes. 
With this model $\Lambda(\xx)=0$ with probability $1-\gamma$ and takes a random value distributed uniformly in $[-1,1]$ with probability $\gamma$. 
Simple calculation leads to $\sigma^2=\frac{\gamma}{3}$ and $\rho=\frac{\gamma}{4}$ so that equation \eqref{eq:boubou} gives:
\begin{equation}
 \|G-\Phi\|_\infty\leq \frac{0.73}{\sqrt{\gamma}} \frac{\|\psi\|_3^3}{\|\psi\|_2^3}.
\end{equation}

Let us define a 2D anisotropic Gaussian filter as:
\begin{equation}
\label{eq:Gaussian}
\psi(x_1,x_2)=C\exp\left(-\frac{x_1^2}{2\sigma_1^2}-\frac{x_2^2}{2\sigma_2^2}\right)
\end{equation}
where $C$ is a normalization constant. This filter is used frequently in the microscopy experiments we perform and is thus of particular interest. 
Figure \ref{fig:stationnaryprocesses} shows practical realizations of stationary processes defined as $\Lambda\star \psi$. 
Note that as $\sigma_1$ or $\gamma$ increase, the texture gets similar to the Gaussian process on the last row. 
Table \ref{tab:practicalbounds} quantifies the proximity of the non Gaussian process to the Gaussian one using proposition \ref{prop:NonAsymptot}.
The processes can hardly be distiguished from a perceptual point of view when the right hand-side in \eqref{eq:boubou} is less than $0.4$.

 \begin{figure}[!htbp]
  \centering
 \begin{tabular}{|c|c|c|c|c|c|}
  \hline
	\backslashbox{$\gamma$}{$\sigma_1$} &\textbf{2}&\textbf{8}&\textbf{32}&\textbf{64}&\textbf{128}\\\hline 
	\textbf{0.001}&
  \includegraphics[width=0.15\textwidth]{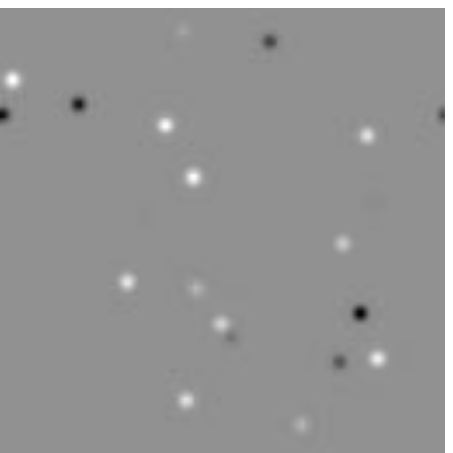}&
  \includegraphics[width=0.15\textwidth]{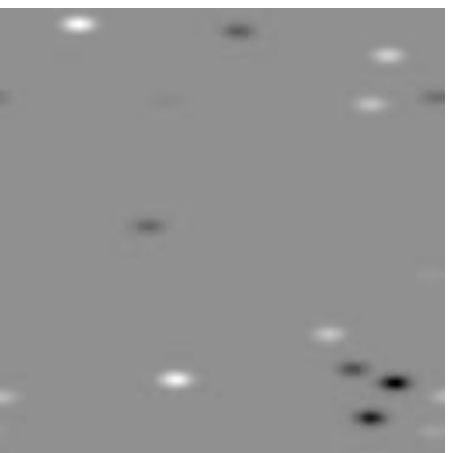}&
  \includegraphics[width=0.15\textwidth]{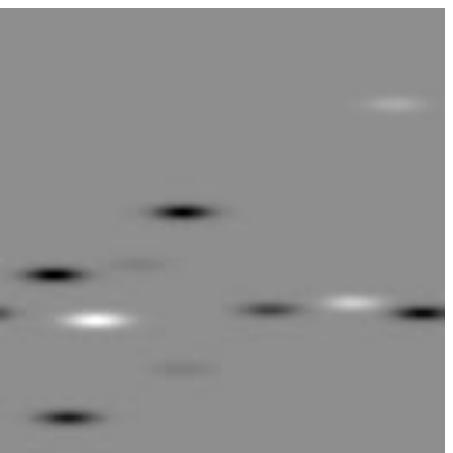}&
  \includegraphics[width=0.15\textwidth]{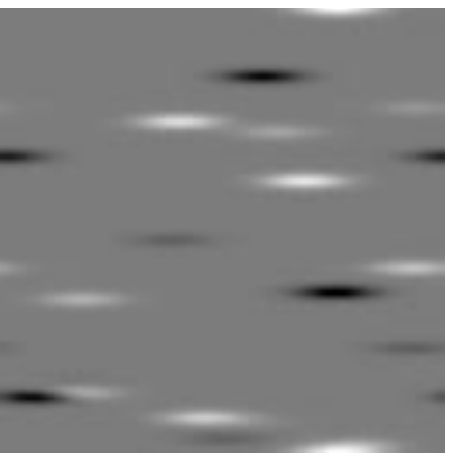}&
  \includegraphics[width=0.15\textwidth]{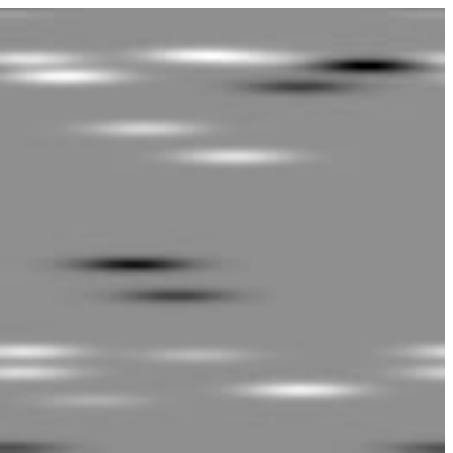}\\
  \hline 
  \textbf{0.01} &
  \includegraphics[width=0.15\textwidth]{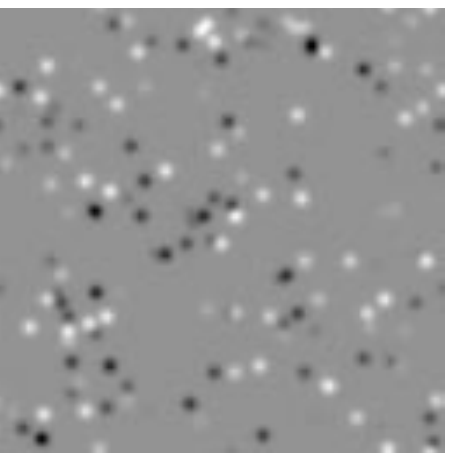}&
  \includegraphics[width=0.15\textwidth]{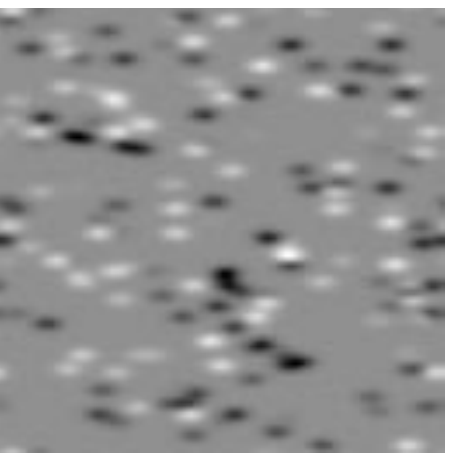}&
  \includegraphics[width=0.15\textwidth]{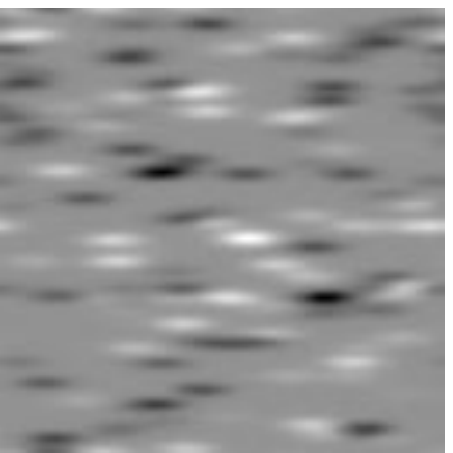}&
  \includegraphics[width=0.15\textwidth]{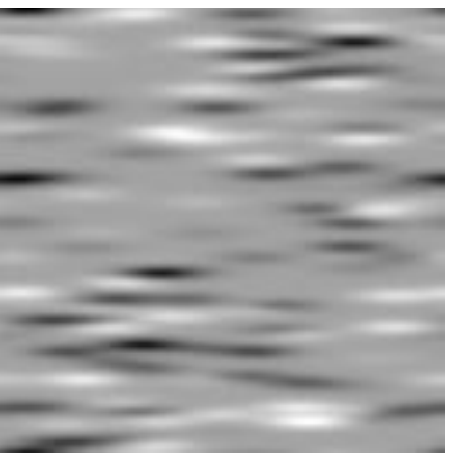}&
  \includegraphics[width=0.15\textwidth]{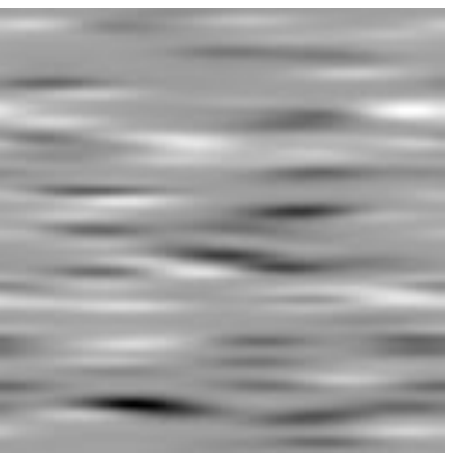}\\
  \hline 
  \textbf{0.05}&
  \includegraphics[width=0.15\textwidth]{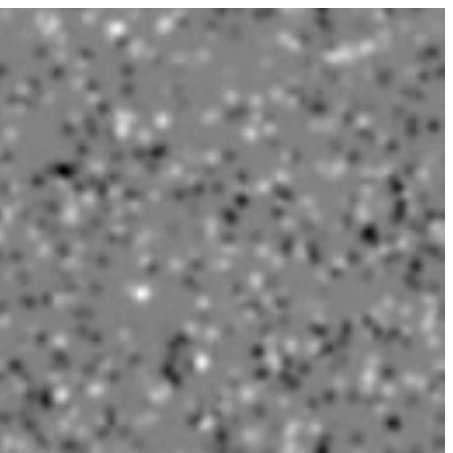}&
  \includegraphics[width=0.15\textwidth]{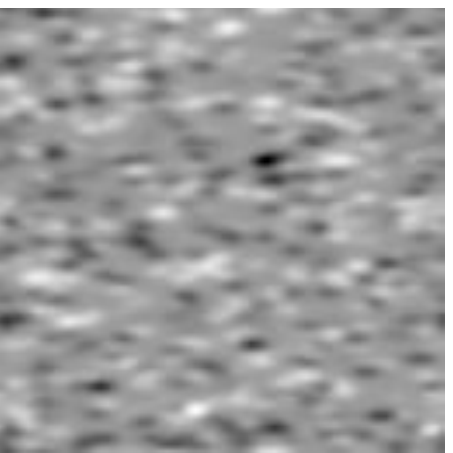}&
  \includegraphics[width=0.15\textwidth]{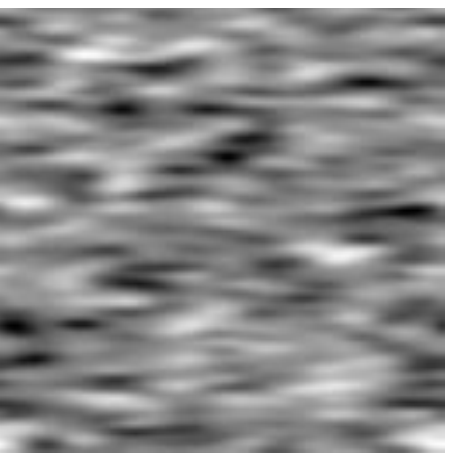}&
  \includegraphics[width=0.15\textwidth]{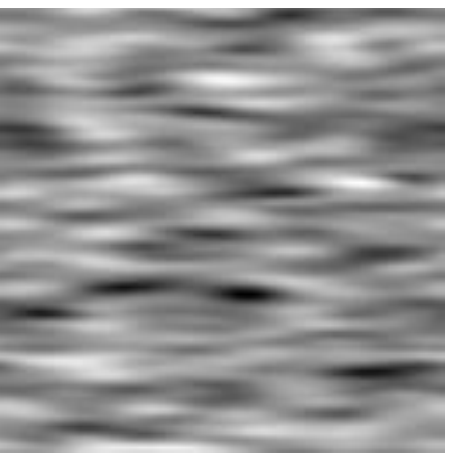}&
  \includegraphics[width=0.15\textwidth]{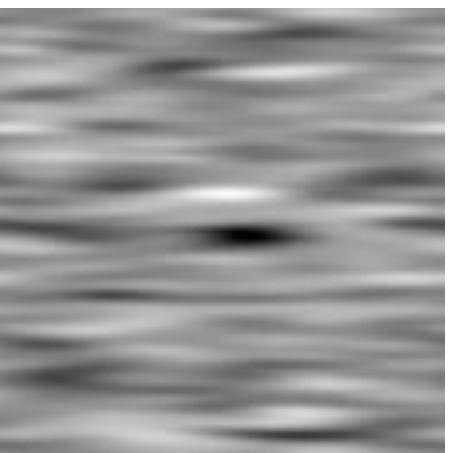}\\
  \hline 
  \textbf{0.1}&
  \includegraphics[width=0.15\textwidth]{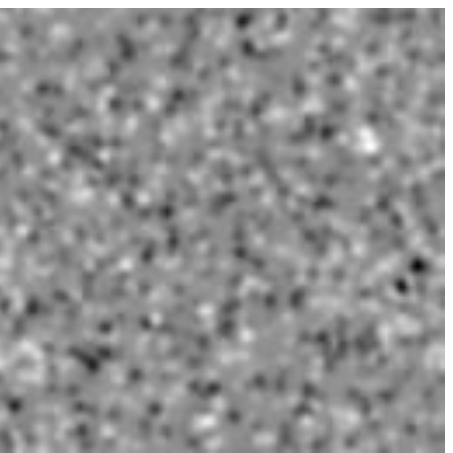}&
  \includegraphics[width=0.15\textwidth]{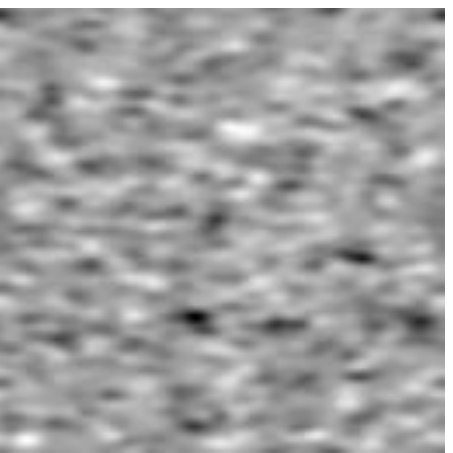}&
  \includegraphics[width=0.15\textwidth]{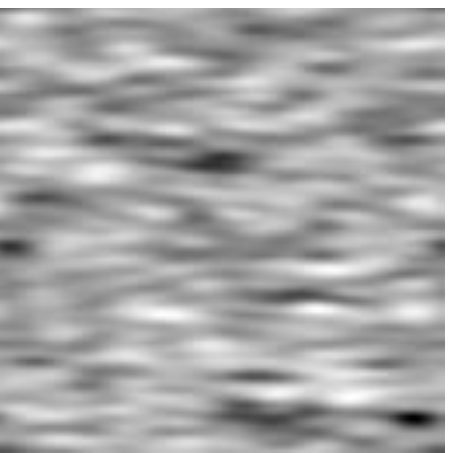}&
  \includegraphics[width=0.15\textwidth]{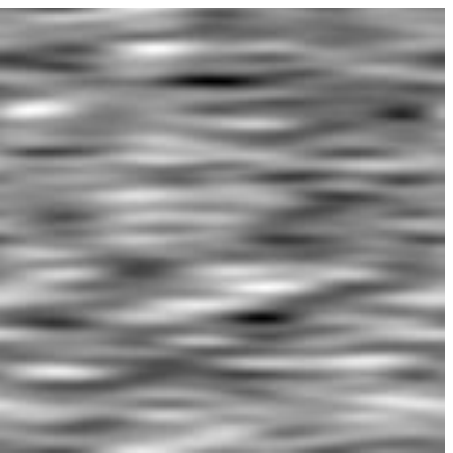}&
  \includegraphics[width=0.15\textwidth]{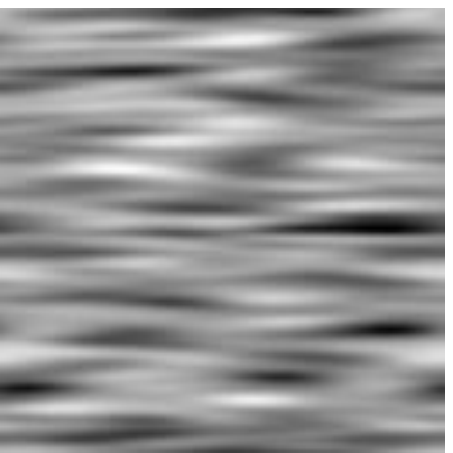}\\
  \hline 
  \textbf{0.5}&
  \includegraphics[width=0.15\textwidth]{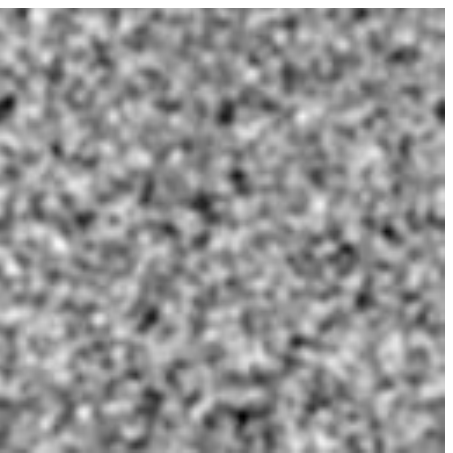}&
  \includegraphics[width=0.15\textwidth]{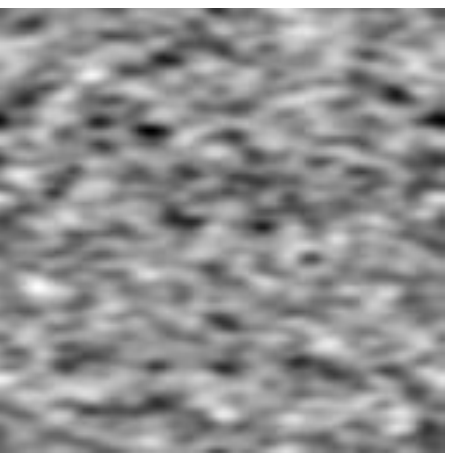}&
  \includegraphics[width=0.15\textwidth]{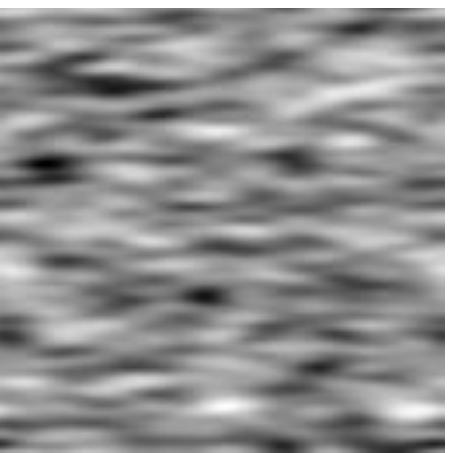}&
  \includegraphics[width=0.15\textwidth]{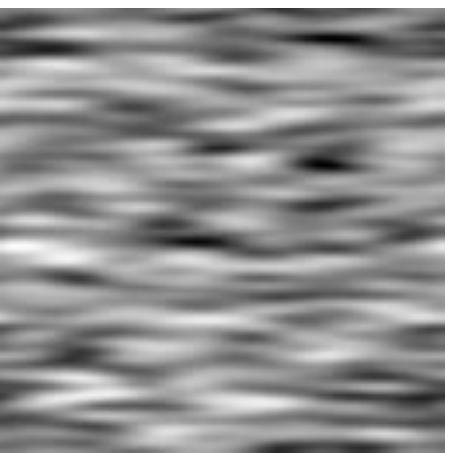}&
  \includegraphics[width=0.15\textwidth]{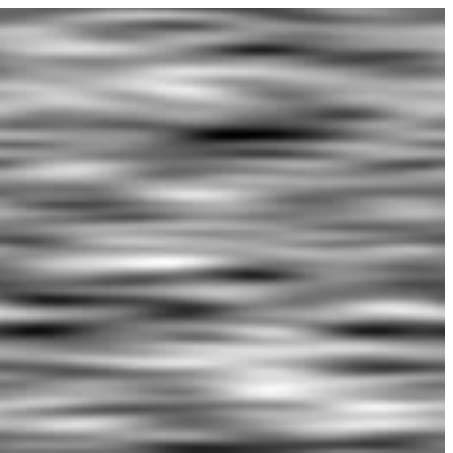}\\
  \hline 
  \textbf{1}&
  \includegraphics[width=0.15\textwidth]{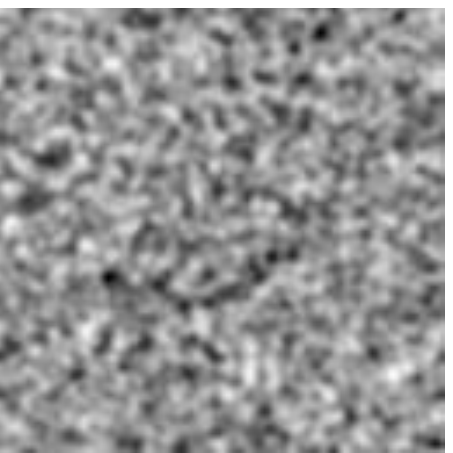}&
  \includegraphics[width=0.15\textwidth]{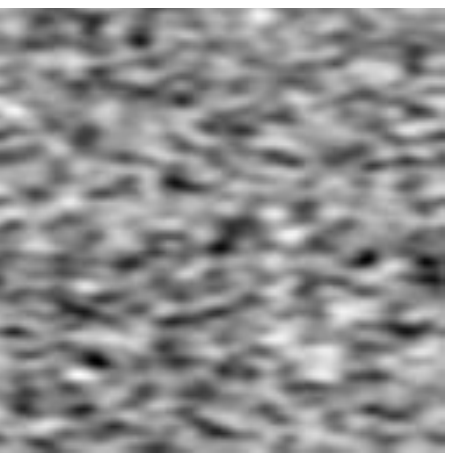}&
  \includegraphics[width=0.15\textwidth]{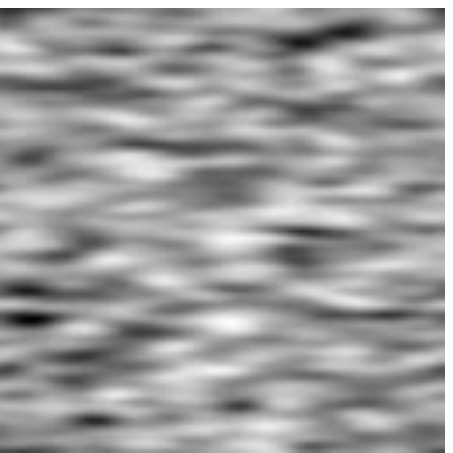}&
  \includegraphics[width=0.15\textwidth]{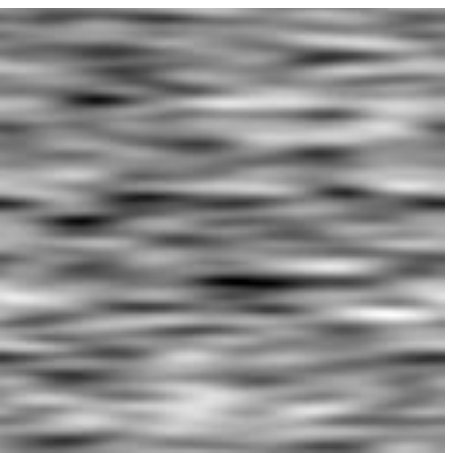}&
  \includegraphics[width=0.15\textwidth]{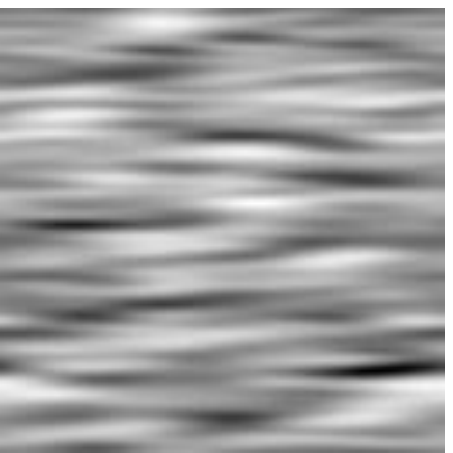}\\
  \hline\hline 
  \rotatebox{90}{Gaussian process}&
  \includegraphics[width=0.15\textwidth]{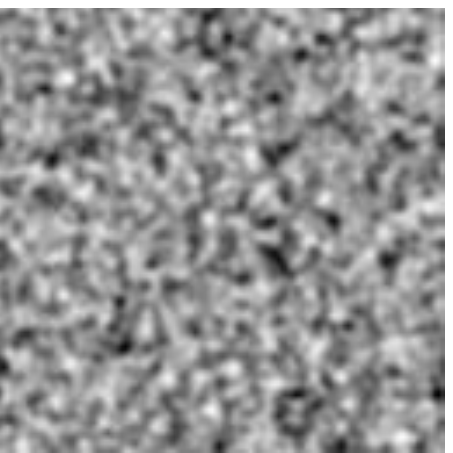}&
  \includegraphics[width=0.15\textwidth]{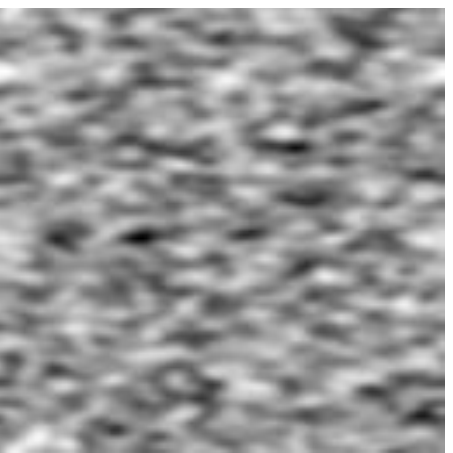}&
  \includegraphics[width=0.15\textwidth]{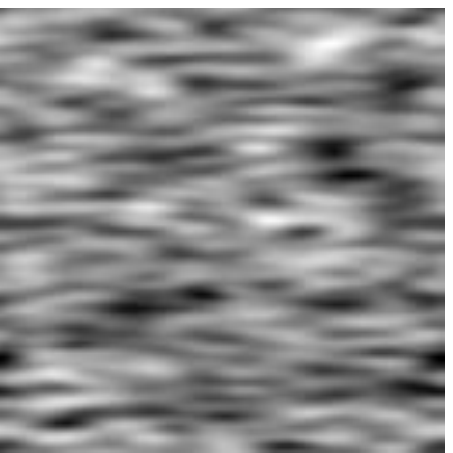}&
  \includegraphics[width=0.15\textwidth]{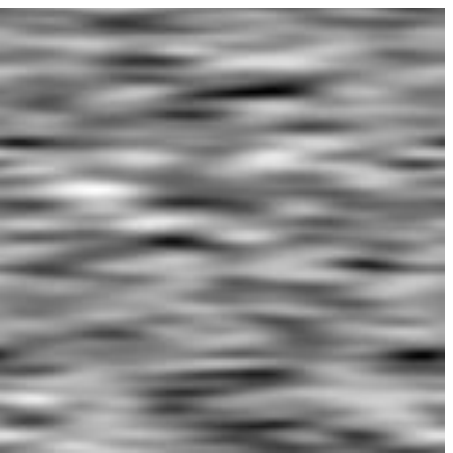}&
  \includegraphics[width=0.15\textwidth]{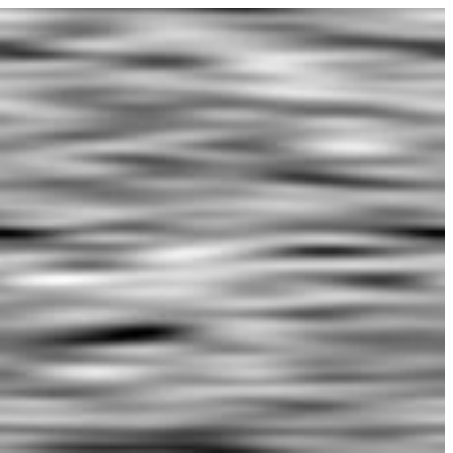}\\
  \hline 
  \end{tabular}
\caption{The first six rows show stationnary processes obtained by convolving an anisotropic Gaussian filter with Bernoulli uniform processes for different values of $\gamma$ and different values of $\sigma_1$. The value of $\sigma_2=2$.
The last row shows a Gaussian process obtained by convolving Gaussian white noise with the same Gaussian filter.}\label{fig:stationnaryprocesses}
\end{figure}

\begin{table}
\centering
\begin{tabular}{|l|c|c|c|c|c|}
\hline
\backslashbox{$\gamma$}{$\sigma_1$} &\textbf{2}&\textbf{8}&\textbf{32}&\textbf{64}&\textbf{128}\\\hline
\textbf{0.001}&1.00&1.00&1.00&1.00&1.00\\\hline
\textbf{0.01}&1.00&1.00&0.98&0.82&0.69\\\hline
\textbf{0.05}&0.88&0.62&0.44&0.37&0.31\\\hline
\textbf{0.1}&0.62&0.44&0.31&0.26&0.22\\\hline
\textbf{0.5}&0.28&0.20&0.14&0.12&0.10\\\hline
\textbf{1}&0.20&0.14&0.10&0.08&0.07\\\hline
\end{tabular} 
\caption{Values of bound \eqref{eq:boubou} with respect to $\gamma$ and $\sigma_1$. \label{tab:practicalbounds}}
\end{table}
\end{example}

\subsection{Numerical validation}

In the previous paragraphs we showed that in many situations, stationary random processes $B$ of kind 
\begin{equation}
B=\Lambda\star \psi
\end{equation}
where $\Lambda$ denotes a white noise process can be assimilated to a coloured Gaussian noise. 
A Bayesian approach thus indicates that problem \eqref{eq:OptExact} can be replaced by the following approximation:
\begin{equation}
\label{eq:pb}
 \textrm{Find \ } \llambda(\alpha)= \argmin_{\llambda\in \R^{n\times m}} \|\nabla (u_0-\sum_{i=1}^m\psi_i \star\lambda_i) \|_{\boldsymbol 1} + \sum_{i=1}^m\frac{\alpha_i}{2} \|\lambda_i\|_2^2
\end{equation}
for a particular choice of $\alpha_i$ discussed later. This new problem has an attractive feature compared to \eqref{eq:OptExact}: it is strongly convex in $\llambda$, which implies uniqueness of the minimizer and the existence of efficient minimization algorithms. Unfortunately, it is well known that Bayesian approaches can substantially deviate from the prior models that underly the MAP estimator \cite{nikolova2007model}. The aim of this paragraph is to validate the proposed approximation experimentally. We consider a problem of stationary noise removal.

We generate stationary processes from the models described in Example \ref{ex:illustrate} and Figure \ref{fig:stationnaryprocesses} for different values of $\gamma$. Bernoulli-uniform processes are generated from functions $\phi_i$ that are nonconvex ($l^0$-norms) and in this case, problem \eqref{eq:OptExact} is a hard combinatorial problem. 
We denoise the image using either a standard $l^1$-norm relaxation:
\begin{equation}
\label{eq:pbl1}
 \textrm{Find \ } \lambda \in \Argmin_{\lambda\in \R^n} \|\nabla (u_0- \psi \star\lambda) \|_{\boldsymbol 1} + \alpha \|\lambda\|_1,
\end{equation}
or the $l^2$-norm approximation suggested by the previous theorems: 
\begin{equation}
\label{eq:pbl2}
 \textrm{Find \ } \lambda \in \Argmin_{\lambda\in \R^n} \|\nabla (u_0- \psi \star\lambda) \|_{\boldsymbol 1} + \frac{\alpha}{2} \|\lambda\|_2^2.
\end{equation}
The optimal parameter $\alpha$ is estimated by dichotomy in order to maximize the SNR of the denoised image. 
As can be seen in Figure \ref{fig:denoisingstationary} the $l^1$-norm approximation provides better results for very sparse Bernoulli processes and the $l^2$ approximation provides similar or better results when the Bernoulli process gets denser. This  confirms the results presented in section \ref{sec:distancestat}. 

\begin{figure}[!htbp]
  \centering
 \begin{tabular}{|c|c|c|c|}
  \hline
  & 6.02dB & 27.09dB & 16.87dB \\
  \textbf{0.001}&
  \includegraphics[width=0.15\textwidth]{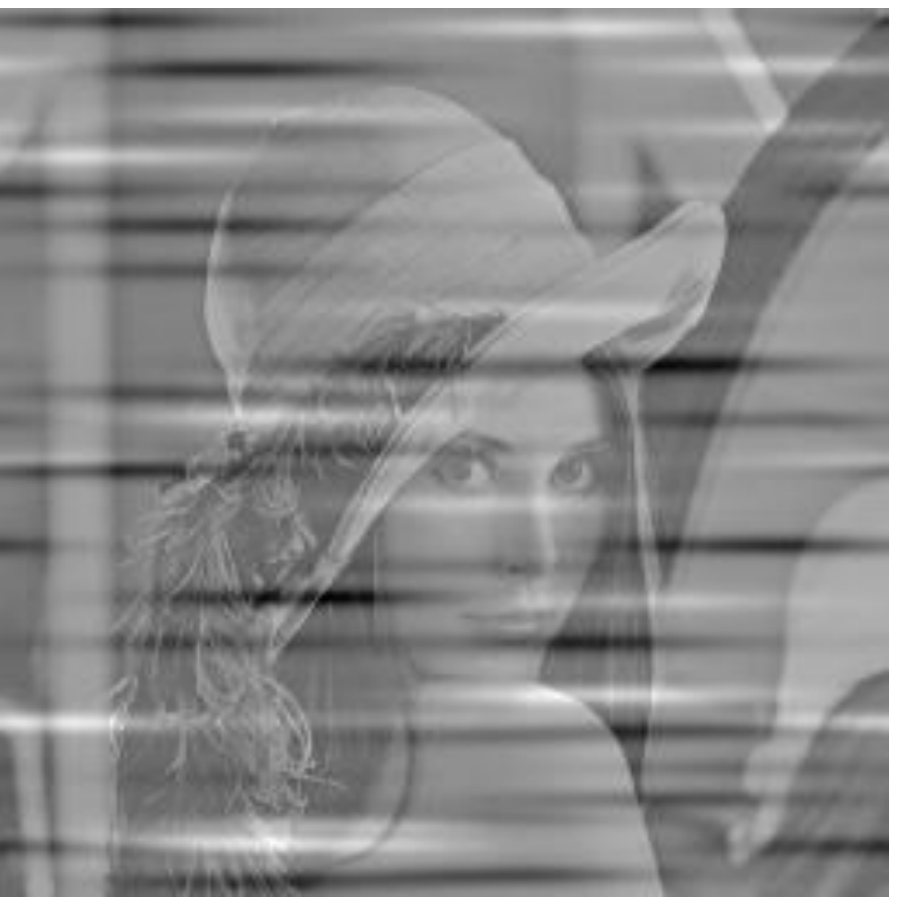}&
  \includegraphics[width=0.15\textwidth]{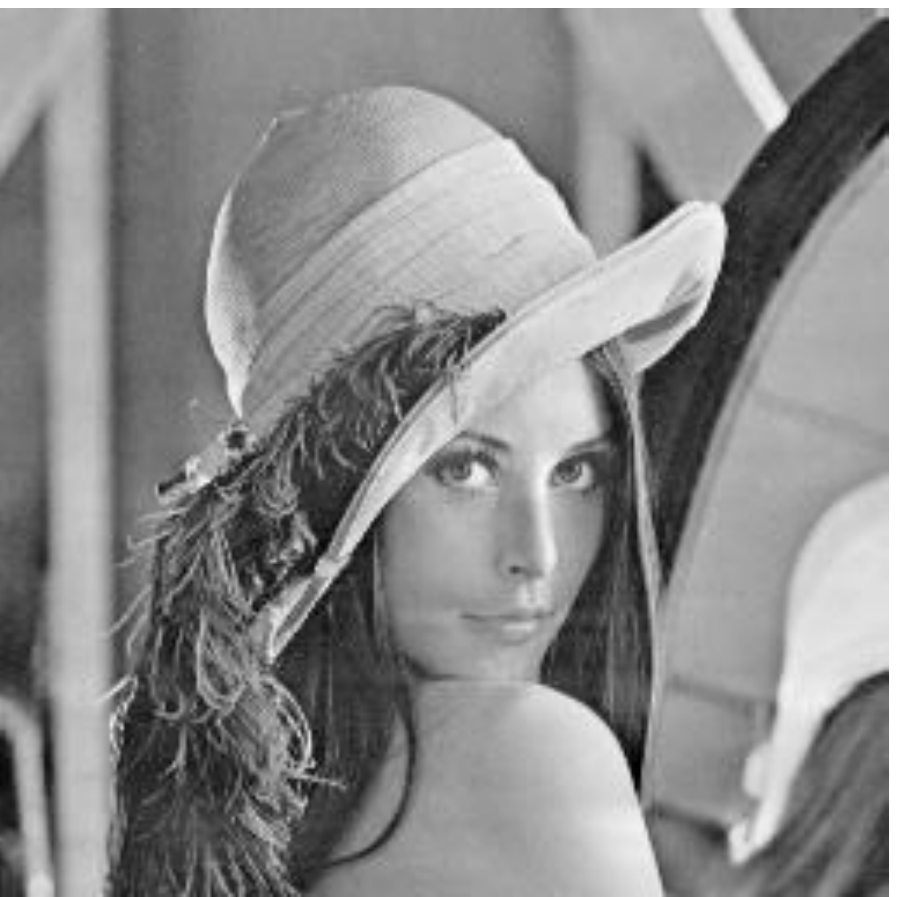}&
  \includegraphics[width=0.15\textwidth]{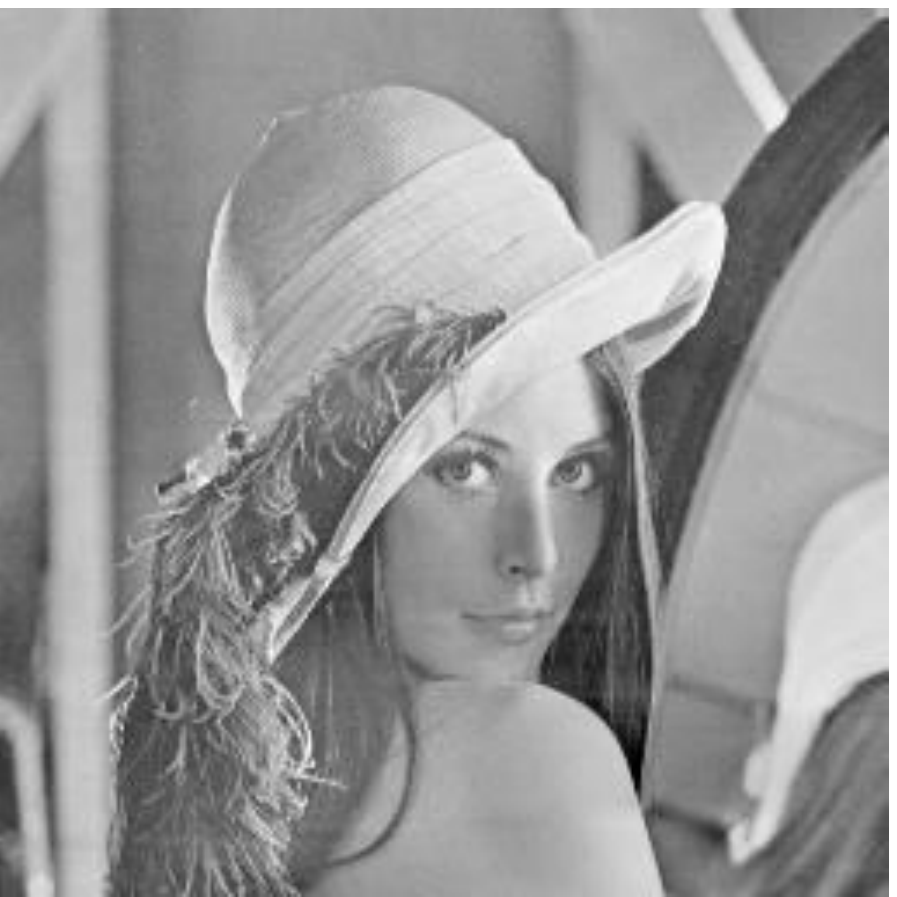}\\
  \hline
  & 6.02dB & 16.41dB & 15.87dB \\
  \textbf{0.01} &
  \includegraphics[width=0.15\textwidth]{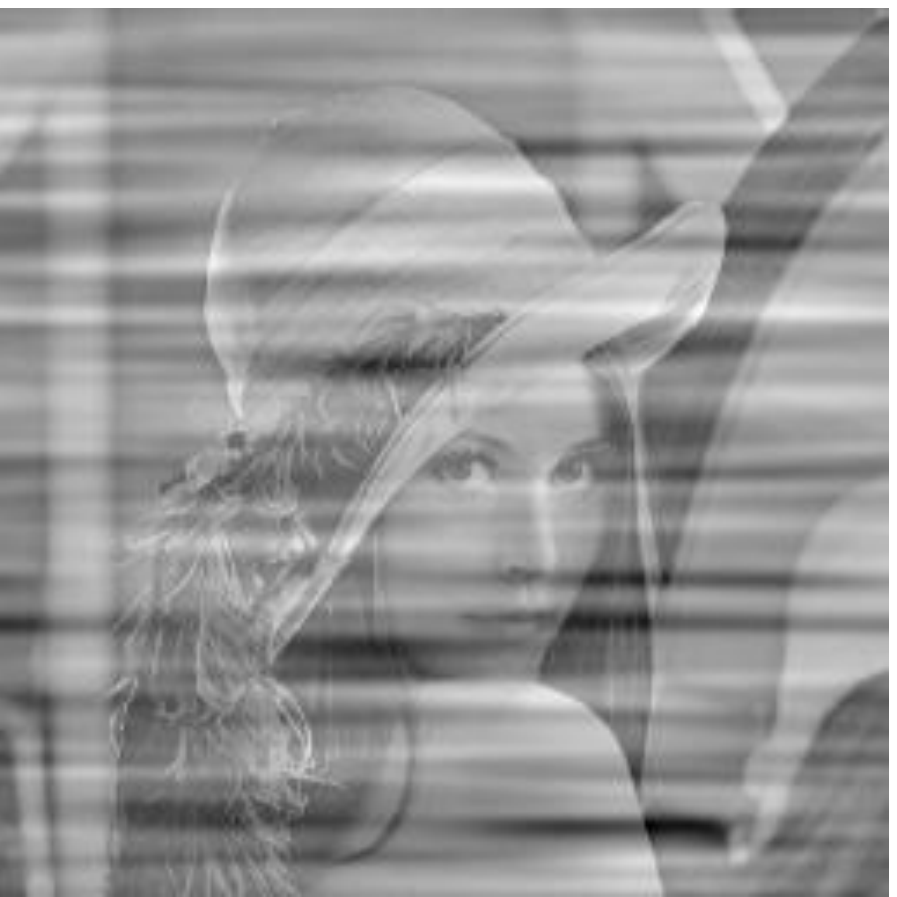}&
  \includegraphics[width=0.15\textwidth]{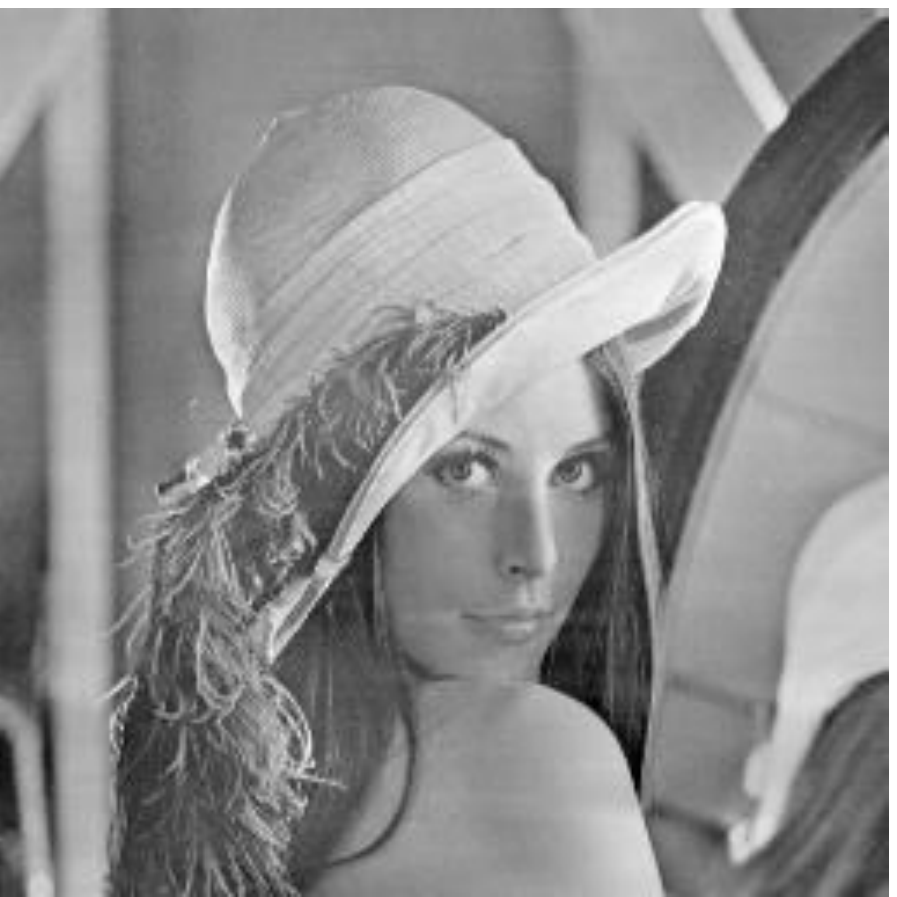}&
  \includegraphics[width=0.15\textwidth]{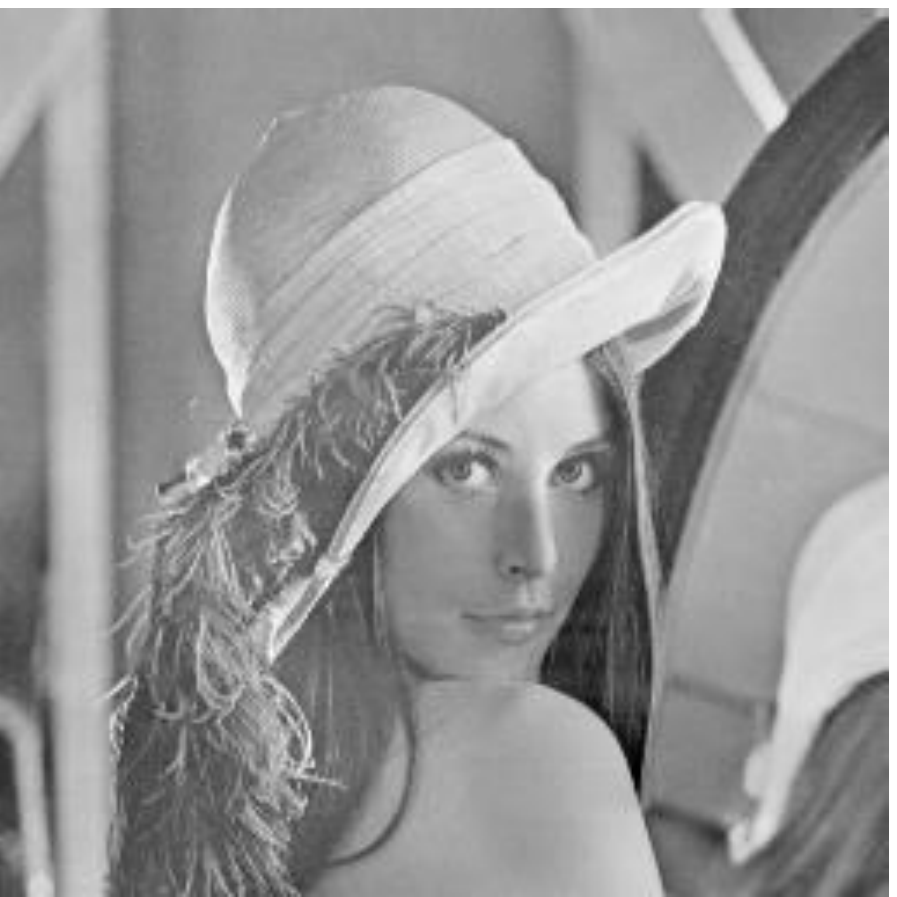}\\
  \hline
  & 6.02dB & 17.65dB & 17.53dB \\
  \textbf{0.05}&
  \includegraphics[width=0.15\textwidth]{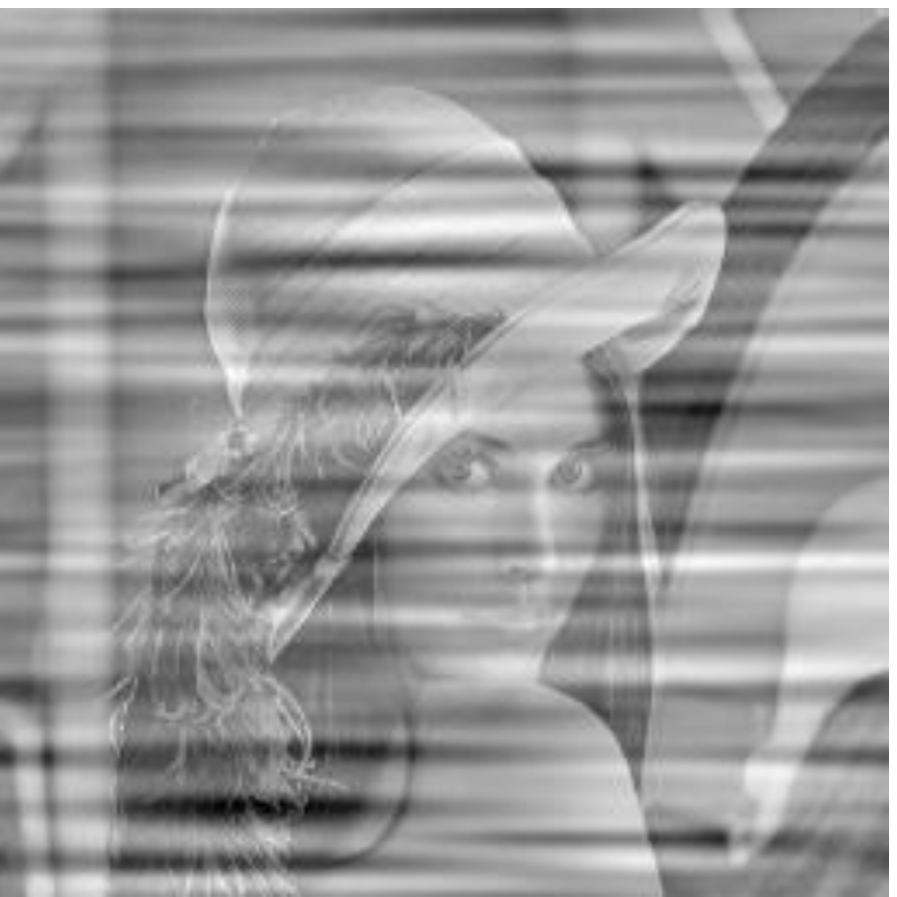}&
  \includegraphics[width=0.15\textwidth]{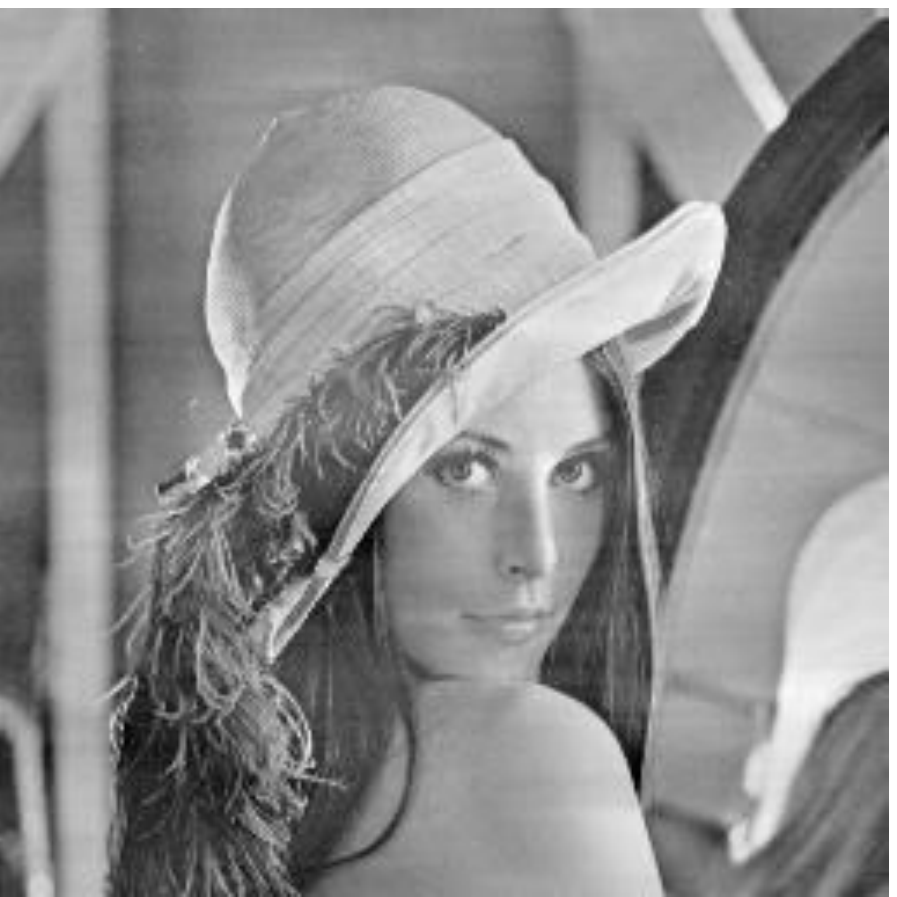}&
  \includegraphics[width=0.15\textwidth]{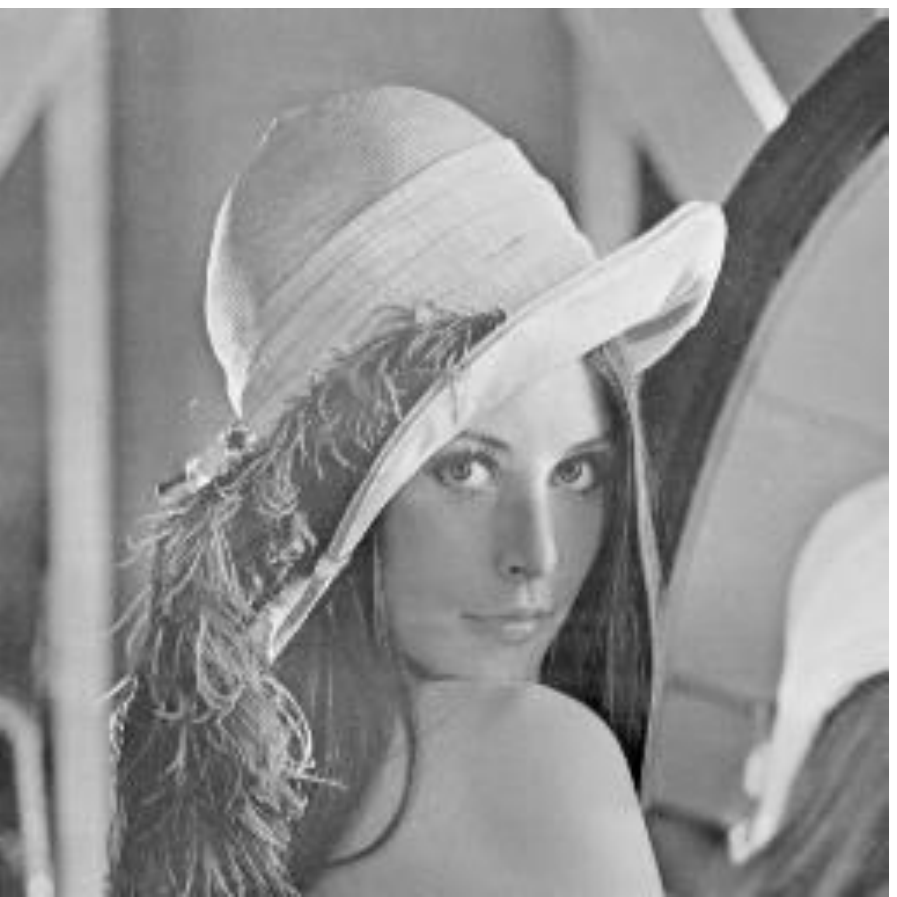}\\
  \hline
  & 6.02dB & 18.61dB & 18.24dB \\
  \textbf{0.1}&
  \includegraphics[width=0.15\textwidth]{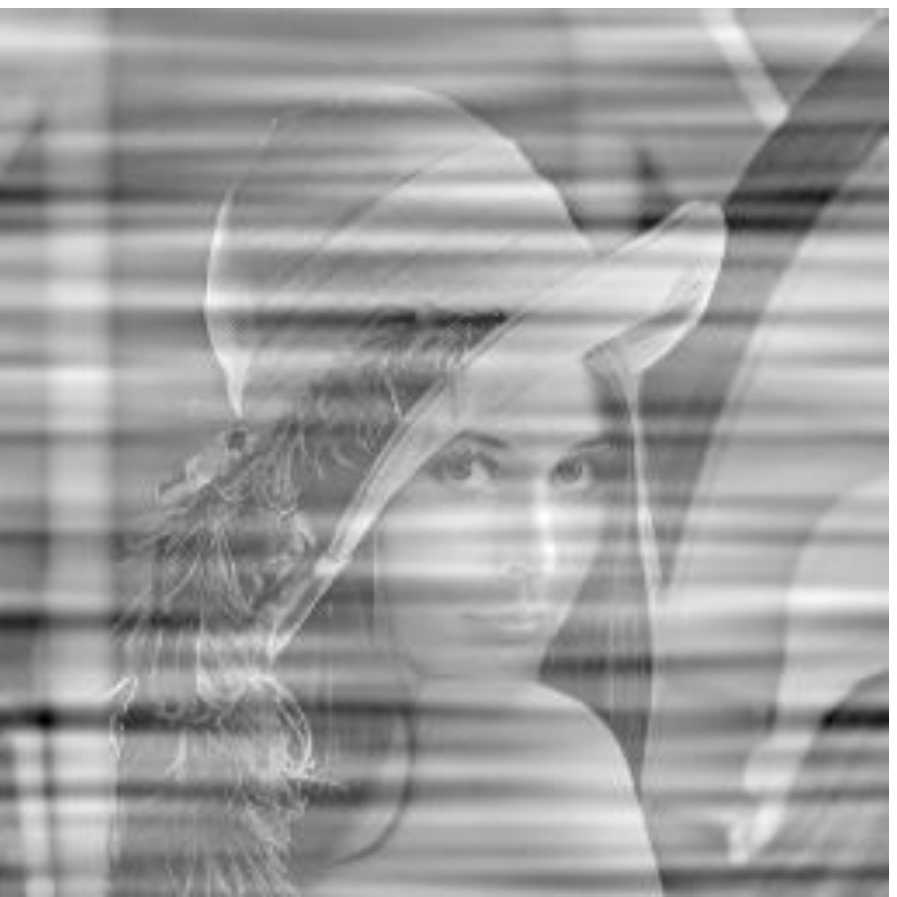}&
  \includegraphics[width=0.15\textwidth]{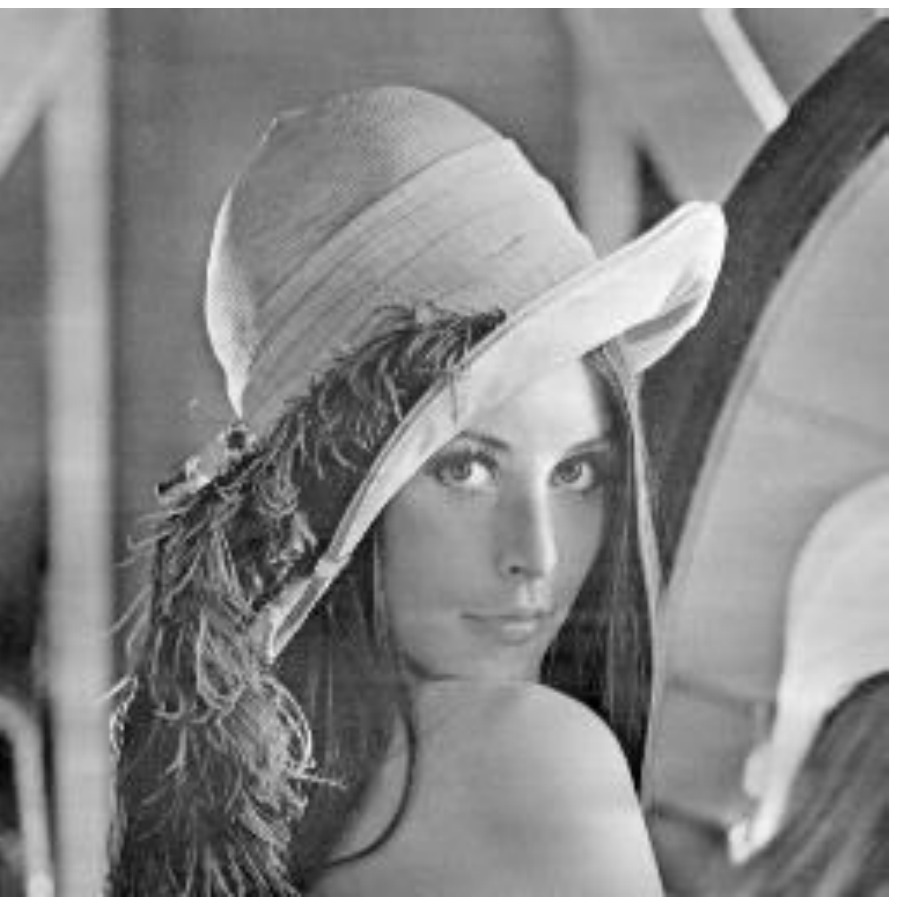}&
  \includegraphics[width=0.15\textwidth]{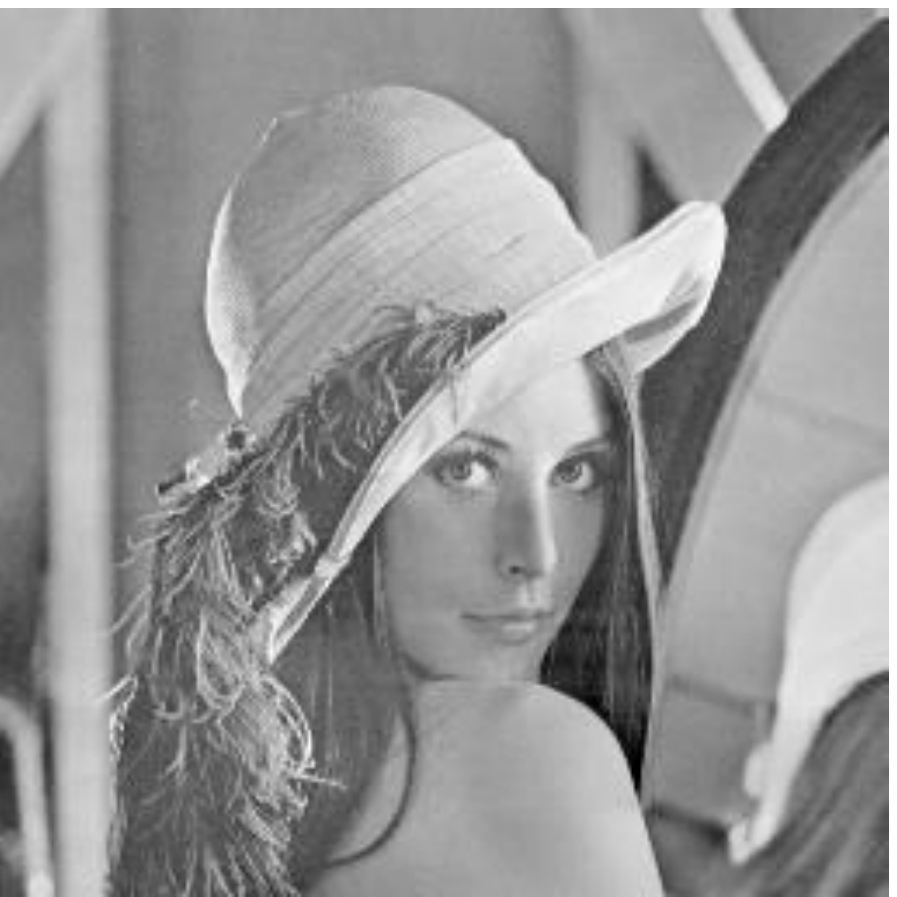}\\
  \hline
  & 6.02dB & 14.29dB & 15.41dB \\
  \textbf{0.5}&
  \includegraphics[width=0.15\textwidth]{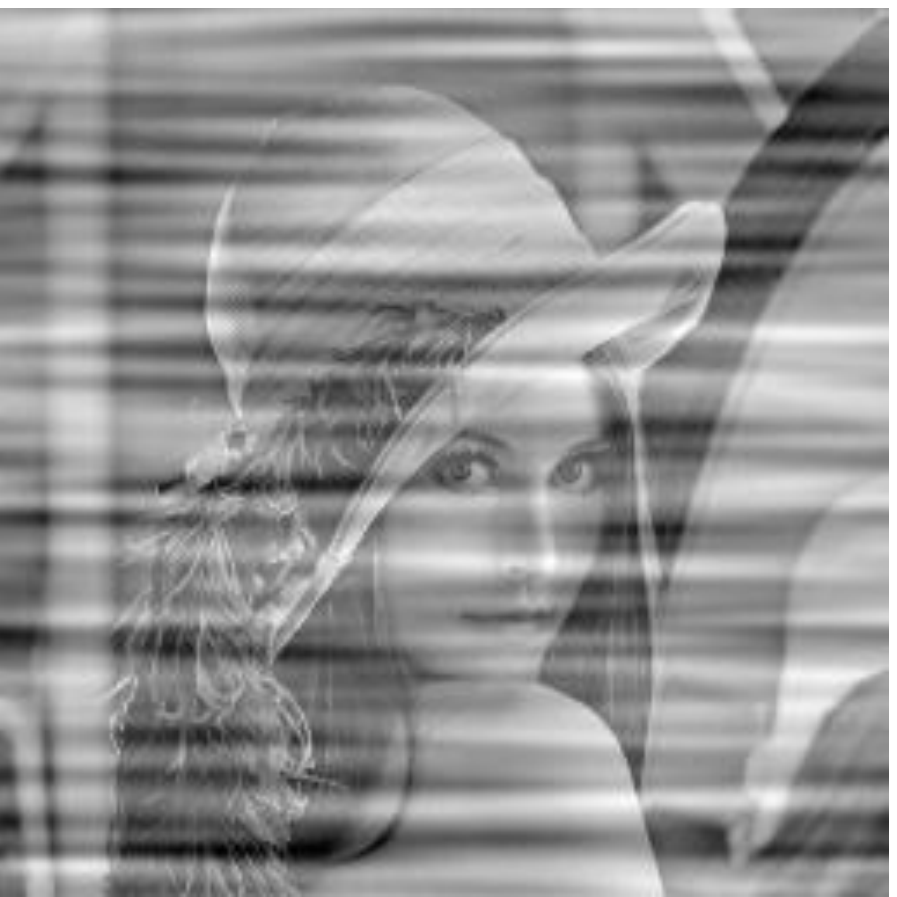}&
  \includegraphics[width=0.15\textwidth]{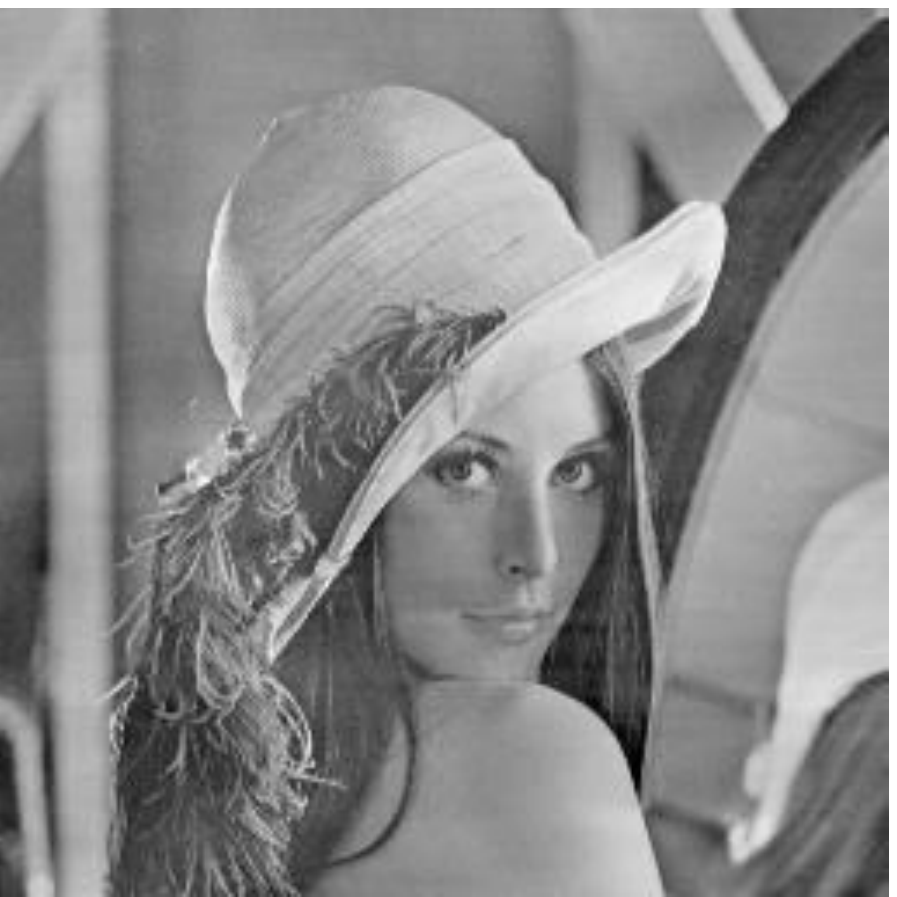}&
  \includegraphics[width=0.15\textwidth]{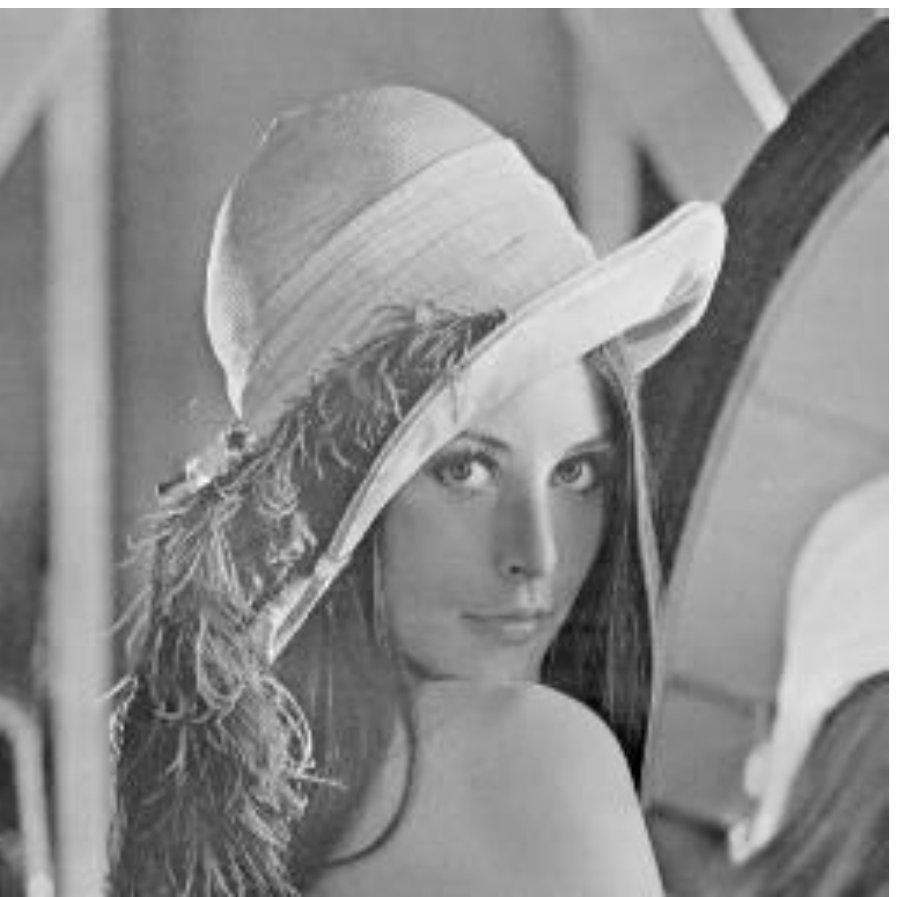}\\
  \hline
  &6.02dB & 17.67dB & 18.15dB \\
  \textbf{1}&
  \includegraphics[width=0.15\textwidth]{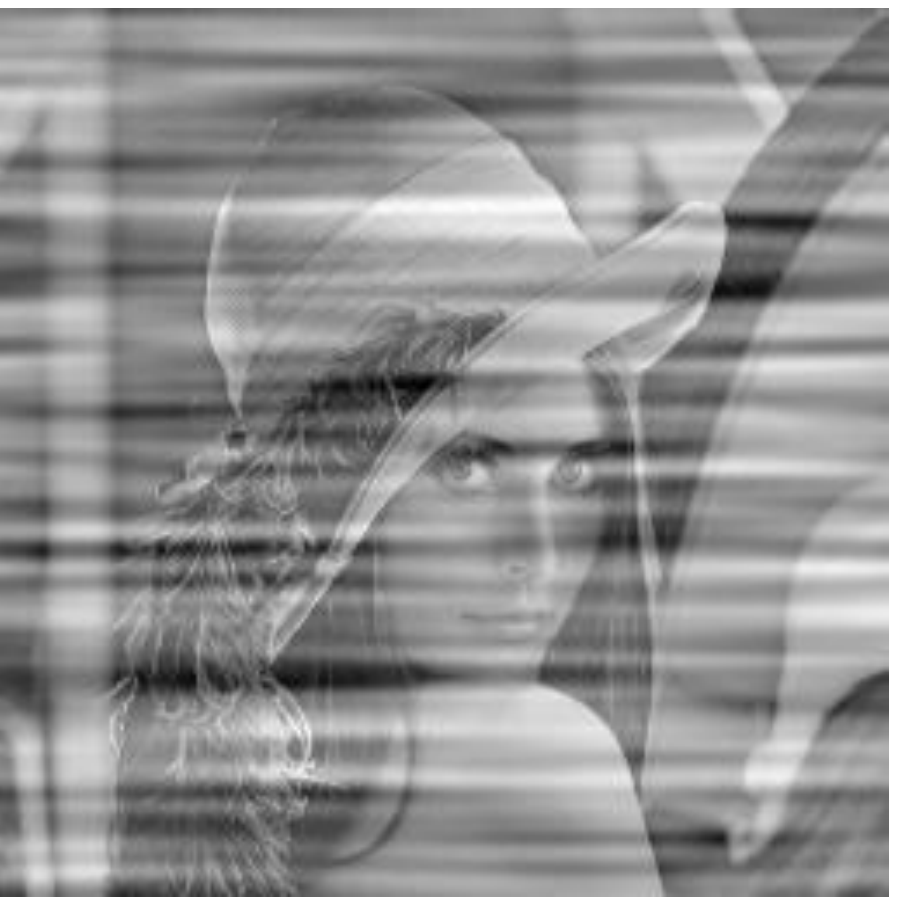}&
  \includegraphics[width=0.15\textwidth]{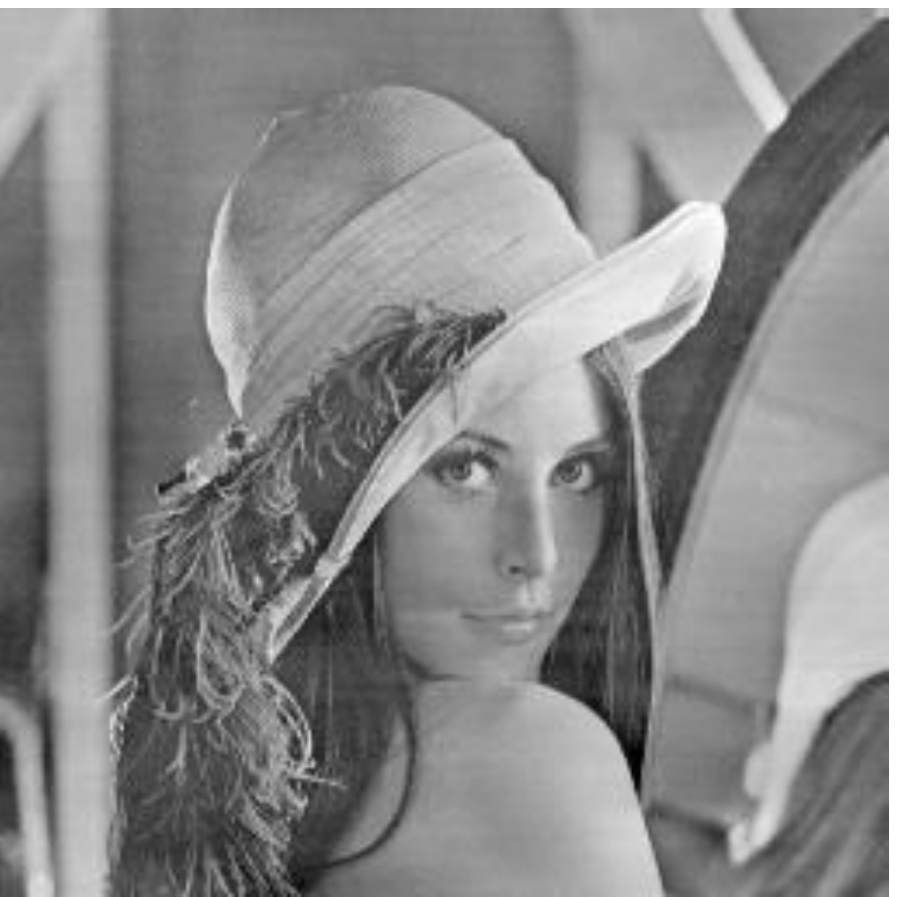}&
  \includegraphics[width=0.15\textwidth]{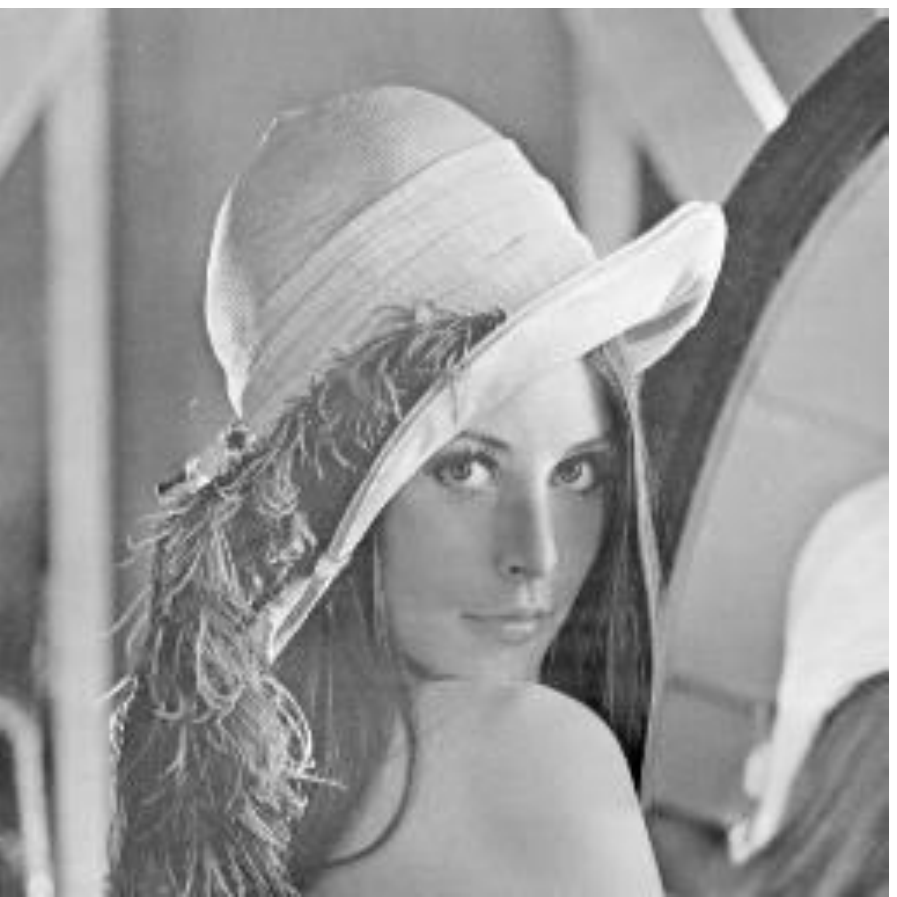}\\
  \hline
  \end{tabular}
\caption{Denoising results with the resolution of an $TV-l^1$ or $TV-l^2$ problem. From top to bottom: increasing value of $\gamma$. 
Left: noisy images. Center: denoised using an $l^1$ prior. Right: denoised using an $l^2$ prior.}\label{fig:denoisingstationary}
\end{figure}

\section{Primal-dual estimation in the $l^2$-case}
\label{sec:parameterselection}

Motivated by the results presented in the previous section, we focus on the $l^1-l^2$ problem \eqref{eq:pb}.
Since the mapping $\llambda \mapsto \sum_{i=1}^m\frac{\alpha_i}{2} \|\lambda_i\|_2^2$ is stricly convex, this problem admits a unique minimizer.

In this section, we aim at proposing an automatic estimation of an adequate value of $\alpha=(\alpha_1,\cdots,\alpha_m)$. 
A natural choice for the regularization parameter $\alpha$  (also known as Morozov's discrepancy principle \cite{morozov1966solution}) is to ensure that
\begin{equation} 
\label{eq:goal}
\| \psi_i \star \lambda_i(\alpha) \|= \|b_i\|
\end{equation}
for a given norm $\|\cdot\|$. In practice, $\|b_i\|$ is usually unknown, but the user usually has an idea of the noise level and can set $\|b_i\|\simeq \eta_i\|u_0\|$ where $\eta_i\in]0,1[$ denotes a noise fraction. 

In the rest of this section, we provide estimates for $\|b(\alpha)\|_2$, in the case $m=1$ in paragraph \ref{ssec:m1} and in the general case in paragraph \ref{ssec:mfilters}. When the parameters $(\alpha_i)_{i\in \{1,\hdots,m\}}$ are given, the filter with $m$ filters is equivalent to a related problem with $1$ filter. The link is detailed in paragraph \ref{ssec:m1m}. Finally paragraph \ref{ssec:algorithm} shows how the proposed results can be used in a practical algorithm. The proofs are provided in the appendix.

\subsection{Results for the case $m=1$ filter}
\label{ssec:m1}

We first state our results in the particular case of $m=1$ filter in order to clarify the exposition. 
We obtain several bounds on the $l^2$-norm of the noise $\|b(\alpha)\|_2$ that are valid for different values of $\alpha$. 
The following theorem stated for $m=1$ filter is a particular case of the results presented in paragraph \ref{ssec:mfilters}.
\begin{theorem}
\label{thm:central1}
Let $\alpha>0$ and denote $h_{k}=\psi\star \tilde \psi \star \tilde d_k$ for $k\in \{1,\hdots,d\}$.
Then 
\begin{equation}
\label{eq:upbound1}
\|b(\alpha)\|_2\leq \frac{\sqrt{n}}{\alpha}\max_{k\in \{1,\hdots,d\}}\| \hat h_k \|_\infty.
\end{equation}
If we further assume that $\hat \psi$ does not vanish there exists a value $\bar \alpha>0$ such that $\forall \alpha\in (0,\bar \alpha], \ b(\alpha) = u_0-u_0^{mean}$. 
\end{theorem}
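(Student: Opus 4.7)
The plan is to base both claims on the primal--dual optimality conditions for the strongly convex problem \eqref{eq:pbl2}. Applying Fermat's rule together with the subdifferential chain rule to $\lambda\mapsto\|\nabla(u_0-\Psi\lambda)\|_{\boldsymbol 1}$, the unique minimizer $\lambda(\alpha)$ must satisfy
\begin{equation*}
\alpha\lambda(\alpha)=\Psi^T\nabla^T q=\tilde\psi\star\sum_{k=1}^d \tilde d_k\star q_k
\end{equation*}
for some $q\in\R^{n\times d}$ with $\|q\|_{\boldsymbol\infty}\le 1$ (namely, $q$ is a subgradient of $\|\cdot\|_{\boldsymbol 1}$ at $\nabla(u_0-\psi\star\lambda(\alpha))$). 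Convolving with $\psi$ and using $h_k=\psi\star\tilde\psi\star\tilde d_k$ gives the key identity $\alpha b(\alpha)=\sum_{k=1}^d h_k\star q_k$.

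For \eqref{eq:upbound1}, I would Fourier-transform this identity, giving $\alpha\hat b(\alpha)(\xi)=\sum_k \hat h_k(\xi)\hat q_k(\xi)$, then apply Cauchy--Schwarz in $\C^d$ pointwise in $\xi$. The spatial constraint $\|q\|_{\boldsymbol\infty}\le 1$ translates, via $\sum_k q_k(\xx)^2\le 1$ integrated over $\Omega$, into $\sum_k\|q_k\|_2^2\le n$, and Parseval turns this into $\sum_k\|\hat q_k\|_2^2\le n^2$. Combining this with a pointwise bound of $\sum_k|\hat h_k(\xi)|^2$ by $(\max_k\|\hat h_k\|_\infty)^2$ and the normalisation $\|b\|_2=\|\hat b\|_2/\sqrt n$ produces the desired $\sqrt n/\alpha$ scaling.

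For the second claim, the heuristic is that as $\alpha\to 0$ the regularisation becomes negligible and the total-variation term is driven to zero at the minimizer, forcing $u_0-\psi\star\lambda$ to be constant (hence equal to $u_0^{mean}$). Since $\hat\psi$ does not vanish, I would define $\lambda_0$ in the Fourier domain by $\hat\lambda_0(\xi)=\widehat{u_0-u_0^{mean}}(\xi)/\hat\psi(\xi)$ for $\xi\neq 0$ and $\hat\lambda_0(0)=0$ (the latter is consistent because $u_0-u_0^{mean}$ has zero mean). Then $\psi\star\lambda_0=u_0-u_0^{mean}$ and the TV term vanishes at $\lambda_0$. To certify that $\lambda_0$ is the minimizer, I need a $q$ with $\|q\|_{\boldsymbol\infty}\le 1$ solving $\Psi^T\nabla^T q=\alpha\lambda_0$. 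In the Fourier domain, at $\xi=0$ both sides vanish ($\hat d_k(0)=0$ and $\hat\lambda_0(0)=0$), and for $\xi\neq 0$ I would take the minimum-norm solution
\begin{equation*}
\hat q_k(\xi)=\frac{\alpha\,\hat d_k(\xi)\,\hat\lambda_0(\xi)}{\overline{\hat\psi(\xi)}\,\sum_j|\hat d_j(\xi)|^2},
\end{equation*}
which is well-defined thanks to $\hat\psi\neq 0$ and $\sum_j|\hat d_j(\xi)|^2>0$ for $\xi\neq 0$. This $q$ is linear in $\alpha$, hence so is $\|q\|_{\boldsymbol\infty}$; choosing $\bar\alpha$ to be the largest $\alpha$ for which $\|q\|_{\boldsymbol\infty}\le 1$, uniqueness of the minimizer yields $\lambda(\alpha)=\lambda_0$ and $b(\alpha)=u_0-u_0^{mean}$ throughout $(0,\bar\alpha]$.

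The main obstacle I anticipate is obtaining the stated constant in \eqref{eq:upbound1} cleanly: a naive Cauchy--Schwarz introduces a $\sqrt d$ factor, and avoiding it likely requires either replacing $\max_k\|\hat h_k\|_\infty$ by the sharper spectral quantity $\max_\xi\sqrt{\sum_k|\hat h_k(\xi)|^2}$ or a more careful choice of $q$ at the subgradient step. The second part is conceptually simpler once one observes that the constructed subgradient scales linearly in $\alpha$, reducing the question to a mere scaling argument.
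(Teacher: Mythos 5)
Your proposal is essentially correct and, for the upper bound, follows the same route as the paper: the paper proves \eqref{eq:upbound1} by combining the primal--dual optimality relation $\lambda(\alpha)=-\frac{1}{\alpha}\Psi^T\nabla^T\qq(\alpha)$ with $\|\qq(\alpha)\|_{\iinfty}\le 1$ (its Lemma on the operator-norm estimate) and an explicit computation of $\|\Psi\Psi^T\nabla^T\|_{\iinfty\to 2}$ in the Fourier domain, exactly your chain $\|\qq\|_{\iinfty}\le 1\Rightarrow\sum_k\|q_k\|_2^2\le n\Rightarrow$ Parseval $\Rightarrow$ pointwise Cauchy--Schwarz. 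Your worry about the $\sqrt d$ factor is well founded and is in fact present in the paper itself: the paper's upper-bound chain only establishes $\sqrt n\,\max_\xi\bigl(\sum_k|\hat h_k(\xi)|^2\bigr)^{1/2}$, while its lower bound (testing against single Fourier modes placed in one component of $\qq$) only gives $\sqrt n\,\max_k\|\hat h_k\|_\infty$; the claimed identification of these two quantities is not justified in general and they can differ by up to $\sqrt d$. So the bound you actually obtain, with the spectral constant $\max_\xi\sqrt{\sum_k|\hat h_k(\xi)|^2}$, is precisely what the argument delivers, and it is the correct reading of $\|\hat\hh\|_\infty$ as an isotropic norm. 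For the second claim your route is genuinely different and arguably cleaner in the $m=1$ case: the paper characterizes $\llambda^0(\alpha)$ as a minimal-norm solution of $\Psi\llambda=u_0-u_0^{mean}$ and then shows $0\in\partial F_\alpha(\llambda^0)$ by a non-constructive interior argument ($\nabla^TQ$ contains a ball $B(0,\gamma)\cap C^\perp$ for some unspecified $\gamma>0$), whereas you exhibit an explicit dual certificate $\hat q_k(\xi)=\alpha\hat d_k(\xi)\hat\lambda_0(\xi)/(\overline{\hat\psi(\xi)}\sum_j|\hat d_j(\xi)|^2)$, which is admissible because the subdifferential of $\|\cdot\|_{\boldsymbol 1}$ at $0$ is the whole dual unit ball and $\sum_j|\hat d_j(\xi)|^2>0$ for $\xi\neq 0$. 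This verifies the same optimality condition but yields an explicit, computable $\bar\alpha=1/\|q^{(1)}\|_{\iinfty}$ (with $q^{(1)}$ the certificate at $\alpha=1$), which the paper's argument does not provide; the only points to make explicit are that $q$ is real (Hermitian symmetry of the Fourier data) and that the degenerate case of constant $u_0$ is trivial.
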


This theorem states that the norm of $b$ is bounded by a decaying function of $\alpha$. Moreover $\lim_{\alpha\rightarrow 0^+} \|b(\alpha)\|_2=\|u_0-u_0^{mean}\|_2$, and for sufficiently small values of $\alpha$ the solution is independent of $\alpha$ and known in closed form. 
Note that $\alpha \mapsto \|b(\alpha)\|_2$ is not necessarily monotonically decreasing. The quantity $\|u_0-u_0^{mean}\|_2$ which is an upper bound in a neighborhood of $0$ is not necessarily an upper bound for all $\alpha >0$. In our numerical tests, we never encountered a situation where $\|b(\alpha)\|_2>\|u_0-u_0^{mean}\|_2$. In the following, we make the abuse to refer to $\displaystyle \min(\frac{\sqrt{n}}{\alpha}\max_{k\in \{1,\hdots,d\}}\| \hat h_k \|_\infty,\|u_0-u_0^{mean}\|_2)$ as an ``upper bound''.
As will be observed in the numerical experiments in section \ref{sec:numericalXP}, the bound 
$\displaystyle\|b(\alpha)\|_2\leq \frac{\sqrt{n}}{\alpha} \max_{k\in \{1,\hdots,d\}} \| \hat h_k \|_\infty$ provided in Theorem \ref{thm:central1} is quite accurate and sufficient for supervised parameter selection.
The following proposition provides a lower bound with the same asymptotic decay rate in $\frac{1}{\alpha}$ for $\|b(\alpha)\|_2$. 

\begin{proposition}
\label{eq:propminor}
Assume that $\hat \psi$ does not vanish.
Let $b(\alpha)=\psi\star \lambda(\alpha)$ where $\lambda(\alpha)$ is the solution of \eqref{eq:pbl2}.
Let $P_1$ denote the orthogonal projector on $\Ran(\Psi^T\nabla^T)$ and $b_1=P_1(\Psi^{-1}u_0)$. Then if $\alpha$ is sufficiently large,
\begin{equation*}
\|b(\alpha)\|_2\geq \frac{1}{\alpha} \frac{1}{ \|\Psi^{-1}\|_{2\rightarrow 2} }\frac{\|b_1\|_2}{\|A^+b_1\|_\infty}.
\end{equation*}
\end{proposition}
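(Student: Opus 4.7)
My plan is to combine Fenchel duality, an orthogonal-decomposition identity for $b_1$, and a Cauchy--Schwarz/H\"older pair of inequalities, and then let $\alpha$ be large enough for the asymptotics to close the argument.

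I would first rewrite \eqref{eq:pbl2} as $\min_\lambda \|\nabla u_0 - A^T\lambda\|_{\boldsymbol 1} + \frac{\alpha}{2}\|\lambda\|_2^2$ with $A=\Psi^T\nabla^T$ (so that $A^T=\nabla\Psi$) and dualize using $\|\cdot\|_{\boldsymbol 1}^* = \iota_{\{\|\cdot\|_{\boldsymbol\infty}\leq 1\}}$. This produces the dual problem $\max_{\|q\|_{\boldsymbol\infty}\leq 1}\langle\nabla u_0,q\rangle - \frac{1}{2\alpha}\|Aq\|_2^2$ together with the standard primal--dual optimality relations $\alpha\lambda(\alpha)=Aq^*(\alpha)$ and $q^*(\alpha)\in\partial\|\cdot\|_{\boldsymbol 1}(\nabla u_0 - A^T\lambda(\alpha))$. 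Since $\hat\psi$ does not vanish, $\Psi$ is invertible and $\|\lambda(\alpha)\|_2\leq\|\Psi^{-1}\|_{2\to 2}\|b(\alpha)\|_2$, so it suffices to lower bound $\|Aq^*(\alpha)\|_2/\alpha$.

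The second step is to extract the identity $A^T b_1 = \nabla u_0$. Because $\hat\psi(0)\neq 0$, the operator $\Psi$ preserves the line of constants, and hence $\Ker(A^T)=\Ker(\nabla\Psi)=\Psi^{-1}\Ker(\nabla)$ reduces to the constants. The orthogonal decomposition $\Psi^{-1}u_0 = b_1 + (I-P_1)\Psi^{-1}u_0$ therefore places $\Psi^{-1}u_0 - b_1$ in $\Ker(A^T)$, so $\Psi b_1 - u_0$ is constant, and $A^T b_1 = \nabla(\Psi b_1) = \nabla u_0$.

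The third step couples two inequalities. Cauchy--Schwarz in $\R^n$ with test vector $b_1$, combined with the identity above, gives
\[
\alpha\|\lambda(\alpha)\|_2\,\|b_1\|_2 \;\geq\; \langle Aq^*, b_1\rangle \;=\; \langle q^*, A^T b_1\rangle \;=\; \langle q^*, \nabla u_0\rangle.
\]
The subgradient/KKT relation combined with $\langle Aq^*,\lambda(\alpha)\rangle=\alpha\|\lambda(\alpha)\|_2^2$ yields $\langle q^*,\nabla u_0\rangle = \|\nabla u_0 - A^T\lambda(\alpha)\|_{\boldsymbol 1} + \alpha\|\lambda(\alpha)\|_2^2 \geq \|\nabla u_0 - A^T\lambda(\alpha)\|_{\boldsymbol 1}$. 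Dually, setting $w = A^+ b_1$ so that $Aw=b_1$, H\"older's inequality gives
\[
\|b_1\|_2^2 \;=\; \langle w, A^T b_1\rangle \;=\; \langle w, \nabla u_0\rangle \;\leq\; \|A^+ b_1\|_{\boldsymbol\infty}\,\|\nabla u_0\|_{\boldsymbol 1},
\]
so $\|\nabla u_0\|_{\boldsymbol 1}/\|b_1\|_2 \geq \|b_1\|_2/\|A^+ b_1\|_{\boldsymbol\infty}$.

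Finally, strong convexity of \eqref{eq:pbl2} forces $\lambda(\alpha)\to 0$ as $\alpha\to\infty$, and by continuity $\|\nabla u_0 - A^T\lambda(\alpha)\|_{\boldsymbol 1}\to\|\nabla u_0\|_{\boldsymbol 1}$; the H\"older bound above then guarantees $\|\nabla u_0 - A^T\lambda(\alpha)\|_{\boldsymbol 1}\geq\|b_1\|_2^2/\|A^+ b_1\|_{\boldsymbol\infty}$ whenever $\alpha$ exceeds a suitable threshold. Chaining this with the Cauchy--Schwarz estimate and with $\|b(\alpha)\|_2\geq\|\lambda(\alpha)\|_2/\|\Psi^{-1}\|_{2\to 2}$ delivers the announced bound. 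The main obstacle will be making the ``sufficiently large $\alpha$'' threshold explicit: it is controlled by the H\"older slack $\|A^+ b_1\|_{\boldsymbol\infty}\|\nabla u_0\|_{\boldsymbol 1}-\|b_1\|_2^2$ (which could in principle be arbitrarily small) together with a quantitative rate for $\lambda(\alpha)\to 0$, neither of which follows from the soft asymptotics used here.
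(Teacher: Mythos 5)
Your architecture --- dualize, establish $A^Tb_1=\nabla u_0$, then play Cauchy--Schwarz against the subgradient identity and H\"older --- is sound up to its last step: the intermediate inequality $\alpha\|\lambda(\alpha)\|_2\|b_1\|_2\geq\|\nabla u_0-A^T\lambda(\alpha)\|_{\boldsymbol 1}$ and the bound $\|\nabla u_0\|_{\boldsymbol 1}\geq\|b_1\|_2^2/\|A^+b_1\|_{\iinfty}$ are both correct. The gap is exactly where you flag it, and it is not merely a matter of making the threshold explicit: you need $\|\nabla u_0-A^T\lambda(\alpha)\|_{\boldsymbol 1}\geq\|b_1\|_2^2/\|A^+b_1\|_{\iinfty}$, but the convergence $\|\nabla u_0-A^T\lambda(\alpha)\|_{\boldsymbol 1}\to\|\nabla u_0\|_{\boldsymbol 1}$ only yields $\geq\|\nabla u_0\|_{\boldsymbol 1}-\epsilon$ for $\alpha$ large. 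This closes the argument only when H\"older is strict; if $\|\nabla u_0\|_{\boldsymbol 1}=\|b_1\|_2^2/\|A^+b_1\|_{\iinfty}$ (equality occurs when, at each pixel where $\nabla u_0$ does not vanish, $A^+b_1$ is parallel to it with maximal modulus), your chain delivers only $(1-\epsilon)$ times the claimed constant --- or, if you keep the discarded term $\alpha\|\lambda(\alpha)\|_2^2$ and use $\|\lambda(\alpha)\|_2=O(1/\sqrt{\alpha})$, the claimed constant minus an $O(1/\alpha^2)$ correction --- for every finite $\alpha$. So as written the proof establishes the proposition only under an unstated non-degeneracy hypothesis, and even then with a threshold that is not controlled.

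The paper avoids this entirely by staying on the dual side. Rewriting the dual objective as $\frac12\|A\qq-\alpha\Psi^{-1}u_0\|_2^2$ (up to constants) and projecting onto $\Ran(A)$, it identifies $A\qq(\alpha)$ as the Euclidean projection of $\alpha b_1$ onto the convex set $A(\{\|\qq\|_{\iinfty}\leq 1\})\ni 0$. Comparing the optimal value with the explicit feasible point $\qq'=A^+b_1/\|A^+b_1\|_{\iinfty}$ (for which $A\qq'=b_1/\|A^+b_1\|_{\iinfty}$) and using $\|A\qq(\alpha)\|_2\leq\alpha\|b_1\|_2$ gives $\|A\qq(\alpha)\|_2\geq\|b_1\|_2/\|A^+b_1\|_{\iinfty}$ exactly, for every $\alpha\geq 1/\|A^+b_1\|_{\iinfty}$, with no degeneracy caveat. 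If you want to salvage your primal route, you would need to replace the soft limit by a comparison against the same test point; otherwise the projection argument is the shorter path.
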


\subsection{Results for the general case $m\geq 1$ filters}
\label{ssec:mfilters}

In this paragraph, we state results that generalize Theorem \ref{thm:central1} to the case of $m\geq 1$ filters.

\begin{theorem}
\label{thm:central}
Let $\alpha=(\alpha_1,\cdots,\alpha_m)$ denote positive weights.
Let $h_{i,k}=\psi_i\star \tilde \psi_i \star \tilde d_k$ for $k\in \{1,\hdots,d\}$.
Then 
\begin{equation}
\label{eq:upbound}
\|b_i(\alpha)\|_2\leq \frac{\sqrt{n}}{\alpha_i} \max_{i\in\{1,\hdots,m\}} \max_{k\in \{1,\hdots,d\}}\| \hat h_{i,k} \|_\infty.
\end{equation}
\end{theorem}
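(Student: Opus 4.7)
The plan is to derive the bound from the primal--dual optimality conditions of the strongly convex program \eqref{eq:pb}. Since $\llambda\mapsto\sum_i\frac{\alpha_i}{2}\|\lambda_i\|_2^2$ is strongly convex, there is a unique minimizer $\llambda(\alpha)$, characterized by Fermat's rule. Differentiating the quadratic part in $\lambda_i$ and applying the chain rule to the TV term yields
\[\alpha_i\,\lambda_i(\alpha)=\Psi_i^{T}\nabla^{T}q,\qquad i=1,\dots,m,\]
for a \emph{common} dual vector field $q\in\R^{n\times d}$ belonging to the subdifferential of $\|\cdot\|_{\boldsymbol 1}$ at $\nabla(u_0-\sum_i\Psi_i\lambda_i(\alpha))$. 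In particular $\|q\|_{\boldsymbol\infty}\leq 1$, i.e.\ $\sum_k q_k(\xx)^2\leq 1$ pointwise.

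Next I would translate this KKT relation into a closed-form expression for $b_i(\alpha)$. Applying $\Psi_i$ to both sides and using that $\Psi_i\Psi_i^{T}$ is convolution with $\psi_i\star\tilde\psi_i$ while $\nabla^{T}q=\sum_k \tilde d_k\star q_k$, one obtains
\[b_i(\alpha)=\frac{1}{\alpha_i}\sum_{k=1}^d h_{i,k}\star q_k.\]
Estimating the $\ell^2$-norm of the right-hand side is then done in the Fourier domain: from $\widehat{h_{i,k}\star q_k}=\hat h_{i,k}\odot\hat q_k$, pointwise Cauchy--Schwarz over $k$, and Parseval with the convention $\|\hat u\|_2=\sqrt n\|u\|_2$, one gets
\[\|b_i(\alpha)\|_2^2\leq \frac{1}{n\,\alpha_i^2}\,\max_{\xi}\Bigl(\sum_{k=1}^d|\hat h_{i,k}(\xi)|^2\Bigr)\,\sum_{\xi,k}|\hat q_k(\xi)|^2.\]
The isotropic bound $\sum_k q_k(\xx)^2\leq 1$ combined with Plancherel yields $\sum_{\xi,k}|\hat q_k(\xi)|^2\leq n^2$, and bounding the Fourier symbols by the joint sup-norm maximum over $i$ and $k$ produces the announced inequality \eqref{eq:upbound}.

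The main obstacle will be to control the $k$-sum sharply enough that only $\max_k\|\hat h_{i,k}\|_\infty$ (and not, say, an $\ell^2$-norm in $k$) appears on the right-hand side. This should rely both on the isotropic --- as opposed to anisotropic --- nature of the dual constraint $\|q\|_{\boldsymbol\infty}\leq 1$ and on the product structure $\hat h_{i,k}(\xi)=|\hat\psi_i(\xi)|^2\,\overline{\hat d_k(\xi)}$ of the Fourier symbol, which decouples the $i$- and $k$-dependences and lets one absorb the sum over directions. Once this is in place, the fact that a single dual $q$ governs \emph{all} $m$ components simultaneously explains naturally why the right-hand side of \eqref{eq:upbound} involves a joint maximum over $i$.
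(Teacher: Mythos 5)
Your route is the same as the paper's: extract the primal--dual optimality condition $\lambda_i(\alpha)=-\frac{1}{\alpha_i}\Psi_i^T\nabla^T\qq(\alpha)$ with $\|\qq(\alpha)\|_{\boldsymbol\infty}\le 1$ (this is Lemma~\ref{prop:upperbound}), then estimate $\|\Psi_i\Psi_i^T\nabla^T\|_{\boldsymbol\infty\rightarrow 2}$ in the Fourier domain (Proposition~\ref{prop:valueofnorm1D2}). Your computation is correct up to and including the bound $\|b_i(\alpha)\|_2\le\frac{\sqrt n}{\alpha_i}\max_\xi\bigl(\sum_{k}|\hat h_{i,k}(\xi)|^2\bigr)^{1/2}$, which is exactly where the paper also lands after relaxing the isotropic $\ell^\infty$ ball to the $\ell^2$ ball of radius $\sqrt n$.

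The obstacle you flag at the end is, however, a genuine gap: replacing the Euclidean aggregate over directions by $\max_k\|\hat h_{i,k}\|_\infty$ does \emph{not} follow from the product structure $\hat h_{i,k}=|\hat\psi_i|^2\,\overline{\hat d_k}$, because $\sum_k|\hat d_k(\xi)|^2$ can exceed $\max_k|\hat d_k(\xi)|^2$ by a factor of $d$ (e.g.\ at frequencies where all the $|\hat d_k(\xi)|$ coincide and are nonzero). What your argument actually proves is \eqref{eq:upbound} with an extra factor $\sqrt d$. You should know that the paper's own proof has the same soft spot: the final equality in Proposition~\ref{prop:valueofnorm1D2} identifies $\max_{\sum_k\|\hat q_k\|_2^2\le 1}\|\sum_k\hat h_{i,k}\odot\hat q_k\|_2$ --- whose value is the isotropic quantity $\max_\xi\bigl(\sum_k|\hat h_{i,k}(\xi)|^2\bigr)^{1/2}$ --- with $\max_k\|\hat h_{i,k}\|_\infty$ without justification, and its lower-bound construction (single Fourier modes supported in one direction) only certifies the ``$\ge$'' inequality. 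So your proposal is faithful to the paper's argument, including its weakest step; as written you still owe a $\sqrt d$ (i.e.\ $\sqrt 2$ for images), which is harmless for the intended use as a warm-start heuristic but should be stated rather than left as a hope.
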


\begin{theorem}
\label{prop:solutionpetitalpha}
Denote $\Psi=(\Psi_1,\Psi_2,\hdots, \Psi_m)\in R^{n\times nm}$ and assume that $\Psi^T\Psi$ has full rank (this is equivalent to the fact that $\forall \xi$, $\exists i \in \{1,\hdots,m\}$, $\hat \psi_i(\xi)\neq 0$).
Let $\hat \llambda^0(\alpha)=(\hat \lambda_1^0,\hdots,\hat \lambda_m^0)$ be defined by:
\begin{equation}
\label{eq:deflambda0_1}
\hat \lambda_i^0(\alpha)(\xi)= 
\left\{
\begin{array}{ll}
0 & \textrm{if \ }  \xi=0, \\
\frac{\bar{\hat{\psi_i}}(\xi) \hat{u_0}(\xi)}{\alpha_i \sum_{j=1}^m \frac{|\hat \psi_j(\xi)|^2}{\alpha_j}} & \textrm{otherwise}.
\end{array}\right.
\end{equation}
Then there exists a value $\bar \alpha>0$ such that for all $\alpha\in ]0,\bar \alpha]^m$ the solution $\llambda(\alpha)$ of problem \eqref{eq:pb} is:
\begin{equation}
\llambda(\alpha)=\llambda^0(\alpha).
\end{equation}
\end{theorem}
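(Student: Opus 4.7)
The vector $\llambda^0(\alpha)$ is designed so that the residual $u_0-\sum_{i=1}^m\psi_i\star\lambda_i^0$ is a constant, hence has vanishing gradient. My plan is to first identify $\llambda^0(\alpha)$ as the unique minimizer of the strongly convex quadratic $\sum_i\frac{\alpha_i}{2}\|\lambda_i\|_2^2$ under the linear constraint $\nabla(u_0-\sum_i \psi_i\star\lambda_i)=0$, and then to verify, via the primal-dual optimality conditions of \eqref{eq:pb}, that for $\alpha$ componentwise small enough this constrained minimum is also the global minimum of the unconstrained problem. For the first step, $\Ker(\nabla)$ consists of the constants, so the constraint translates in Fourier as $\sum_i\hat\psi_i(\xi)\hat\lambda_i(\xi)=\hat u_0(\xi)$ for every $\xi\neq 0$, with no constraint at $\xi=0$. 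By Plancherel, the quadratic cost decouples across frequencies, and at each $\xi\neq 0$ a direct Lagrange multiplier argument on the resulting weighted least-squares problem returns exactly \eqref{eq:deflambda0_1}; at $\xi=0$ the unconstrained minimum is zero. The rank hypothesis on $\Psi^T\Psi$ ensures that the denominator $\sum_j|\hat\psi_j(\xi)|^2/\alpha_j$ never vanishes.

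\textbf{Primal-dual optimality and algebraic compatibility.} Problem \eqref{eq:pb} is convex, so $\llambda^\star$ is optimal iff there exists $\qq^\star\in\partial\|\cdot\|_{\boldsymbol 1}\bigl(\nabla(u_0-\sum_i \psi_i\star\lambda_i^\star)\bigr)$ such that $\alpha_i\lambda_i^\star=\Psi_i^T\nabla^T\qq^\star$ for every $i=1,\ldots,m$. Taking $\llambda^\star=\llambda^0(\alpha)$ annihilates the argument of the subdifferential, so the condition on $\qq^\star$ reduces to $\|\qq^\star\|_{\boldsymbol\infty}\leq 1$. Writing the $m$ primal equations in Fourier and substituting \eqref{eq:deflambda0_1}, a factor $\bar{\hat\psi_i}(\xi)$ appears on both sides of the $i$-th equation; whenever $\hat\psi_i(\xi)=0$ the equation is trivially satisfied, and by the rank assumption some $\hat\psi_j(\xi)$ is nonzero at every $\xi$, so the $m$ equations collapse to the single frequency-wise condition
\begin{equation*}
\widehat{\nabla^T\qq^\star}(\xi)=\frac{\hat u_0(\xi)}{\sum_{j=1}^m|\hat\psi_j(\xi)|^2/\alpha_j}\quad\text{for } \xi\neq 0,\qquad \widehat{\nabla^T\qq^\star}(0)=0.
\end{equation*}
Call $v(\alpha)$ the function whose Fourier transform is the right-hand side. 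The simultaneous compatibility of the $m$ equations for the single unknown $\qq^\star$ is not automatic: it relies entirely on the fact that the denominator in \eqref{eq:deflambda0_1} is common to all $i$, which is the algebraic key of the proof.

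\textbf{Finishing and anticipated obstacle.} By construction $v(\alpha)$ has zero mean, hence lies in $\Ran(\nabla^T)$, so the linear system $\nabla^T\qq^\star=v(\alpha)$ admits a solution, for instance the minimum $l^2$-norm one $\qq^\star=(\nabla^T)^+v(\alpha)$. Since $\|\qq\|_{\boldsymbol\infty}\leq\|\qq\|_2$ and $(\nabla^T)^+$ is a fixed bounded operator on this finite-dimensional space, this yields an estimate of the form $\|\qq^\star\|_{\boldsymbol\infty}\leq C\|v(\alpha)\|_2$ with $C$ depending only on $\nabla$. As each $\alpha_i\to 0^+$, the rank assumption forces $\sum_j|\hat\psi_j(\xi)|^2/\alpha_j\to+\infty$ at every $\xi\neq 0$, so each Fourier coefficient of $v(\alpha)$ tends to zero and $\|v(\alpha)\|_2\to 0$. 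Choosing $\bar\alpha\in\R_+^m$ small enough to guarantee $C\|v(\alpha)\|_2\leq 1$ on $]0,\bar\alpha]^m$ produces a valid $\qq^\star$ and closes the argument. I expect the main subtlety to be the simultaneous compatibility of the $m$ primal equations across filters; once the algebraic cancellation in the second paragraph is carried out, what remains is a soft continuity argument on the minimum-norm pre-image under $\nabla^T$.
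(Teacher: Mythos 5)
Your proof is correct, and while it establishes the same optimality condition as the paper ($0\in\partial F_\alpha(\llambda^0(\alpha))$ for $\alpha$ small, after noting that the residual $u_0-\sum_i\psi_i\star\lambda_i^0$ is constant so that the subdifferential of the $TV$ term reduces to $\Psi^T\nabla^T Q$ with $Q$ the unit ball of $\|\cdot\|_{\iinfty}$), the way you produce the dual certificate is genuinely different. The paper argues abstractly by set inclusions: $\nabla^T Q$ contains a ball $B(0,\gamma)\cap C^\perp$ because $Q$ has nonempty interior, then each block $\Psi_i^T(B(0,\gamma)\cap C^\perp)$ contains a ball in $\Ran(\Psi_i)\cap C^\perp$, and $\alpha_i\lambda_i^0(\alpha)$ lands in that ball once $\alpha_i$ is small (using the uniform bound $\|\lambda_i^0(\alpha)\|_2\le\|\hat u_0\oslash\hat\psi_i\|_2$ of Lemma \ref{lem:maj}). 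You instead solve the $m$ block equations $\alpha_i\lambda_i^0=\Psi_i^T\nabla^T\qq$ explicitly in Fourier, observe that the common denominator makes them simultaneously compatible and equivalent to the single equation $\nabla^T\qq=v(\alpha)$, and take $\qq^\star=(\nabla^T)^+v(\alpha)$ with the estimate $\|\qq^\star\|_{\iinfty}\le\|\qq^\star\|_2\le C\|v(\alpha)\|_2\to0$. Your route buys something real: a single $\qq^\star$ valid for all $m$ blocks is exhibited explicitly, whereas the paper's blockwise argument rests on writing $\Psi^T(B(0,\gamma)\cap C^\perp)$ as a Cartesian product of the images $\Psi_i^T(B(0,\gamma)\cap C^\perp)$ -- an identity that is only an inclusion ($\subset$) in general, so the paper's proof as written does not visibly guarantee that the same element of $Q$ serves every block; your computation shows why it nevertheless does. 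Two small points you should make explicit: the bound $C\|v(\alpha)\|_2\le1$ holds uniformly on the box $]0,\bar\alpha]^m$ because each $|\hat v(\alpha)(\xi)|$ is nondecreasing in each $\alpha_j$ (so it suffices to check it at the corner $\alpha=\bar\alpha\,(1,\dots,1)$, where $\min_\xi\sum_j|\hat\psi_j(\xi)|^2>0$ by the rank assumption and finiteness of the frequency grid), and the compatibility at $\xi=0$ uses that $\nabla^T\qq$ automatically has zero mean.
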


Theorems \ref{thm:central} and \ref{prop:solutionpetitalpha} generalize Theorem \ref{thm:central1}. 
In practice, we observed that the ratio $$\frac{\frac{\sqrt{n}}{\alpha_i} \max_{i\in\{1,\hdots,m\}} \max_{k\in \{1,\hdots,d\}}\| \hat h_{i,k} \|_\infty}{\|b_i(\alpha)\|_2}$$ does not exceed limited values of the order of $5$ (see the bottom row of Figure \ref{fig:comparisons}). This gives an idea of the sharpness of \eqref{eq:upbound}. 
The bound \eqref{eq:upbound} can thus be used to provide the user warm start parameters $\alpha_i$. 
This idea is detailed in the algorithm presented in section \ref{ssec:algorithm}.

\subsection{Equivalence with a single filter model}
\label{ssec:m1m}

In section \ref{sec:marginalselection}, we showed that the following image formation model is rich enough for many applications of interest:
\begin{equation}
 u_0 = u + \sum_{i=1}^m \lambda_i\star \psi_i
\end{equation}
where $\lambda_i$ is the realization of a \textit{Gaussian random vector} of distribution $\mathcal{N}(0,\sigma_i^2 I)$. 
Let $b=\sum_{i=1}^m \lambda_i\star \psi_i$.
An important observation is that the previous model is equivalent to the following:
\begin{equation}
 u_0 = u + \lambda\star \psi,
\end{equation}
where $\lambda$ is the realization of a Gaussian random vector $\mathcal{N}(0,\sigma^2 I)$ and $\sigma$ and $\psi$ satisfy:
\begin{equation}
\label{eq:consistency}
 \sigma^2 |\hat \psi(\chi)|^2 = \sum_{i=1}^m \sigma_i^2 |\hat \psi_i(\chi)|^2, \ \forall \chi.
\end{equation}
This condition ensures that both noises have the same covariance matrix $\mathbb{E}(BB^T)$ where $B$ is defined in \eqref{eq:defB}.
In what follows, we set $\alpha = \frac{1}{\sigma^2}$ and  $\alpha_i = \frac{1}{\sigma_i^2}$.

The above remark has a pleasant consequence: problems \eqref{eq:probmfiltres} and \eqref{eq:prob1filtre} below are equivalent from a Bayesian point of view if only the noise component $b=\sum_{i=1}^m \lambda_i\star \psi_i$ and the denoised image $u$ are sought for. 
\begin{equation}
\min_{\lambda\in \R^{n\times m}} \|\nabla (u_0- \sum_{i=1}^m \lambda_i\star \psi_i) \|_1 + \sum_{i=1}^m \frac{\alpha_i}{2} \|\lambda_i\|_2^2.
\label{eq:probmfiltres}
\end{equation}
\begin{equation}
\min_{\lambda\in \R^{n}} \|\nabla (u_0- \lambda \star \psi) \|_1 + \frac{\alpha}{2} \|\lambda\|_2^2.
\label{eq:prob1filtre}
\end{equation}
Hence the optimization can be performed on $\R^n$ instead of $\R^{n\times m}$.
The following result states that this simplification is also justified form a deterministic point of view.

\begin{theorem}
\label{eq:corollambda}
Let $\lambda_i(\alpha)$ denote the minimizer of \eqref{eq:probmfiltres} and $\lambda(\alpha)$ denote the minimizer of \eqref{eq:prob1filtre}.
Let $b_i(\alpha)=\lambda_i(\alpha) \star \psi_i$ and $b(\alpha)=\lambda(\alpha) \star \psi$.
If condition \eqref{eq:consistency} is satisfied, the following equality holds:
\begin{equation}
\label{eq:equalitynoise}
\sum_{i=1}^m b_i(\alpha) = b(\alpha).
\end{equation}
Moreover, the noise components $b_i(\alpha)$ can be retrieved from $b(\alpha)$ using the following formula:
\begin{equation}
\label{eq:lambdaifromlambda}
\hat \lambda_i(\xi)=
\left\{
\begin{array}{ll}
\frac{\bar{\hat{\psi_i}}(\xi) \hat{b}(\xi)}{\alpha_i \sum_{j=1}^m \frac{|\hat \psi_j(\xi)|^2}{\alpha_j}} & \textrm{if \ } \sum_{j=1}^m |\hat \psi_j(\xi)|^2 \neq 0 \\
0 & \textrm{otherwise}. 
\end{array}
\right.
\end{equation}
\end{theorem}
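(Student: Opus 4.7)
The strategy is to reduce \eqref{eq:probmfiltres} to \eqref{eq:prob1filtre} by an inner minimization over the $\lambda_i$ for a fixed total noise $b=\sum_{i=1}^m \psi_i\star\lambda_i$. Since the total variation term depends on the $\lambda_i$ only through $b$, the outer problem splits as
\begin{equation*}
\min_{\llambda\in\R^{n\times m}} \left\|\nabla\left(u_0-\sum_{i=1}^m \psi_i\star\lambda_i\right)\right\|_{\boldsymbol 1} + \sum_{i=1}^m \frac{\alpha_i}{2}\|\lambda_i\|_2^2 \;=\; \min_b \left[\|\nabla(u_0-b)\|_{\boldsymbol 1} + V(b)\right],
\end{equation*}
where $V(b):=\min\left\{\sum_i \frac{\alpha_i}{2}\|\lambda_i\|_2^2 \,:\, \sum_{i=1}^m \psi_i\star\lambda_i = b\right\}$. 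It therefore suffices to compute $V$ and show, under \eqref{eq:consistency}, that $V(\psi\star\lambda)=\frac{\alpha}{2}\|\lambda\|_2^2$, which identifies the outer problem with \eqref{eq:prob1filtre}.

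First I would evaluate $V(b)$. Taking the discrete Fourier transform and applying Parseval's identity, the cost becomes $\frac{1}{2n}\sum_\xi \sum_i \alpha_i|\hat\lambda_i(\xi)|^2$ and the constraint decouples into one linear equation $\sum_i \hat\psi_i(\xi)\hat\lambda_i(\xi)=\hat b(\xi)$ per frequency. The problem is therefore separable over $\xi$ into small complex quadratic programs. A Lagrange multiplier calculation (minimum-norm solution of a single linear constraint in the metric $\mathrm{diag}(\alpha_i)$) yields, at every frequency where some $\hat\psi_j(\xi)$ is nonzero,
\begin{equation*}
\hat\lambda_i(\xi) = \frac{\overline{\hat\psi_i(\xi)}\,\hat b(\xi)}{\alpha_i \sum_{j=1}^m \frac{|\hat\psi_j(\xi)|^2}{\alpha_j}},
\end{equation*}
which is exactly \eqref{eq:lambdaifromlambda}. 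Substituting back, $V(b) = \frac{1}{2n}\sum_\xi |\hat b(\xi)|^2 \big/ \bigl(\sum_j |\hat\psi_j(\xi)|^2/\alpha_j\bigr)$.

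Next I would invoke the consistency condition \eqref{eq:consistency}. In the variables $\alpha_i=1/\sigma_i^2$ and $\alpha=1/\sigma^2$ it reads $\sum_j \frac{|\hat\psi_j(\xi)|^2}{\alpha_j} = \frac{|\hat\psi(\xi)|^2}{\alpha}$, so $V(b) = \frac{\alpha}{2n}\sum_\xi \frac{|\hat b(\xi)|^2}{|\hat\psi(\xi)|^2}$; for $b=\psi\star\lambda$ this collapses to $\frac{\alpha}{2}\|\lambda\|_2^2$ by Parseval. Consistency also forces the ranges of $\lambda\mapsto\psi\star\lambda$ and $(\lambda_i)\mapsto\sum_i\psi_i\star\lambda_i$ to coincide, since $\hat\psi(\xi)=0$ if and only if all $\hat\psi_i(\xi)$ vanish. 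The outer minimizations in \eqref{eq:probmfiltres} and \eqref{eq:prob1filtre} are therefore identical; both being strongly convex with unique minimizers, the optimal noises agree, which is \eqref{eq:equalitynoise}. Formula \eqref{eq:lambdaifromlambda} is then the closed form above evaluated at $\hat b=\hat b(\alpha)$.

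The only real obstacle I anticipate is the bookkeeping at frequencies $\xi$ with $\hat\psi(\xi)=0$. By consistency, all $\hat\psi_i(\xi)$ vanish there, the constraint automatically forces $\hat b(\xi)=0$, and strict convexity of the individual $l^2$ penalties sets the corresponding Fourier components of $\lambda$ and each $\lambda_i$ to zero, which matches the ``otherwise'' branch of \eqref{eq:lambdaifromlambda}. Beyond this minor point, the argument is a direct Lagrangian calculation.
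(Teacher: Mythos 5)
Your proof is correct, but for the first claim \eqref{eq:equalitynoise} it takes a genuinely different route from the paper. The paper isolates an abstract statement (its Proposition \ref{prop:mdonne1}): for \emph{arbitrary} linear operators and an arbitrary convex l.s.c.\ $F$, the identity $\Psi\Psi^*=\sum_i\Psi_i\Psi_i^*$ forces $\Psi\overline{\lambda}=\sum_i\Psi_i\overline{\lambda_i}$; this is proved by manipulating the subdifferential optimality conditions, using $\Ran\PPsi=\Ran\Psi$ to transport a solution of one inclusion into a solution of the other modulo $\Ker\PPsi$. Theorem \ref{eq:corollambda} then follows after the rescaling $\lambda_i\mapsto\sqrt{\alpha_i}\lambda_i$, which turns \eqref{eq:consistency} into exactly that operator identity. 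You instead marginalize the inner quadratic minimization to a value function $V(b)$, compute it in closed form frequency by frequency, and observe that under \eqref{eq:consistency} it coincides with the value function of the single-filter model; uniqueness of the outer minimizer in $b$ (strict convexity of $V$ on its domain, which is the common range) then gives \eqref{eq:equalitynoise}. Your computation is specific to convolution operators but has the advantage of producing \eqref{eq:lambdaifromlambda} and the equality of the two models in a single sweep, with no subdifferential calculus; the paper's argument buys generality (no Fourier diagonalization needed) at the cost of an extra abstract lemma. For the second claim the two proofs coincide: the paper also derives \eqref{eq:lambdaifromlambda} from the constrained quadratic problem $\min\sum_i\frac{\alpha_i}{2}\|\lambda_i\|_2^2$ subject to $\sum_i\lambda_i\star\psi_i=b(\alpha)$, which is exactly your inner problem evaluated at the optimal $b$. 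Your closing remark on frequencies where $\hat\psi(\xi)=0$ correctly handles the only delicate point (and is consistent with the ``otherwise'' branch), so there is no gap.
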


In practice, this theorem  allows to divide the computing times and memory requirements by a factor approximately equal to $m$.

\subsection{Algorithm}
\label{ssec:algorithm}

The following algorithm summarizes how the results presented in this paper allow to design an effective supervised parameter estimation.

\begin{algorithm}[H]
  \KwIn{ $u_0\in \R^n$: noisy image. \\
      $(\psi_i)_{i\in \{1,\hdots, m\}} \in \R^{n\times m}$: a set of filters. \\
      $(\eta_1,\hdots, \eta_m)\in [0,1]^m$: noise levels.}
  \KwOut{$u\in \R^n $: denoised image \\
	$(b_i)_{i\in \{1,\hdots, m\}} \in \R^{n\times m}$: noise components (satisfying $\|b_i\|_2\simeq \eta_i \|u_0\|_2$).}
  \Begin{
      Compute $\alpha_i= \frac{\sqrt{n}\|\hat \hh_i\|_\infty}{ \|u_0\|_2 \eta_i}$ (see Proposition \ref{prop:valueofnorm1D2}). \\
      Compute $\hat \psi= \sqrt{\sum_{i=1}^m \frac{\|\hat \psi_i\|^2}{\alpha_i}}$. \\
      Find  $\displaystyle \lambda\in \argmin_{\lambda\in \R^{n}} \|\nabla (u_0- \lambda \star \psi) \|_1 + \frac{1}{2} \|\lambda\|_2^2$ (see \cite{vsnr}).\\
      Compute $u=u_0- \lambda\star \psi$. \\
      Compute $b=\lambda\star \psi$. \\
      Compute $b_i = \lambda_i \star \psi_i$ using Theorem \ref{eq:corollambda}.
      }
\caption{Effective supervised algorithm.}
\end{algorithm}

\subsection{Numerical experiments} 
\label{sec:numericalXP}

The objective of this section is to validate Theorem \ref{thm:central} experimentally and to check that the upper bound in the right-hand side of equation \eqref{eq:upbound} is not too coarse. 
We compute the minimizers of \eqref{eq:pb} using an iterative algorithm for various filters, various images and various values of $\alpha$. 
Then we compare the value $\|b(\alpha)\|_2$ with $\min(\frac{\sqrt{n}\| \hat \hh_i \|_\infty}{\alpha_i},\|u_0-u_0^{mean}\|_2)$. 
As stated in paragraph \ref{ssec:m1}, this quantity is not strictly speaking an upper-bound but we could not find examples of practical interest where $\displaystyle \|b(\alpha)\|_2\geq \min(\frac{\sqrt{n}\| \hat \hh_i \|_\infty}{\alpha_i},\|u_0-u_0^{mean}\|_2)$. As can be seen in the fourth and fifth row of Figure \ref{fig:comparisons}, the upper-bound and the true value follow a similar curve. The fifth row shows the ratio between these values. For the considered filters, the upper bound deviates at most from a factor $4.5$ from the true value. This shows that the upper-bound \eqref{eq:upbound} can provide a good hint on how to choose a correct value of the regularization parameter. The user can then refine this bound easily to get a visually satisfactory result.

\begin{figure}[!htbp]
  \centering
 \begin{tabular}{|c|c|c|}
  \includegraphics[width=0.3\textwidth]{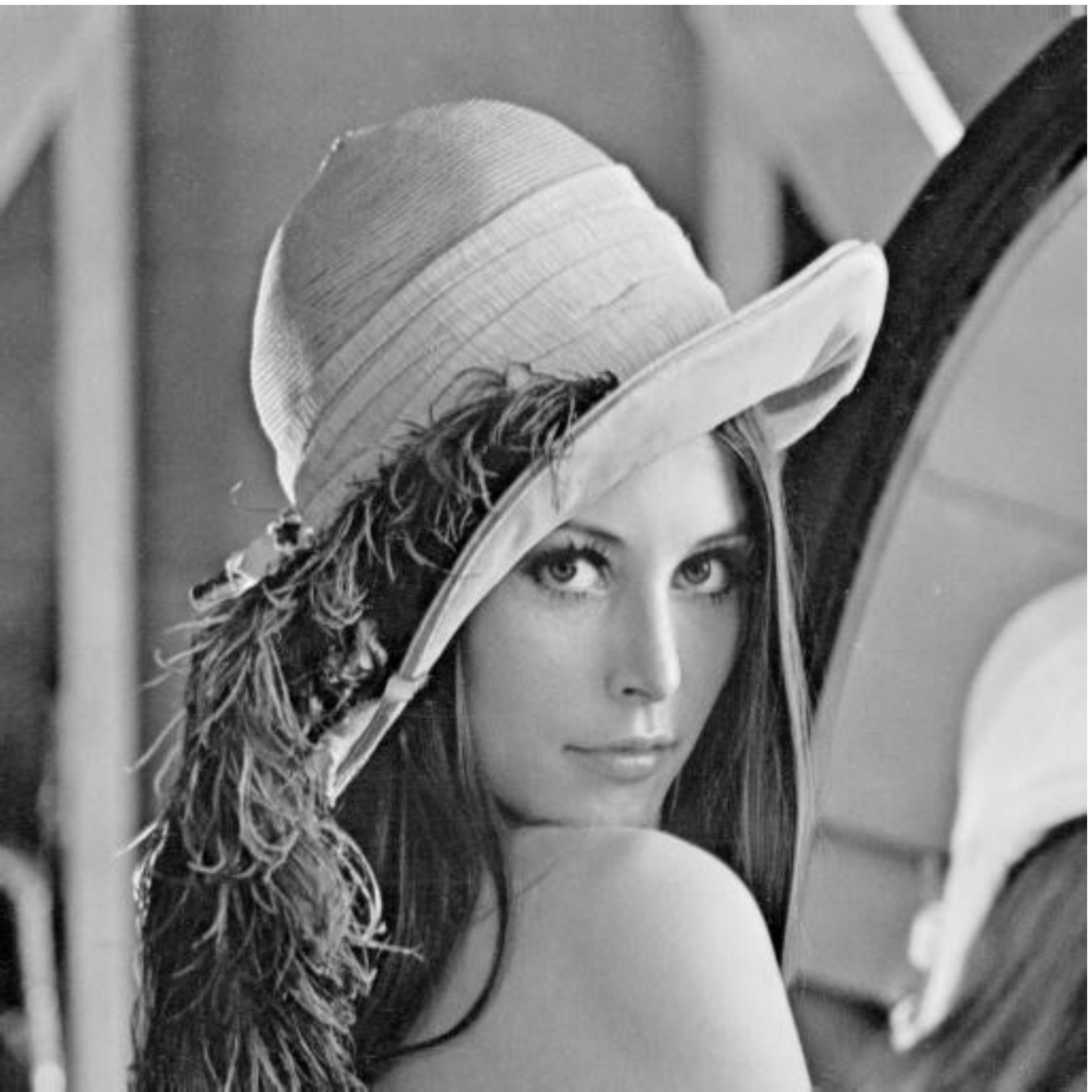}&
  \includegraphics[width=0.3\textwidth]{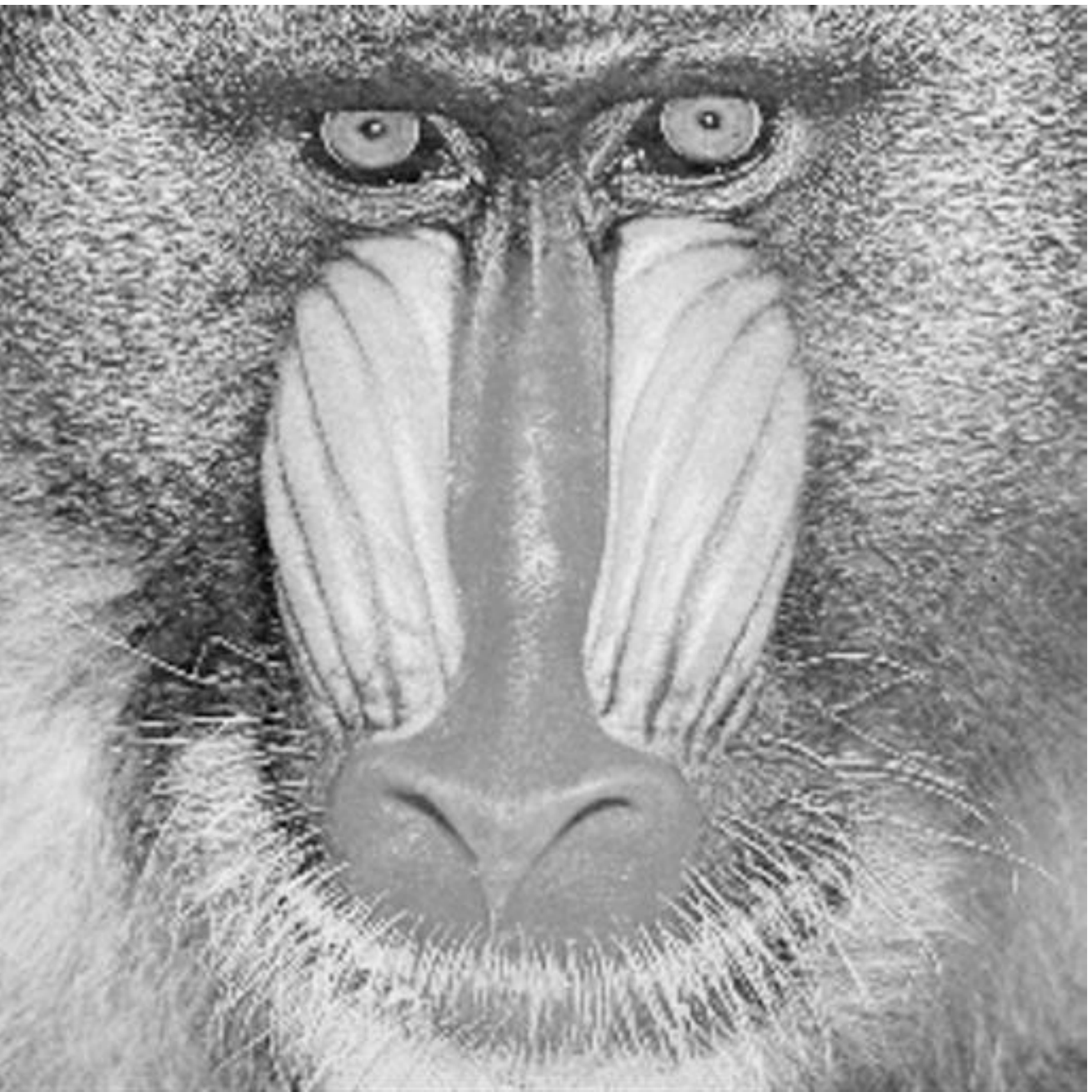}&
  \includegraphics[width=0.3\textwidth]{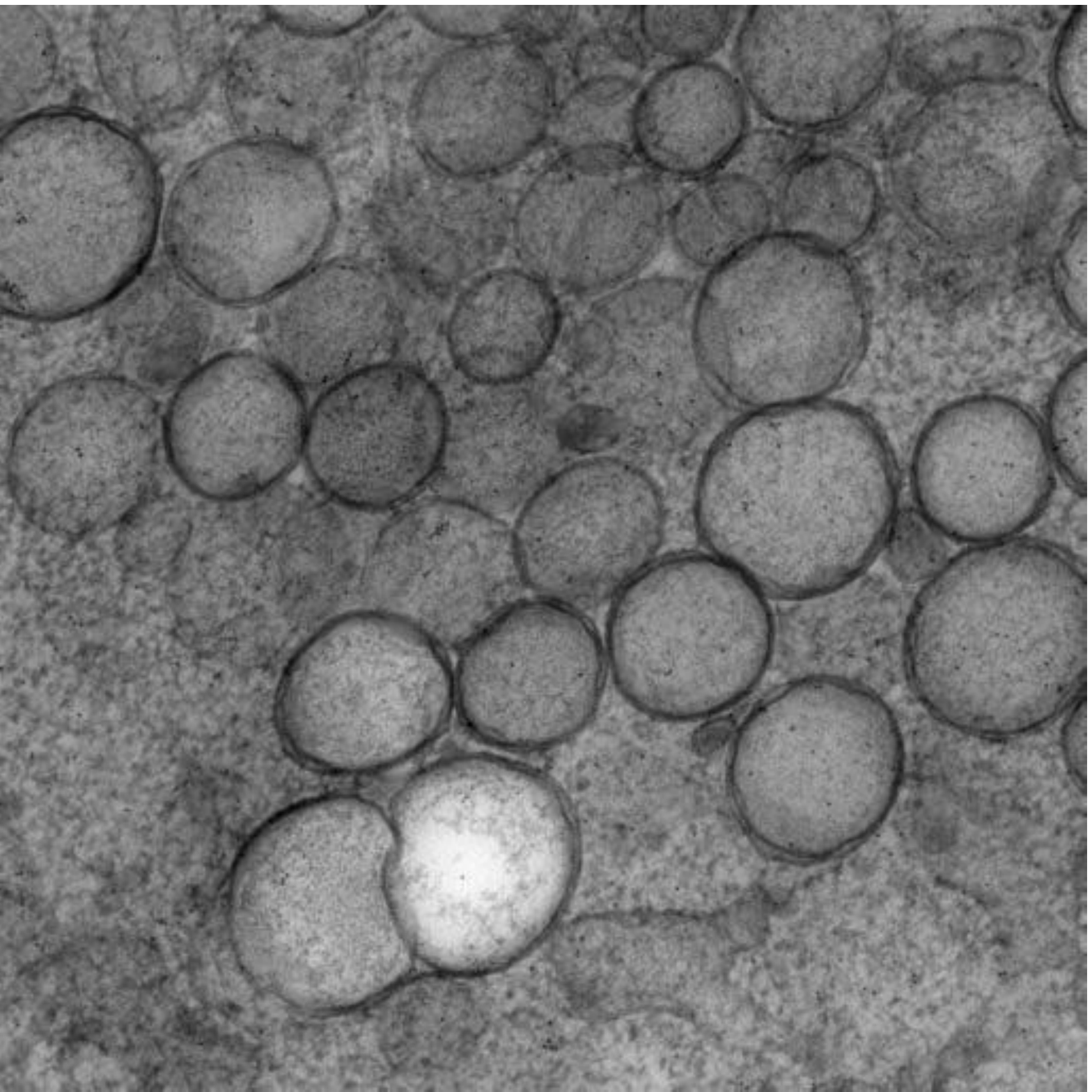}\\
  \hline
  \includegraphics[width=0.3\textwidth]{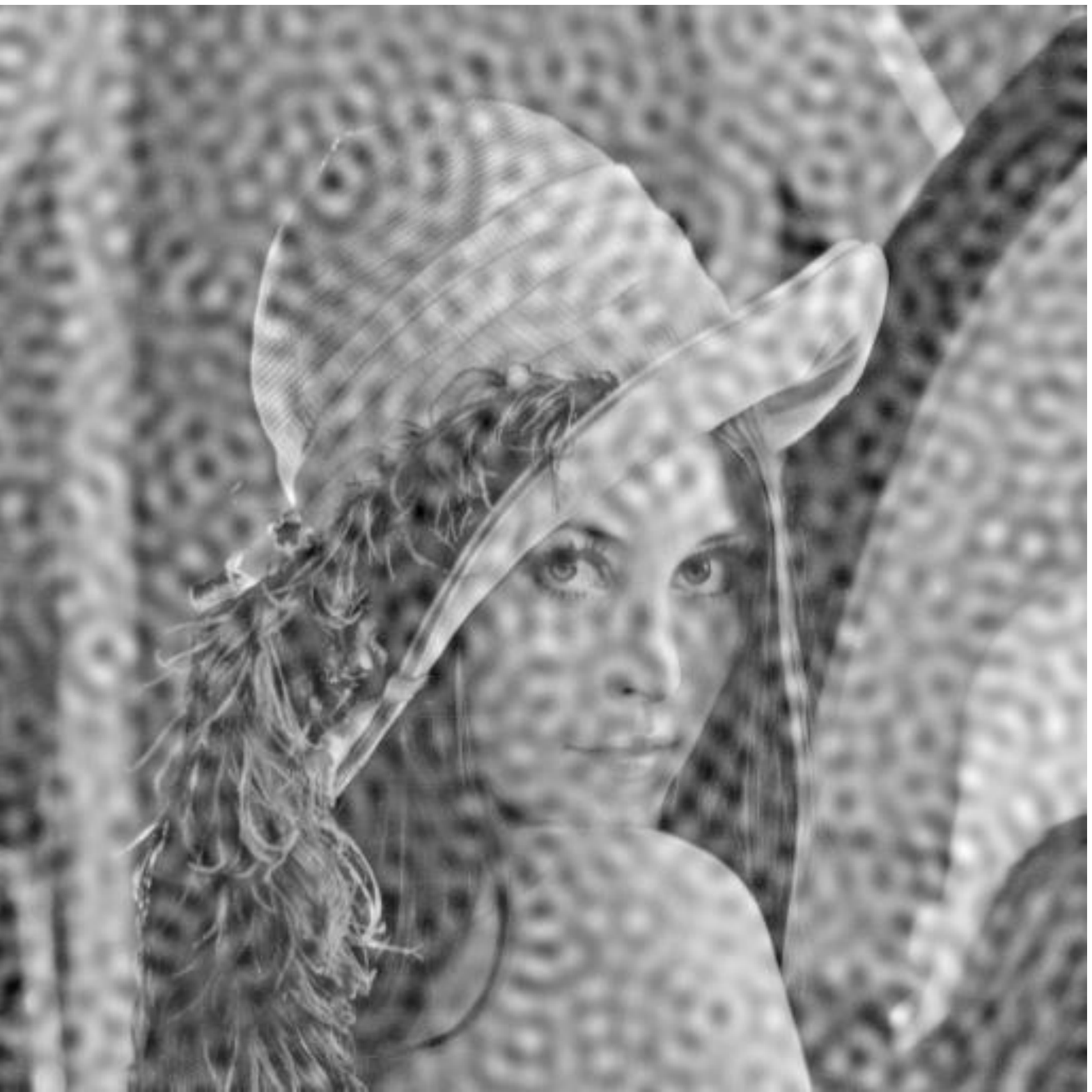}&
  \includegraphics[width=0.3\textwidth]{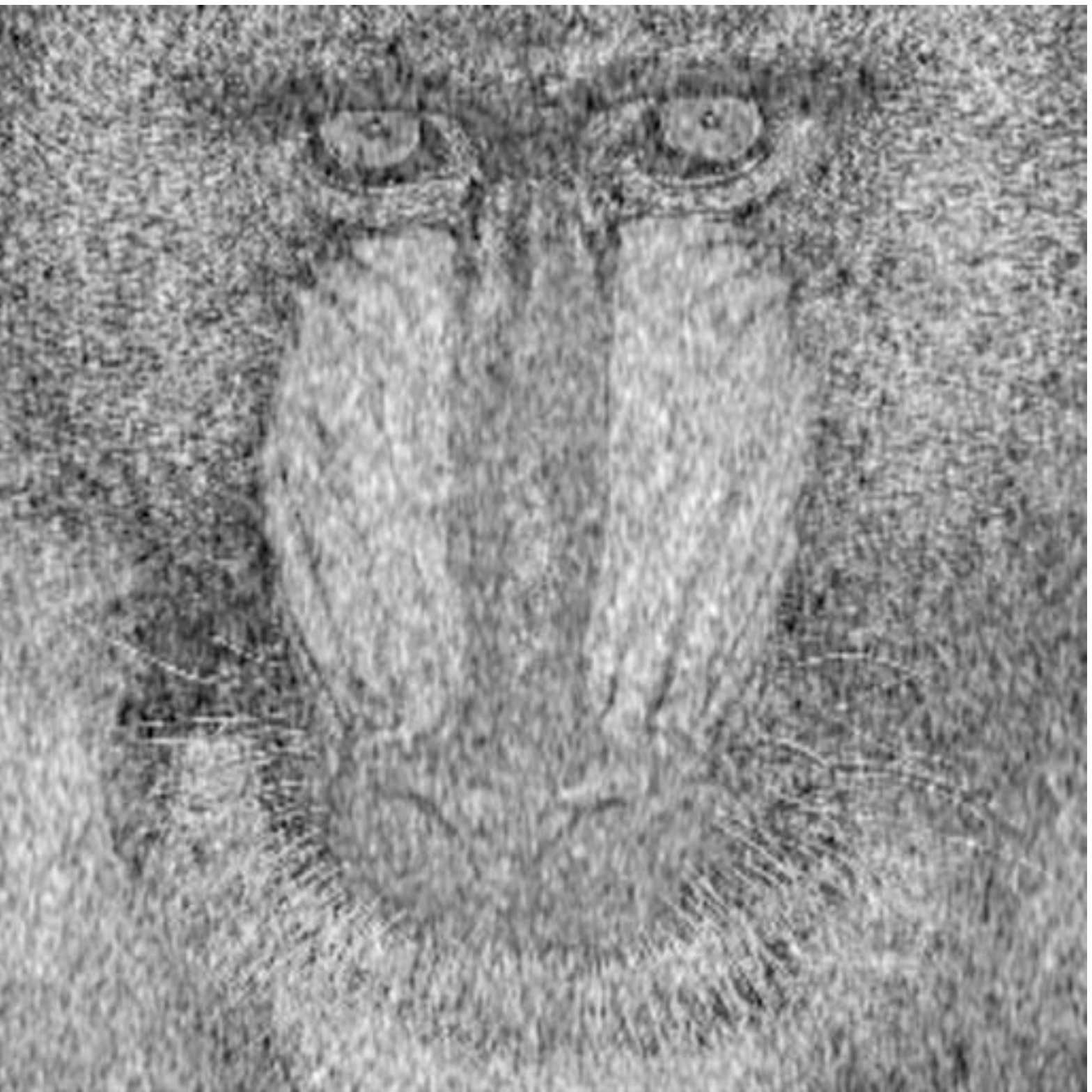}&
  \includegraphics[width=0.3\textwidth]{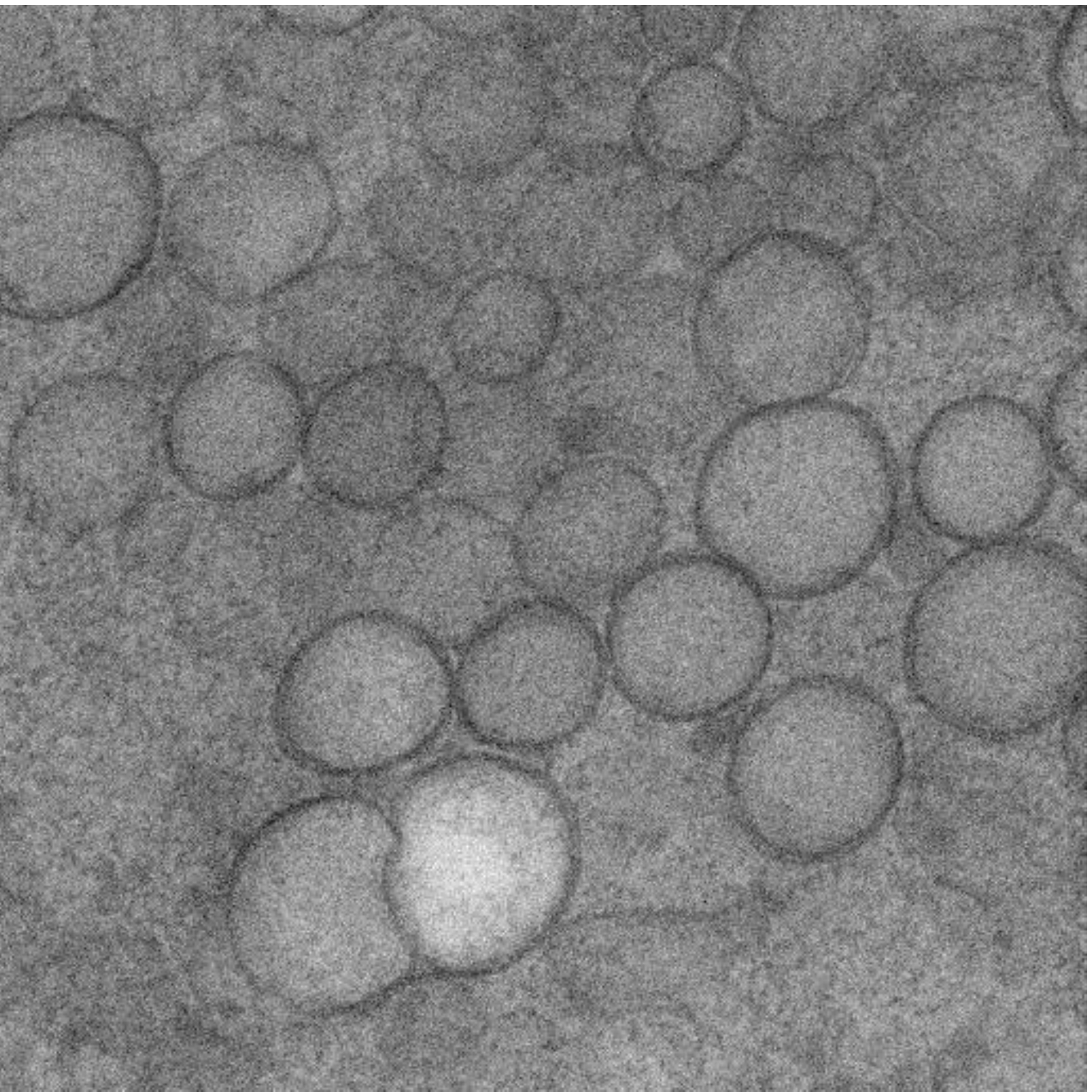}\\
  \hline
  \includegraphics[width=0.3\textwidth]{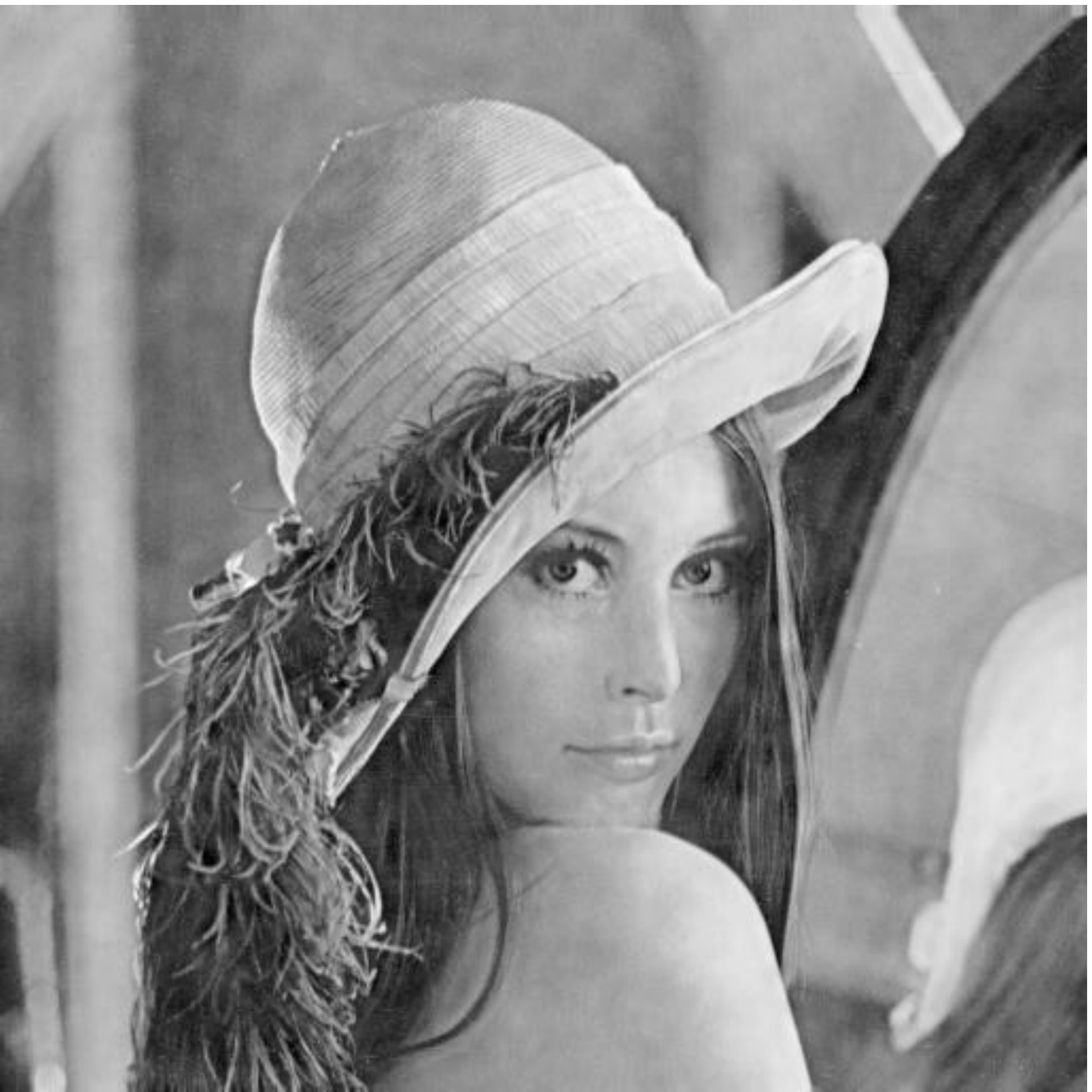}&
  \includegraphics[width=0.3\textwidth]{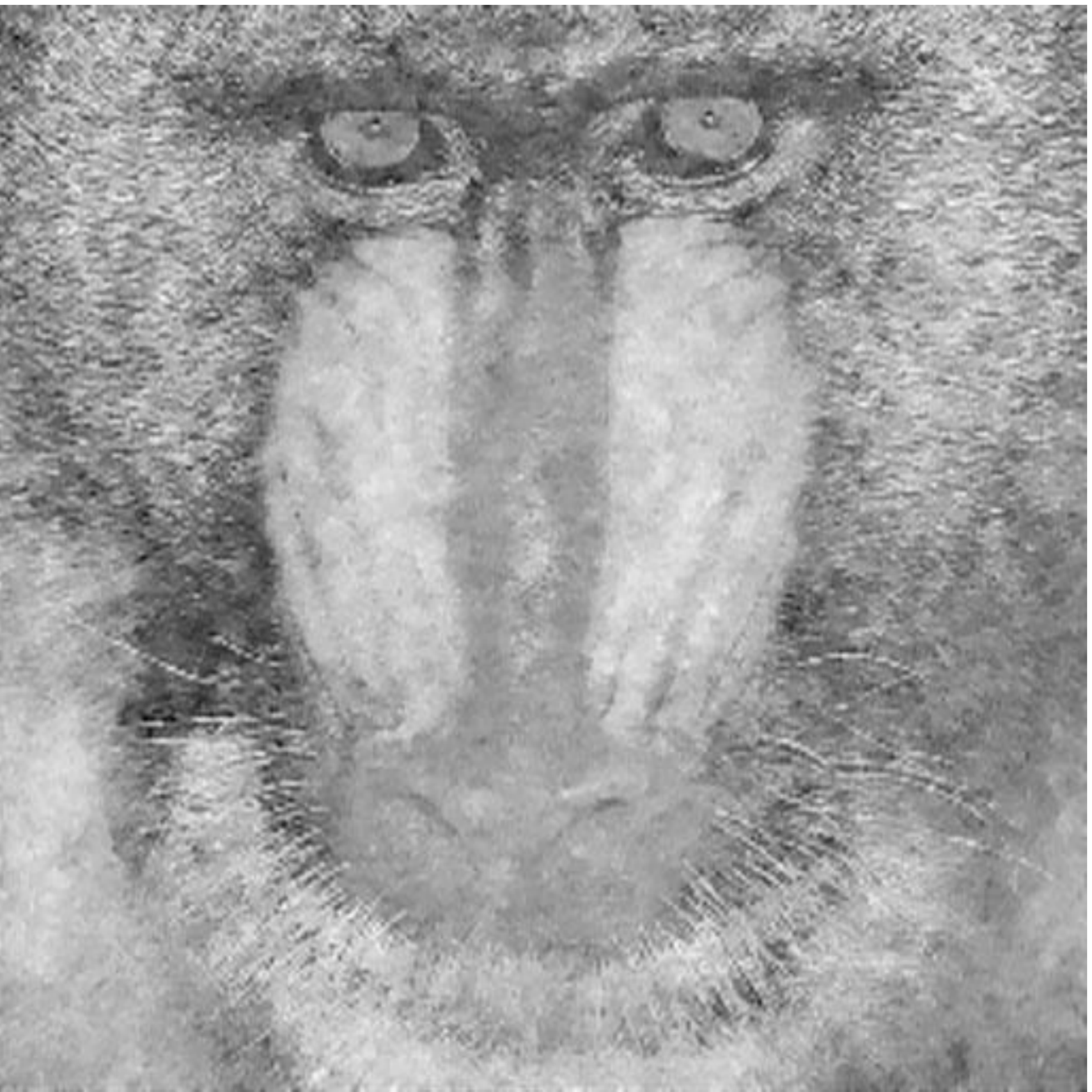}&
  \includegraphics[width=0.3\textwidth]{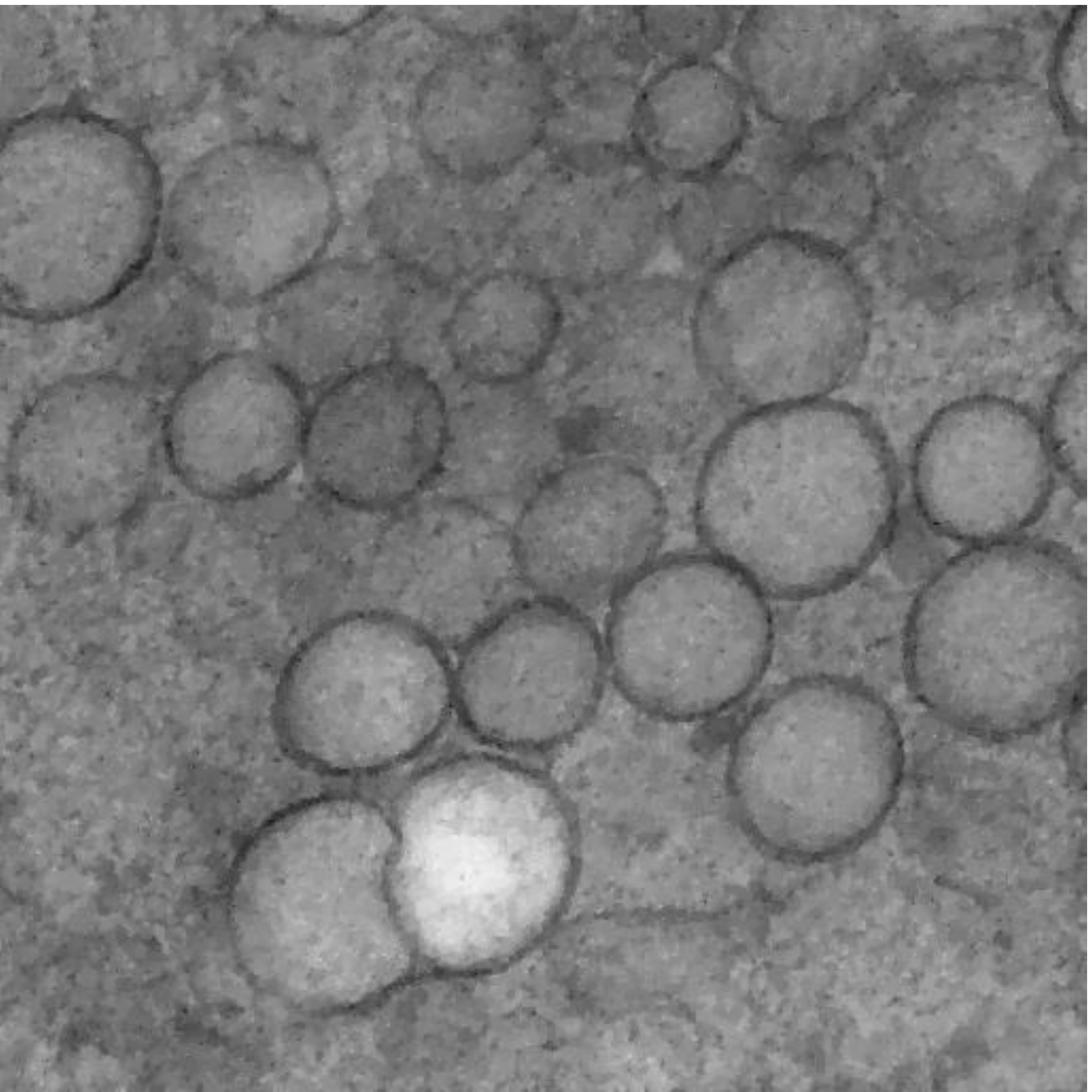}\\
  \hline
  \includegraphics[width=0.32\textwidth]{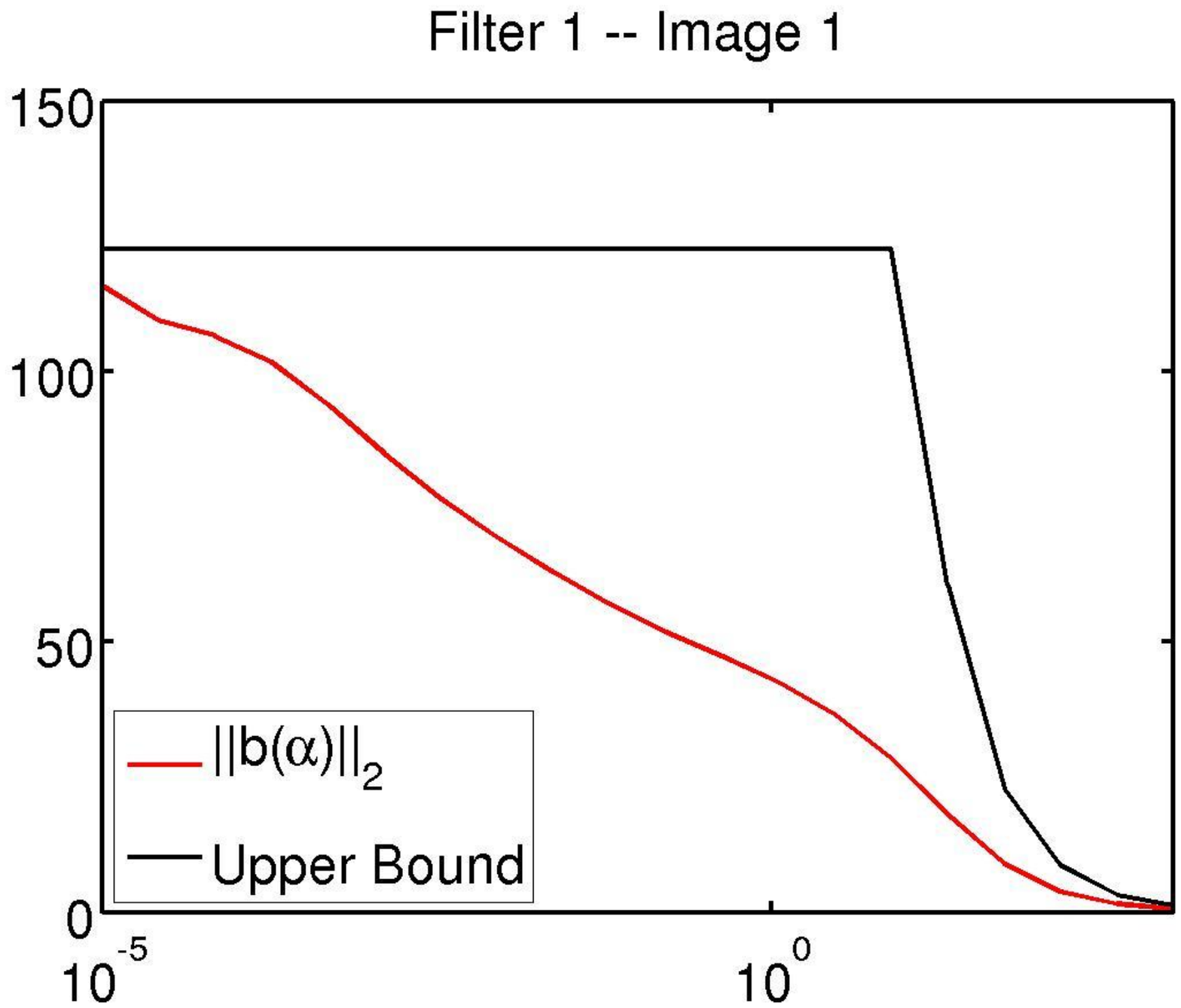}&
  \includegraphics[width=0.32\textwidth]{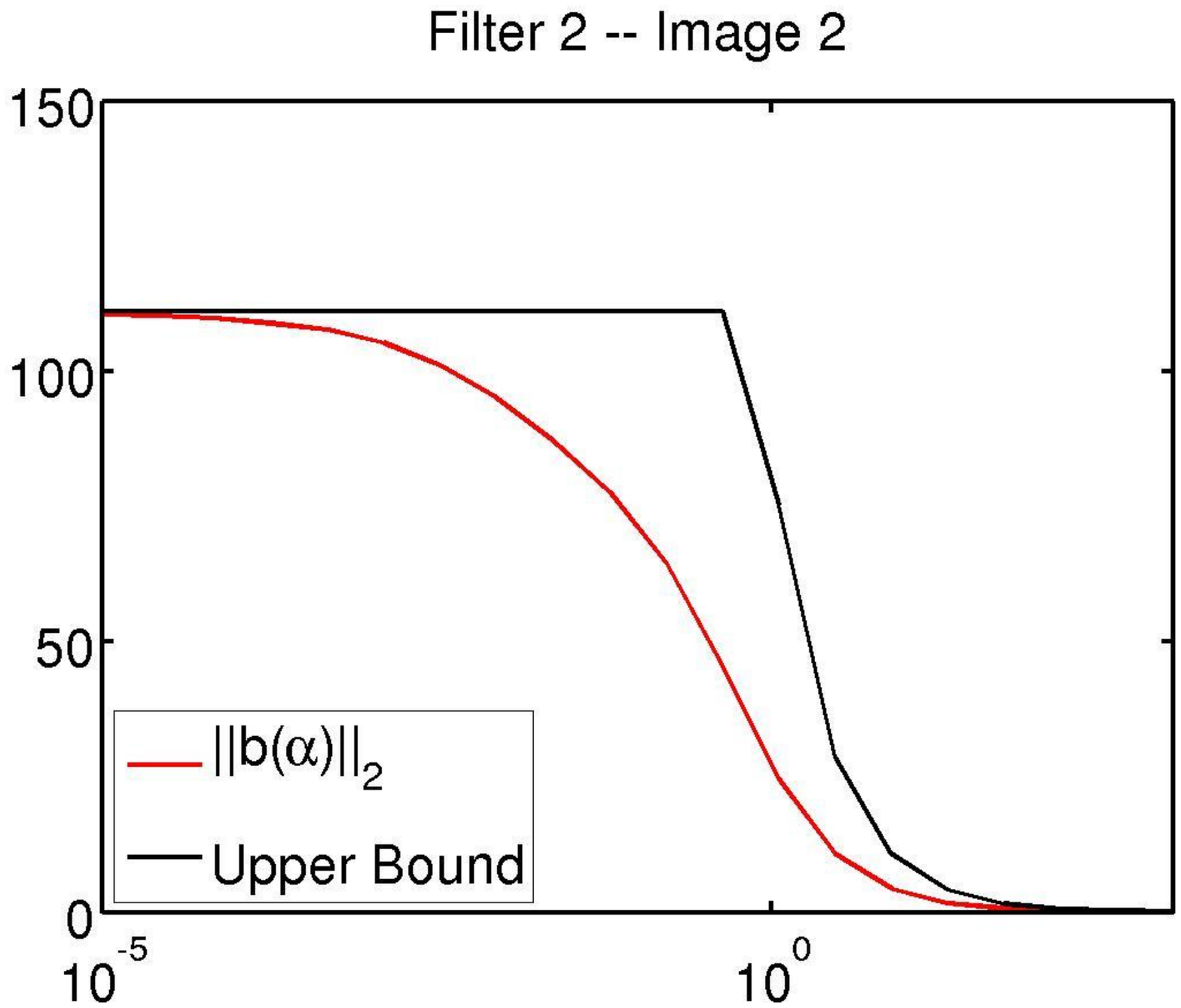}&
  \includegraphics[width=0.32\textwidth]{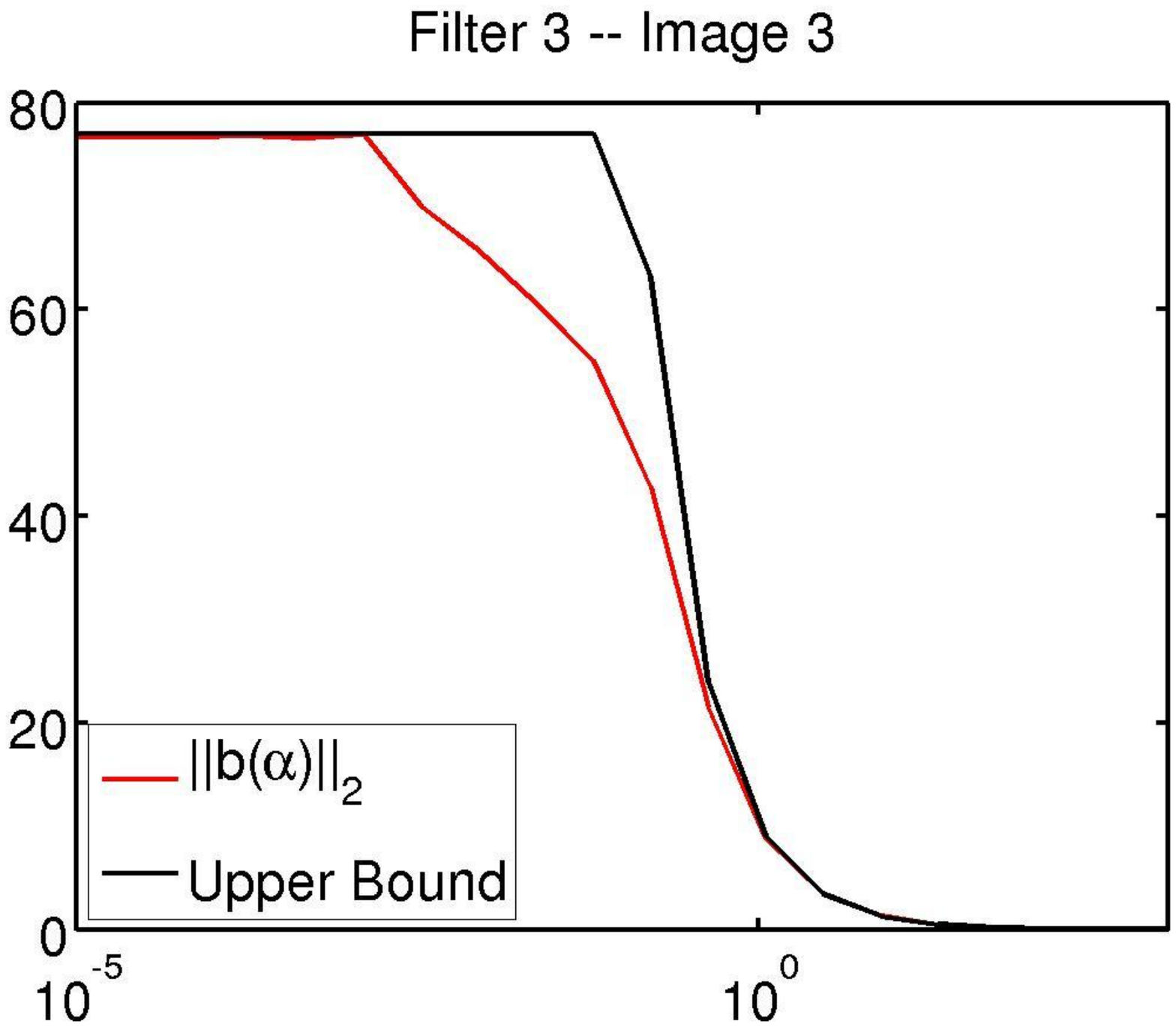}\\
  \hline
  \includegraphics[width=0.32\textwidth]{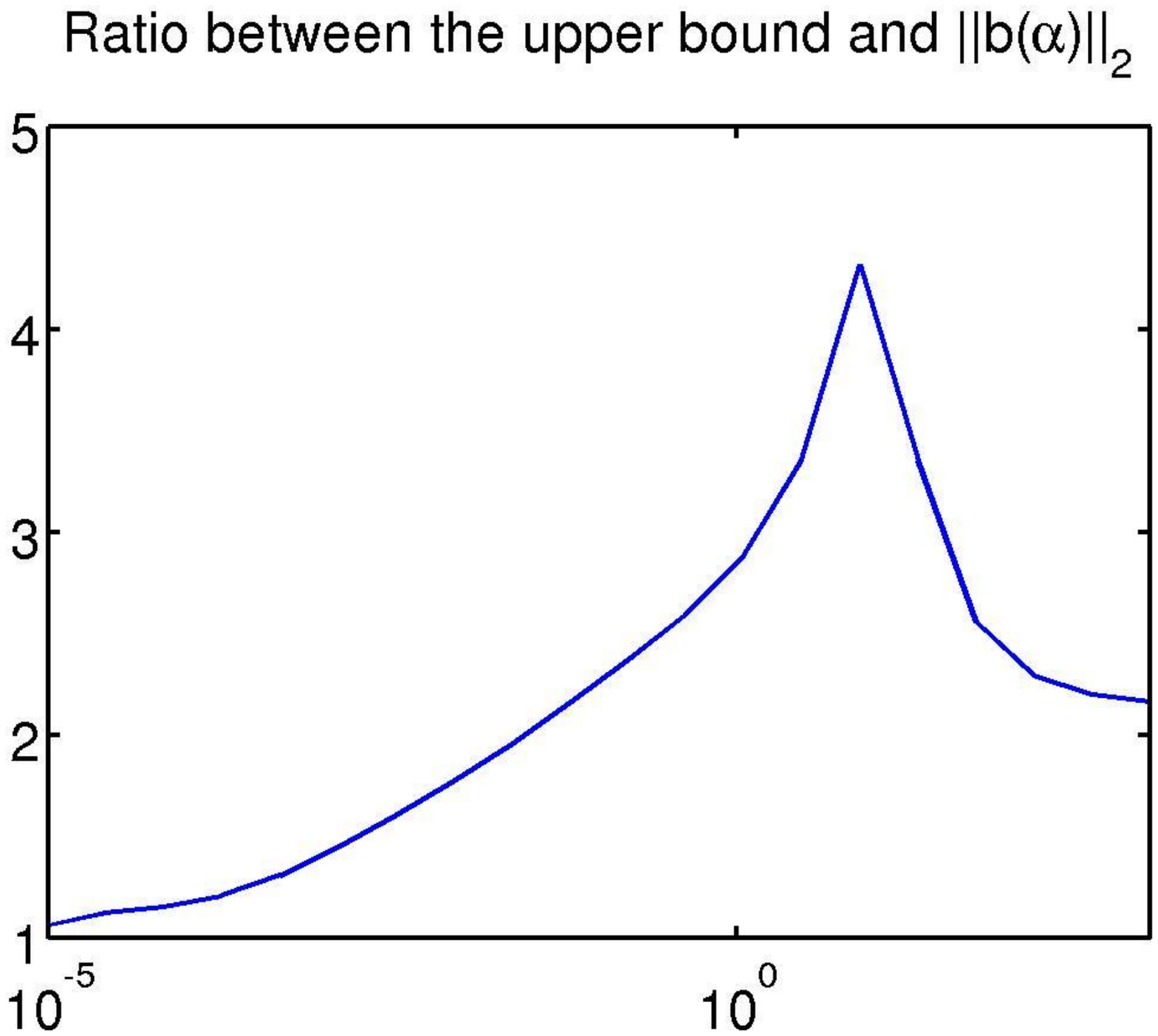}&
  \includegraphics[width=0.32\textwidth]{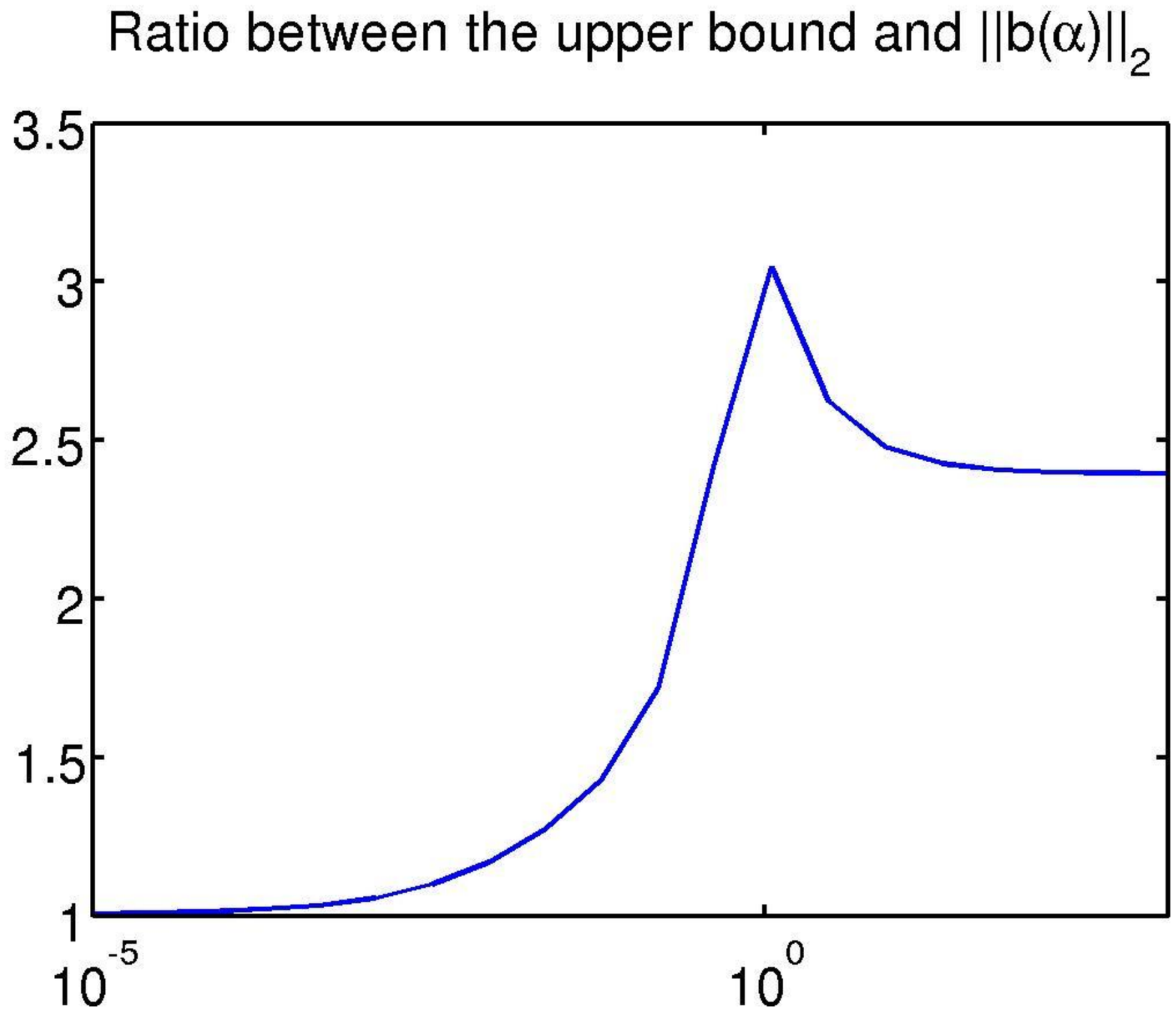}&
  \includegraphics[width=0.32\textwidth]{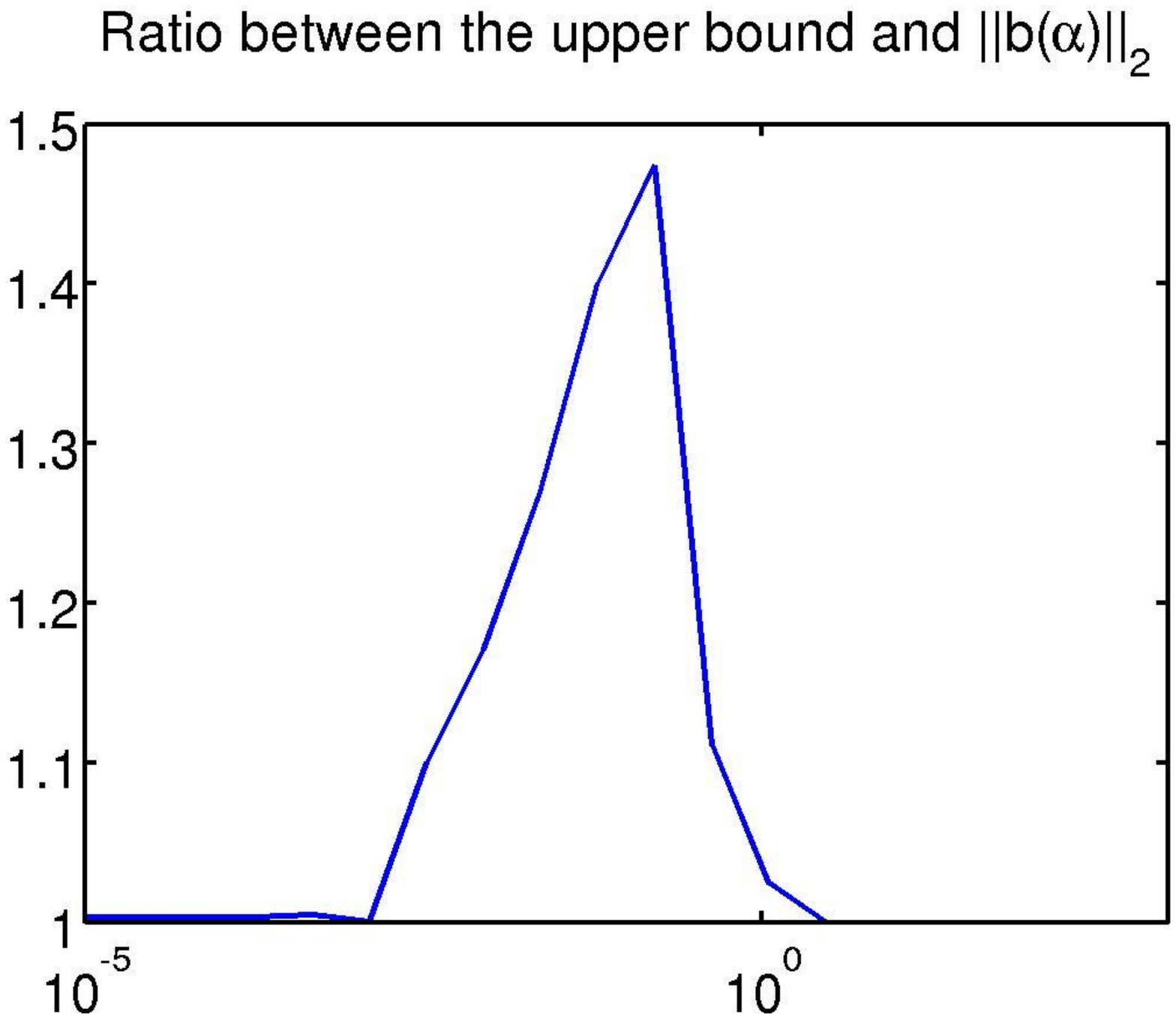}\\
  \hline
  \end{tabular}
\caption{Comparison of the upper bound in equation \eqref{eq:upbound} with $\|b(\alpha)\|_2$. First row: original image. 2nd row: noisy image. 3rd row: denoised using the proposed algorithm. 4th row: comparison of the upper bound \eqref{eq:upbound} with $\|b(\alpha)\|_2$. Last row: ratio between the upper bound and the true value of $\|b(\alpha)\|_2$.\label{fig:comparisons}}
\end{figure}

\section{Appendix}
\label{sec:appendix}

In this section we provide detailed proofs of the  results presented in section \ref{sec:parameterselection}.

\subsection{Proof of Theorem \ref{thm:central}}

Theorem \ref{thm:central} 
is a direct consequence of Lemma \ref{prop:upperbound}
 and proposition \ref{prop:valueofnorm1D2} below.
\begin{lemma}
\label{prop:upperbound}
 Let $\|\cdot\|_N$ denote a norm on $\R^{n}$. The following inequality holds:
\begin{equation}
\|\psi_i \star \lambda_i(\alpha)\|_N  \leq \frac{1}{\alpha_i} \|\Psi_i \Psi_i^T \nabla^T\|_{\iinfty \rightarrow N}.\label{eq:estimatenorm2}
\end{equation}
\end{lemma}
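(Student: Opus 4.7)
The plan is to derive the stated bound by exploiting the primal-dual optimality conditions of problem~\eqref{eq:pb}. The strong convexity of the data-fitting term in $\llambda$ guarantees that a unique minimizer exists; the one piece of information I need to extract is an explicit representation of $\lambda_i^\star$ in terms of the dual variable, and then plug this into the definition of the subordinate operator norm.

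First, I would rewrite the isotropic $\ell^1$ norm using its dual characterization,
\begin{equation*}
\|\nabla v\|_{\boldsymbol 1} \;=\; \max_{\|\qq\|_\iinfty \leq 1} \langle \qq,\nabla v\rangle,
\end{equation*}
turning \eqref{eq:pb} into a saddle-point problem
\begin{equation*}
\min_{\llambda}\ \max_{\|\qq\|_\iinfty \leq 1}\ \Big\langle \qq,\, \nabla u_0 - \sum_{i=1}^m \nabla \Psi_i \lambda_i\Big\rangle + \sum_{i=1}^m \frac{\alpha_i}{2}\|\lambda_i\|_2^2.
\end{equation*}
Since the inner maximum is taken over a compact convex set and the outer problem is strongly convex in $\llambda$, standard convex duality (for instance, Sion's minimax theorem) allows one to swap $\min$ and $\max$ and guarantees the existence of a saddle point $(\llambda^\star,\qq^\star)$ with $\|\qq^\star\|_\iinfty \leq 1$.

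Next, I would write the optimality condition in the $\lambda_i$ variable. Differentiating the saddle Lagrangian with respect to $\lambda_i$ and setting it to zero yields
\begin{equation*}
\alpha_i \lambda_i^\star \;=\; \Psi_i^T \nabla^T \qq^\star,
\qquad \text{i.e.,}\qquad
\lambda_i^\star \;=\; \frac{1}{\alpha_i}\,\Psi_i^T \nabla^T \qq^\star .
\end{equation*}
Multiplying by $\Psi_i$ gives the explicit formula $\psi_i\star\lambda_i^\star = \Psi_i\lambda_i^\star = \frac{1}{\alpha_i}\Psi_i\Psi_i^T\nabla^T\qq^\star$.

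Finally, I would apply the definition \eqref{eq:opnorm} of the subordinate operator norm $\|\cdot\|_{\iinfty\to N}$ together with the dual feasibility $\|\qq^\star\|_\iinfty \le 1$:
\begin{equation*}
\|\psi_i\star\lambda_i^\star\|_N
\;=\; \frac{1}{\alpha_i}\,\|\Psi_i\Psi_i^T\nabla^T\qq^\star\|_N
\;\leq\; \frac{1}{\alpha_i}\,\|\Psi_i\Psi_i^T\nabla^T\|_{\iinfty\to N}\,\|\qq^\star\|_\iinfty
\;\leq\; \frac{1}{\alpha_i}\,\|\Psi_i\Psi_i^T\nabla^T\|_{\iinfty\to N},
\end{equation*}
which is exactly \eqref{eq:estimatenorm2}. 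The only delicate point is the use of strong duality and the existence of a saddle point; this is routine here because the primal objective is coercive and strongly convex in $\llambda$ and the dual constraint set is compact, so Sion's theorem (or an application of Fenchel--Rockafellar) applies directly. Everything else is essentially bookkeeping around the stationarity equation for $\lambda_i$.
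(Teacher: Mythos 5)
Your proof is correct and follows essentially the same route as the paper's: both pass to the saddle-point formulation of \eqref{eq:pb}, extract $\lambda_i(\alpha)=\pm\frac{1}{\alpha_i}\Psi_i^T\nabla^T\qq(\alpha)$ from the stationarity condition in $\lambda_i$, and conclude from $\|\qq(\alpha)\|_{\iinfty}\leq 1$ together with the definition \eqref{eq:opnorm} of the subordinate norm. The sign difference with the paper's primal-dual relation \eqref{eq:primale} is only a convention on the orientation of the dual variable and is immaterial here since the dual ball is symmetric.
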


\begin{proof}
Problem \eqref{eq:pb} can be recast as the following saddle-point problem:
\begin{equation*}
\min_{\llambda\in \R^{n\times m}} \max_{\qq\in \R^{n\times d}, \|\qq\|_{\boldsymbol \infty}\leq 1} \langle \nabla (u_0-\sum_{i=1}^m\lambda_i\star \psi_i) ,\qq\rangle + \sum_{i=1}^m\frac{\alpha_i}{2} \|\lambda_i\|_2^2. 
\end{equation*}
The dual problem obtained using Fenchel-Rockafellar duality \cite{Rockafellar} reads:
\begin{equation}
\label{eq:dual}
\max_{\qq\in \R^{n \times d}, \|\qq\|_{\boldsymbol \infty} \leq 1}  \min_{\llambda\in \R^{n\times m}}  \langle \nabla (u_0- \sum_{i=1}^m\lambda_i\star \psi_i) ,\qq\rangle + \sum_{i=1}^m\frac{\alpha_i}{2} \|\lambda_i\|_2^2. 
\end{equation}

Let $\qq(\alpha)$ denote the solution of the dual problem \eqref{eq:dual}. 
The primal-dual optimality conditions are:
\begin{equation}
\label{eq:primale}
 \lambda_i(\alpha) = -\frac{\Psi_i^T\nabla^T\qq(\alpha)}{\alpha_i}
\end{equation}
and
\begin{equation}
\label{eq:duale}
 \qq(\alpha)=\frac{\nabla (\sum_{i=1}^m \psi_i \star \lambda_i(\alpha)-u_0)}{|\nabla(\sum_{i=1}^m \psi_i \star \lambda_i(\alpha) - u_0)|}.
\end{equation}
The last equality holds only formally since $\nabla(\sum_{i=1}^m \psi_i \star \lambda_i(\alpha) - u_0)$  may vanish at some locations. 
It means that $\qq$ represents the normal to the level curves of the denoised image $u_0-\sum_{i=1}^m\psi_i \star \lambda_i $.

Using \eqref{eq:primale}, we obtain $\psi_i\star \lambda_i(\alpha)= -\frac{1}{\alpha_i} \Psi_i \Psi_i^T \nabla^T \qq(\alpha)$.
Moreover, $\|\qq(\alpha)\|_\iinfty\leq 1$. 
It then suffices to use the norm operator definition \eqref{eq:opnorm} to obtain inequality \eqref{eq:estimatenorm2}.
\end{proof}

In order to use inequality \eqref{eq:estimatenorm2} for practical purposes, one needs to estimate upper bounds for $\|\cdot\|_{\iinfty, N}$. 
 Unfortunately, it is known to be a hard mathematical problem as shown in \cite{rohn2000computing,hendrickx2010matrix}.
The special case $N=2$, which corresponds to a Gaussian noise assumption, can be treated analytically:

\begin{proposition}
\label{prop:valueofnorm1D2}
Let $\hh_i= \begin{pmatrix} h_{i,1} \\ \vdots \\ h_{i,d}\end{pmatrix}$ with $h_{i,k}=\psi_i\star \tilde \psi_i \star \tilde d_k$. Then:
\begin{align}
\|\Psi_i \Psi_i^T \nabla^T\|_{\iinfty \rightarrow 2} &= \sqrt{n} \|\hat \hh_i\|_\infty \nonumber\\
&=\sqrt{n} \max_{k\in \{1, \dots, d\}} \|\hat h_{i,k}\|_\infty. \label{eq:ineqconv1D2}
\end{align}
\end{proposition}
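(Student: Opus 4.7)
The plan is to move everything to the Fourier domain (where the convolutions diagonalize), bound the resulting bilinear form by a pointwise-in-frequency Cauchy--Schwarz, and then exhibit a vector field $\qq^\star$ that saturates every inequality.

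First, I would collapse the composite operator into a single sum of convolutions. Writing $\nabla^T\qq = \sum_{k=1}^d \tilde d_k \star q_k$ and $\Psi_i\Psi_i^T\cdot = (\psi_i\star\tilde\psi_i)\star\cdot$, the definition $h_{i,k} = \psi_i\star\tilde\psi_i\star\tilde d_k$ gives
\begin{equation*}
\Psi_i\Psi_i^T \nabla^T \qq \;=\; \sum_{k=1}^d h_{i,k}\star q_k.
\end{equation*}
Parseval's identity (with the paper's convention $\|u\|_2^2 = \frac{1}{n}\|\hat u\|_2^2$) then yields
\begin{equation*}
\|\Psi_i\Psi_i^T \nabla^T \qq\|_2^2 \;=\; \frac{1}{n}\sum_\xi \Bigl|\sum_{k=1}^d \hat h_{i,k}(\xi)\,\hat q_k(\xi)\Bigr|^2.
\end{equation*}

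For the upper bound, I would apply Cauchy--Schwarz pointwise in $\xi$ in the index $k$, bound the resulting factor $\sum_k |\hat h_{i,k}(\xi)|^2$ by its supremum over $\xi$ (which I will call $\|\hat\hh_i\|_\infty^2$ to match the paper's notation), and apply Parseval once more to each coordinate $q_k$. This converts the estimate into $\|\hat\hh_i\|_\infty^2\,\sum_\xx\sum_k q_k(\xx)^2$, at which point the isotropic constraint $\|\qq\|_\iinfty\le 1$ — i.e.\ $\sum_k q_k(\xx)^2\le 1$ for every $\xx$ — caps the spatial sum by $n$. The resulting inequality $\|\Psi_i\Psi_i^T\nabla^T\|_{\iinfty\to 2}\le \sqrt n\,\|\hat\hh_i\|_\infty$ matches the claim.

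For the matching lower bound, I would construct a vector field $\qq^\star$ with $\|\qq^\star\|_\iinfty = 1$ that makes every inequality above an equality. Pick a frequency $\xi^\star$ realising $\|\hat\hh_i\|_\infty$, and take $\hat\qq^\star$ supported on $\{\xi^\star,-\xi^\star\}$, with $\hat\qq^\star(\xi^\star)$ collinear with the vector $(\hat h_{i,1}(\xi^\star),\dots,\hat h_{i,d}(\xi^\star))$ (saturating the pointwise Cauchy--Schwarz) and $\hat\qq^\star(-\xi^\star)=\overline{\hat\qq^\star(\xi^\star)}$ (keeping $\qq^\star$ real). Concretely, this amounts to $q_k^\star(\xx)=a_k\cos(2\pi\xi^\star\!\cdot\!\xx/n+\phi_k)$ for suitably chosen amplitudes $a_k$ and phases $\phi_k$.

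The delicate step is this saturating construction: three requirements must be reconciled simultaneously --- concentration of $\hat\qq^\star$ at a single frequency (so that the bound $\|\hat\hh_i\|_\infty$ is attained), Hermitian symmetry forced by real-valuedness of $\qq^\star$, and the pointwise-in-$\xx$ constraint $\sum_k q_k^\star(\xx)^2 = 1$, which couples the $d$ components nontrivially. When $\xi^\star$ is self-conjugate (e.g.\ $\xi^\star=0$ or the Nyquist frequency), the construction is essentially immediate; in the generic case one must balance cosine and sine contributions across the $d$ components so that $\sum_k q_k^\star(\xx)^2$ is constant in $\xx$. Once such a $\qq^\star$ is in hand, tracing through Step~2 with every inequality an equality yields the claimed identity.
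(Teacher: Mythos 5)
Your upper bound is sound and is essentially the paper's own argument: relax the pointwise constraint $\|\qq\|_\iinfty\le 1$ to the global one $\|\qq\|_{\boldsymbol 2}\le\sqrt n$ and pass to Fourier (whether the last step is phrased as a pointwise Cauchy--Schwarz in $k$ or as an operator norm over the Fourier ball is immaterial). The gap is in the saturation step, and it is not something you can ``balance'' away. Write $q_k^\star(\xx)=\mathrm{Re}\bigl(c_k e^{i\theta(\xx)}\bigr)$ with $\theta(\xx)=2\pi\langle\xi^\star,\xx\rangle/n$. Saturating the pointwise Cauchy--Schwarz at $\xi^\star$ forces $c_k=c\,\overline{\hat h_{i,k}(\xi^\star)}$ for a single constant $c$, and then $\sum_k q_k^\star(\xx)^2=\tfrac12\sum_k|c_k|^2+\tfrac12\mathrm{Re}\bigl(e^{2i\theta(\xx)}\sum_k c_k^2\bigr)$ with $\sum_k c_k^2=c^2\,\overline{\sum_k\hat h_{i,k}(\xi^\star)^2}$. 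The oscillating term vanishes only if $\sum_k\hat h_{i,k}(\xi^\star)^2=0$, which fails in general: for $d=1$ there is a single term and nothing to balance, and for $d=2$ it fails whenever the maximizing frequency has a vanishing component, since then only one $\hat d_k(\xi^\star)$ is nonzero. So collinearity and constant pointwise modulus are incompatible; your $\qq^\star$ has $\|\qq^\star\|_{\boldsymbol 2}^2$ of order $n/2$ rather than $n$, and the construction only yields a lower bound off by a factor of order $\sqrt2$. Indeed, for \emph{real} test fields the exact equality with the isotropic quantity $\max_\xi\bigl(\sum_k|\hat h_{i,k}(\xi)|^2\bigr)^{1/2}$ can genuinely fail (e.g.\ $d=1$ with $\hat h$ concentrated at a non-self-conjugate frequency), so no refinement of this construction can close the argument.

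The paper's lower bound takes a different route: it tests single Fourier modes placed in a single component, $q_k=f_j$ and $q_{k'}=0$ for $k'\neq k$, using $\|f_j\|_\infty=1$, which gives $\|h_{i,k}\star f_j\|_2=\sqrt n\,|\hat h_{i,k}(j)|$ and hence the bound $\sqrt n\max_k\|\hat h_{i,k}\|_\infty$. Note that this is the \emph{entrywise} supremum appearing in the second line of the proposition, not your isotropic reading of $\|\hat\hh_i\|_\infty$; the two differ by a factor up to $\sqrt d$, and the paper's argument also quietly uses complex unimodular test vectors. Since the only thing used downstream (Theorem \ref{thm:central}) is the inequality $\le$, your Step~2 already suffices for that purpose; but as a proof of the stated equality your argument does not close, and you should either adopt the paper's single-mode, single-component test vectors or weaken the claim to a two-sided estimate.
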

\begin{proof}
First remark that:
\begin{align*}
\|\Psi_i \Psi_i^T \nabla^T\|_{\iinfty,2}&= \max_{\|\qq\|_\iinfty\leq 1} \|\sum_{k=1}^d h_{i,k}\star q_k\|_2 \\
&\leq \max_{\|\qq\|_2\leq \sqrt{n}} \|\sum_{k=1}^d h_{i,k}\star q_k\|_2 \\
& \leq \sqrt{n} \max_{\sum_{k=1}^d\|\hat q_k\|_2^2 \leq 1} \|\sum_{k=1}^d \hat h_{i,k} \odot \hat q_k\|_2 \\
& = \sqrt{n}\|\hat \hh_i\|_\infty.
\end{align*}
In order to obtain the reverse inequality, let us define 
\begin{equation*}
\mathcal{Q}_k=\{\qq \in \R^{n\times d}, q_k \in \{f_1,\cdots, f_n \} \ \mathrm{ and } \ q_i=0, \ i\in \{1, \cdots, d\} \backslash \{k\}\}
\end{equation*} 
and the Fourier transform of this set which is
\begin{equation*}
\hat{\mathcal{Q}}_k=\{\hat \qq \in \C^{n\times d}, \hat q_k \in \{n e_1,\cdots, ne_n \} \  \mathrm{ and } \ \hat q_i=0,\  i\in \{ 1, \cdots, d \} \backslash \{k\} \}.
\end{equation*} 
Let us denote $\displaystyle \mathcal{Q}=\cup_{k=1}^d Q_k$ and $\displaystyle \hat{\mathcal{Q}}=\cup_{k=1}^d \hat Q_k$ . 
Thus we obtain:
\begin{align*}
 \|\Psi_i \Psi_i^T \nabla^T\|_{\iinfty,2}&= \max_{\|\qq\|_\iinfty\leq 1} \|\sum_{k=1}^d h_{i,k}\star q_k\|_2 \\
 &\geq \max_{ \qq \in \mathcal{Q} } \|\sum_{k=1}^d h_{i,k}\star q_k\|_2 \\
 & = \max_{ \hat \qq \in \hat{\mathcal{Q}}} \frac{\|\sum_{k=1}^d \hat h_{i,k}\odot \hat q_k\|_2}{\sqrt{n}} \\
 & = \sqrt{n}\|\hat \hh_i\|_\infty
\end{align*}
which ends the proof. 
\end{proof}

\subsection{Proof of Theorem \ref{prop:solutionpetitalpha}}

Denote $\Psi=(\Psi_1,\Psi_2,\hdots, \Psi_m)\in R^{n\times nm}$ and assume that $\Psi^T\Psi$ has full rank.
This condition ensures the existence of $\llambda$ satisfying $\sum_{i=1}^m \lambda_i\star \psi_i=u_0-u_0^{\textrm{mean}}$,
where $u_0^{\textrm{mean}}$ denotes the mean of $u_0$. 

\begin{proposition}
\label{prop:caracterizelambda0}
Let $\llambda^0(\alpha)$ denote the solution of the following problem
\begin{equation}
\llambda^0(\alpha)=\argmmin{\sum_{i=1}^m \frac{\alpha_i}{2} \|\lambda_i\|_2^2}{\llambda \in\R^{n\times m} \\ \sum_{i=1}^m \lambda_i\star \psi_i=u_0-u_0^{{\rm mean}}}.
\end{equation}

Then the vector $\hat \llambda^0(\alpha)=(\hat \lambda_1^0,\hdots,\hat \lambda_m^0)$ is equal to:
\begin{equation}
\label{eq:deflambda0}
\hat \lambda_i^0(\xi)= 
\left\{
\begin{array}{ll}
0 & \textrm{if \ } \xi=0 \\
\frac{ \bar{\hat{\psi_i}}(\xi) \hat{u_0}(\xi)}{\alpha_i \sum_{j=1}^m \frac{|\hat \psi_j(\xi)|^2}{\alpha_j}} & \textrm{otherwise}.
\end{array}
\right.
\end{equation}
\end{proposition}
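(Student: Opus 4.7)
The plan is to pass to the Fourier domain, where both the objective and the affine constraint decouple frequency by frequency, reducing the problem to a family of independent least-norm problems in $\C^m$ that can be solved by Lagrange multipliers.

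First I would rewrite the problem in Fourier. By Parseval's identity $\|\lambda_i\|_2^2=\frac{1}{n}\|\hat\lambda_i\|_2^2$, and since convolution is diagonalized by $\FF$, the constraint $\sum_i \lambda_i\star \psi_i = u_0 - u_0^{\rm mean}$ becomes, pointwise in $\xi$,
\begin{equation*}
\sum_{i=1}^m \hat\psi_i(\xi)\,\hat\lambda_i(\xi) \;=\; c(\xi), \qquad c(\xi)=\begin{cases} 0 & \text{if }\xi=0,\\ \hat{u_0}(\xi) & \text{if }\xi\neq 0,\end{cases}
\end{equation*}
using the fact that subtracting the mean kills only the zero-frequency coefficient. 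Thus the problem decouples as a separate convex quadratic problem for each $\xi$:
\begin{equation*}
\min_{(z_1,\ldots,z_m)\in\C^m}\;\frac{1}{2n}\sum_{i=1}^m \alpha_i |z_i|^2 \quad\text{s.t.}\quad \sum_{i=1}^m \hat\psi_i(\xi)\,z_i = c(\xi).
\end{equation*}

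Next I would solve each of these $n$ problems by Lagrange multipliers. Introducing a scalar multiplier $\mu(\xi)\in\C$, stationarity gives $\alpha_i z_i = \bar{\hat\psi_i}(\xi)\,\mu(\xi)$, i.e.\ $z_i = \bar{\hat\psi_i}(\xi)\mu(\xi)/\alpha_i$. Plugging this into the constraint yields
\begin{equation*}
\mu(\xi)\sum_{j=1}^m \frac{|\hat\psi_j(\xi)|^2}{\alpha_j} = c(\xi).
\end{equation*}
The full-rank hypothesis on $\Psi^T\Psi$ is precisely that for every $\xi$ there exists $i$ with $\hat\psi_i(\xi)\neq 0$, so with $\alpha_j>0$ the denominator is strictly positive for all $\xi$. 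One can therefore solve for $\mu(\xi)$ and substitute back, obtaining the stated closed form for $\xi\neq 0$ and $\hat\lambda_i^0(0)=0$ at $\xi=0$ (because $c(0)=0$).

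Finally, I would note that the objective is strictly convex and the feasible set is nonempty (again by full rank) and affine, so a unique minimizer exists and the KKT conditions above fully characterize it. I do not expect any serious obstacle: the only point requiring care is the treatment of $\xi=0$, where the formula in \eqref{eq:deflambda0} substitutes $\hat{u_0}(\xi)$ by $0$, which is consistent with the fact that the target $u_0-u_0^{\rm mean}$ has zero frequency component equal to zero (and is what a direct application of the Lagrange formula to $c(0)=0$ produces).
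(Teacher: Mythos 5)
Your proposal is correct and follows essentially the same route as the paper: diagonalize the constraint and the objective in the Fourier domain, decouple into $n$ independent quadratic problems of size $m$, handle $\xi=0$ via the zero-mean observation, and solve each remaining problem in closed form. The only difference is that you make explicit the Lagrange-multiplier computation that the paper dismisses as ``straightforward,'' which is a welcome addition but not a different argument.
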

\begin{proof}
First notice that the full rank hypothesis on $\Psi^T\Psi$ is equivalent to assuming that $\forall \xi,\ \exists i\in\{1,\hdots,m\},\ \hat\psi_i(\xi) \neq 0$ since $\Psi_i=\FF^{-1}\diag(\hat \psi_i)\FF$. Then:
\begin{align*}
\llambda^0(\alpha)&=\argmmin{\sum_{i=1}^m \frac{\alpha_i}{2} \|\lambda_i\|_2^2}{\llambda \in\R^{n\times m} \\ \sum_{i=1}^m \lambda_i\star \psi_i=u_0-u_0^{\textrm{mean}}} \\
&=\argmmin{\sum_{i=1}^m \frac{\alpha_i}{2} \|\hat \lambda_i\|_2^2}{\llambda \in\R^{n\times m} \\ \sum_{i=1}^m \hat \lambda_i\odot \hat \psi_i= \widehat{u_0-u_0^{\textrm{mean}}}}.
\end{align*}
This problem can be decomposed as $n$ independent optimization problems of size $m$.
If  $\xi= 0$, it remains to observe that $\widehat{u_0-u_0^{\textrm{mean}}}(0)=0$ since $u_0-u_0^{\textrm{mean}}$ has zero mean.
For $\xi\neq 0$, this amounts to solve the $m$ dimensional quadratic problem:
\begin{equation}
\argmin_{\hat \llambda(\xi) \in \C^m} \sum_{i=1}^m\frac{\alpha_i}{2} |\hat \lambda_i(\xi)|_2^2 \qquad \textrm{such that} \quad \sum_{i=1}^m \hat \psi_i(\xi) \hat \lambda_i(\xi)=\hat u_0(\xi).
\end{equation}
It is straightforward to derive the solution \eqref{eq:deflambda0} analytically.
\end{proof}

\begin{lemma}\label{lem:maj}
If $\hat{\psi_i}(\xi)=0$ then
$\hat{\lambda_i^0}(\alpha)(\xi)=0$
and if $\hat{\psi_i}(\xi)\neq 0$ then
$|\hat{\lambda_i^0}(\alpha)(\xi)|\le \left|\dfrac{\hat{u_0}(\xi)}{\hat{\psi_i}(\xi)}\right|$.
Therefore, for every $\alpha$, $\|\lambda_i^0(\alpha)\|_2 \le \|\hat{u_0}\oslash\hat{\psi_i}\|_2$ (with the convention to replace by 0 the terms where the denominator vanishes).
\end{lemma}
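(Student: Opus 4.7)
The plan is to read both statements directly off the explicit pointwise formula \eqref{eq:deflambda0} for $\hat\lambda_i^0(\alpha)(\xi)$ provided by Proposition \ref{prop:caracterizelambda0}, treating each frequency $\xi$ independently. The lemma is essentially an algebraic inequality for a single fraction, so no new machinery is required; the only care needed concerns the degenerate frequencies.

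First I would dispose of the vanishing claim. For $\xi = 0$, the formula \eqref{eq:deflambda0} sets $\hat\lambda_i^0(\alpha)(0) = 0$ by definition. For $\xi \neq 0$ with $\hat\psi_i(\xi) = 0$, the numerator $\bar{\hat\psi_i}(\xi)\,\hat u_0(\xi)$ of \eqref{eq:deflambda0} vanishes, and since the full-rank hypothesis on $\Psi^T\Psi$ ensures that the denominator $\alpha_i \sum_j |\hat\psi_j(\xi)|^2/\alpha_j$ is strictly positive, we conclude $\hat\lambda_i^0(\alpha)(\xi) = 0$.

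For the pointwise upper bound, assume $\xi \neq 0$ and $\hat\psi_i(\xi) \neq 0$. The key observation is that the denominator of \eqref{eq:deflambda0} is bounded from below simply by keeping the single term $j = i$ of the sum, yielding
$$\alpha_i \sum_{j=1}^m \frac{|\hat\psi_j(\xi)|^2}{\alpha_j} \;\ge\; |\hat\psi_i(\xi)|^2.$$
Substituting this into the modulus of \eqref{eq:deflambda0} collapses the $\alpha$-dependence entirely and gives the announced estimate $|\hat\lambda_i^0(\alpha)(\xi)| \le |\hat u_0(\xi)/\hat\psi_i(\xi)|$, uniformly in $\alpha$.

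Finally the global $l^2$ bound follows by squaring the pointwise bound, summing over $\xi$, and applying Parseval's identity (with the paper's normalization $\|\hat u\|_2 = \sqrt{n}\|u\|_2$) to pass from the frequency side back to $\|\lambda_i^0(\alpha)\|_2$. The convention of replacing $\hat u_0(\xi)/\hat\psi_i(\xi)$ by $0$ when $\hat\psi_i(\xi) = 0$ matches exactly the corresponding vanishing of $\hat\lambda_i^0(\alpha)(\xi)$, so the sum is well-defined and the inequality is preserved term by term. There is no real obstacle here: the whole content of the lemma is the elementary observation that a term in a positive sum is bounded by the whole sum, and its purpose is simply to record that the family $\{\lambda_i^0(\alpha)\}_\alpha$ is uniformly $l^2$-bounded, which will be used in the proof of Theorem \ref{prop:solutionpetitalpha}.
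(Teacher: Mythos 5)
Your proof is correct and follows exactly the route the paper intends: the paper's own proof is the one-line remark that the lemma is ``a direct consequence of Equation \eqref{eq:deflambda0}'', and your argument simply fills in the elementary details (keeping the $j=i$ term of the denominator, which cancels the $\alpha_i$, and then summing over $\xi$ with the paper's Fourier normalization). Nothing to change.
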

\begin{proof}
 It is a direct consequence of Equation \eqref{eq:deflambda0}.
\end{proof}

\begin{proof} of Theorem \ref{prop:solutionpetitalpha}
Let $F_\alpha(\llambda)=G(\llambda)+\sum_{i=1}^m\frac{\alpha_i}{2}\|\lambda_i\|_2^2$ with $G(\llambda)=\|(\nabla (\Psi\llambda - u_0) \|_{\boldsymbol 1}$. 
The objective is to prove that $\partial F_\alpha(\llambda^0(\alpha))\ni 0$ for sufficiently small $\alpha$.
Denote $C=\{\beta \one_{\R^n}, \beta \in \R\}$ the space of constant images.
Since $\Ker(\nabla)=C$ and $\Psi\llambda^0(\alpha)-u_0\in C$, $\nabla (\Psi\llambda^0 - u_0)=0$.
Standard results of convex analysis lead to
\begin{align*}
\partial G(\llambda^0(\alpha))&=\Psi^T\nabla^T \partial_{\|\cdot\|_{\boldsymbol{1}}}(0) \\
&= \Psi^T\nabla^T Q
\end{align*}
where $Q=\{\qq\in \R^{n\times d}, \|\qq(\xx)\|_2\leq 1, \ \forall \xx\in \Omega\}$ is the unit ball associated to the dual norm $\|\cdot\|_{\boldsymbol{1}}^*$.
Since $\Ran(\nabla^T)=\Ker(\nabla)^\perp$ we deduce $\Ran (\nabla^T) = C^{\perp}$ is the set of images with zero mean. Therefore, since $Q$ has non-empty interior, there exists $\gamma >0$ such that $\nabla^T Q\supset B(0,\gamma)\cap C^{\perp}$ where $B(0,\gamma)$ denotes a Euclidean ball of radius $\gamma$.
Therefore
\begin{align*}
(\partial F_\alpha(\llambda^0(\alpha)))_i &= (\partial G(\llambda^0(\alpha)))_i + \alpha_i \lambda^0_i(\alpha)\\
&= (\Psi^T \nabla^T Q)_i + \alpha_i \lambda^0_i(\alpha) \\
&\supset (\Psi^T (B(0,\gamma)\cap C^{\perp}))_i + \alpha_i \lambda^0_i(\alpha).
\end{align*}
Note that
\begin{equation*}
\Psi^T (B(0,\gamma)\cap C^{\perp})= \Psi_1^T (B(0,\gamma)\cap C^{\perp}) \times \hdots \times \Psi_m^T (B(0,\gamma)\cap C^{\perp}).
\end{equation*}
Since convolution operators preserve $C^\perp$  we obtain:
\begin{equation*}
\Psi_i^T (B(0,\gamma)\cap C^{\perp})\supset B(0,\gamma_i) \cap \Ran(\Psi_i) \cap C^\perp \quad \textrm{for some} \quad \gamma_i>0.
\end{equation*}
Now, it remains to remark that proposition \ref{prop:caracterizelambda0} ensures
\begin{equation*}
\llambda^0(\alpha)\in (\Ran(\Psi_1) \cap C^\perp) \times \hdots \times (\Ran(\Psi_m) \cap C^\perp).
\end{equation*} 
Therefore for $\alpha_i\leq \frac{\gamma_i}{\|\hat{u_0}\oslash\hat{\psi_i}\|_2}$ 
\begin{align*}
(\partial F_\alpha(\llambda^0))_i \supset (B(0,\gamma_i) \cap \Ran(\Psi_i) \cap C^\perp) + \alpha_i \lambda_i^0(\alpha) \ni 0.
\end{align*}
In view of Lemma \ref{lem:maj} it suffices to set $\displaystyle \bar \alpha = \min_{i\in \{1,\hdots, m\}}\frac{\gamma_i}{\|\hat{u_0}\oslash\hat{\psi_i}\|_2}$ to conclude the proof.
\end{proof}


\subsection{Proof of proposition \ref{eq:propminor}}


We now concentrate on problem \eqref{eq:prob1filtre} in the case of $m=1$ filter and provide a lower-bound on $\|b(\alpha)\|_2$. We assume that $\Psi$ in invertible, meaning that $\hat \psi$ does not vanish. 


The dual problem of 
\begin{equation}
\min_{\lambda\in \R^{n}} \|\nabla (u_0- \lambda \star \psi) \|_1 + \frac{\alpha}{2} \|\lambda\|_2^2
\label{eq:pp}
\end{equation}
is 
\begin{equation}
\max_{\qq\in \R^{n\times d}, \|\qq\|_\iinfty\leq 1} \langle \nabla u_0,\qq\rangle - \frac{1}{2\alpha} \|\Psi^T\nabla^T \qq\|_2^2.
\label{eq:pd}
\end{equation}
The solution $\lambda(\alpha)$ of \eqref{eq:pp} can be deduced from the solution $\qq(\alpha)$ of \eqref{eq:pd} by using the primal-dual relationship 
\begin{equation}
\lambda(\alpha)= -\frac{1}{\alpha}\Psi^T\nabla^T \qq(\alpha).
\end{equation}

%
%
We can write:
\begin{align}
&\argmax_{\qq\in \R^{n\times d}, \|\qq\|_\iinfty\leq 1} \langle \nabla u_0,\qq\rangle - \frac{1}{2\alpha} \|\Psi^T\nabla^T \qq\|_2^2 \\
&= \argmin_{\qq\in \R^{n\times d}, \|\qq\|_\iinfty\leq 1} \frac{1}{2}\|\Psi^T\nabla^T \qq - \alpha \Psi^{-1}u_0\|_2^2. \label{eq:pdd}
\end{align}

Let $P_1$ denote the orthogonal projector on $\Ran(\Psi^T\nabla^T)$ and $P_2$ denote the orthogonal projector on $\Ran(\Psi^T\nabla^T)^\perp$.
Using these operators, we can write $\alpha \Psi^{-1}u_0= \alpha b_1 + \alpha b_2$ where $b_1=P_1\Psi^{-1}u_0$ and $b_2=P_2\Psi^{-1}u_0$. Problem \eqref{eq:pdd} becomes:
\begin{equation*}
\qq(\alpha) = \argmin_{\qq\in \R^{n\times d}, \|\qq\|_\iinfty\leq 1} \frac{1}{2}\|\Psi^T\nabla^T \qq - \alpha b_1\|_2^2.
\end{equation*}

Let us denote $A=\Psi^T\nabla^T$ and $\qq'(\alpha)=\frac{A^+b_1}{\|A^+b_1\|_\iinfty}$. Since $b_1\in \Ran(A)$, $A\qq'(\alpha)=\frac{b_1}{\|A^+b_1\|_\iinfty}$.
Thus as long as $\|A^+ b_1\|_\iinfty\geq \frac{1}{\alpha}$:
\begin{align*}
| \|A\qq(\alpha)\| - \alpha \|b_1\|_2| &\leq \|A\qq(\alpha) - \alpha b_1\| \\ 
&= \min_{\|\qq\|_\iinfty \leq 1} \|A\qq - \alpha b_1\|_2 \\
&\leq \|A\qq'(\alpha) - \alpha b_1\|_2 \\
&= \|\frac{b_1}{\|A^+ b_1\|_\iinfty} - \alpha b_1\|_2 \\
&= \left(\alpha - \frac{1}{\|A^+b_1\|_\iinfty}\right) \|b_1\|_2.
\end{align*}

Since $A\qq(\alpha)$ is a projection of $\alpha b_1$ on a convex set that contains the origin, $\alpha \|b_1\|_2\geq \|A\qq(\alpha)\|_2$ and we get:
\begin{equation*}
\alpha\|b_1\|_2 - \|A\qq(\alpha)\|_2 \leq \left(\alpha - \frac{1}{\|A^+b_1\|_\iinfty}\right) \|b_1\|_2
\end{equation*}
which is equivalent to 
\begin{equation*}
\|A\qq(\alpha)\|_2\geq \frac{b_1}{\|A^+b_1\|_\infty}.
\end{equation*}

Since $\lambda(\alpha)= -\frac{1}{\alpha}A \qq(\alpha)$ we get:
\begin{equation*}
\|\lambda(\alpha)\|_2\geq \frac{1}{\alpha}\frac{\|b_1\|_2}{\|A^+b_1\|_\infty}
\end{equation*}
and since $\lambda=\Psi^{-1}b$ we obtain
\begin{equation*}
||b(\alpha)||_2\ge\frac{1}{\alpha}\|\Psi^{-1}\|_{2\rightarrow 2}^{-1}\frac{\|b_1\|_2}{\|A^+b_1\|_\infty}.
\end{equation*}

\subsection{Proof of Theorem \ref{eq:corollambda}}

Theorem \ref{eq:corollambda} is a simple consequence of a more general result described below.

Let $F$ be a convex lower semi-continuous (l.s.c.) function and $\Psi~:~\R^n~\to~\R^n$, $\Psi_i:\R^n\to\R^n,i=1\ldots m$ denote linear operators.
Define:
\begin{equation}\label{eq:P1}\tag{$P_1$}
(\overline{\lambda_i})_{1\le i\le m}=\argmin_{(\lambda_i)_{1\le i\le m}\in(\R^n)^m}F\left(\sum_{i=1}^m\Psi_i\lambda_i\right)+\dfrac{1}{2}\sum_{i=1}^m\|\lambda_i\|^2,
\end{equation}
and
\begin{equation}\label{eq:P2}\tag{$P_2$}
\overline{\lambda}=\argmin_{\lambda\in\R^n}F(\Psi\lambda)+\dfrac{1}{2}\|\lambda\|^2.
\end{equation}
\begin{proposition}
\label{prop:mdonne1}
If the operators $\Psi$ and $(\Psi_i)_{i=1\ldots m}$ satisfy the relation
$$\Psi\Psi^*=\sum_{i=1}^m\Psi_i\Psi_i^*,$$
then the solutions $(\overline{\lambda_i})_{1\le i\le m}$ of \eqref{eq:P1} and $\overline{\lambda}$ of \eqref{eq:P2} are related by
$$\Psi\overline{\lambda}=\sum_{i=1}^m\Psi_i\overline{\lambda_i}.$$
\end{proposition}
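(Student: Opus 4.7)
The plan is to compute the Fenchel--Rockafellar dual of each primal problem, observe that the hypothesis $\Psi\Psi^*=\sum_{i=1}^m \Psi_i\Psi_i^*$ forces the two duals to coincide, and then recover the desired identity from the primal--dual optimality relations.

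First I would introduce a dual variable $q$ for the term $F$ by writing $F(z)=\sup_q \langle z,q\rangle - F^*(q)$ and swap the $\min$ and $\max$ using strong duality (justified since each primal is the sum of a proper convex l.s.c.\ composite term and a coercive strongly convex quadratic, so there is no duality gap). For \eqref{eq:P2} the inner minimization in $\lambda$ is the unconstrained quadratic $\min_\lambda \langle \lambda,\Psi^*q\rangle + \tfrac12\|\lambda\|^2$, solved by $\lambda=-\Psi^*q$ with optimal value $-\tfrac12\langle q,\Psi\Psi^*q\rangle$. Hence the dual of \eqref{eq:P2} reads
\begin{equation*}
\max_{q\in\R^n}\; -F^*(q) - \tfrac12\langle q,\Psi\Psi^*q\rangle.
\end{equation*}
Performing the same computation for \eqref{eq:P1}, the minimization decouples across $i$ and produces $\lambda_i=-\Psi_i^*q$ together with the dual
\begin{equation*}
\max_{q\in\R^n}\; -F^*(q) - \tfrac12\sum_{i=1}^m\langle q,\Psi_i\Psi_i^*q\rangle.
\end{equation*}

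Under the assumption $\Psi\Psi^*=\sum_{i=1}^m \Psi_i\Psi_i^*$ these two dual problems are literally the same optimization problem, so they admit at least one common maximizer $\bar q$. The primal--dual optimality conditions obtained from the saddle--point formulation then give $\overline{\lambda}=-\Psi^*\bar q$ for \eqref{eq:P2} and $\overline{\lambda_i}=-\Psi_i^*\bar q$ for \eqref{eq:P1}. Applying $\Psi$ and $\Psi_i$ respectively and summing yields
\begin{equation*}
\Psi\overline{\lambda} = -\Psi\Psi^*\bar q = -\sum_{i=1}^m \Psi_i\Psi_i^*\bar q = \sum_{i=1}^m \Psi_i\overline{\lambda_i},
\end{equation*}
which is the claim. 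Note that although $\bar q$ need not be unique (when $\Psi\Psi^*$ has a kernel intersecting $\partial F^*$ directions), the quantities $\Psi\overline\lambda$ and $\sum_i \Psi_i\overline{\lambda_i}$ are uniquely determined by the strong convexity of the primals in $\lambda$ and $(\lambda_i)$, so the equality is unambiguous.

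The only delicate point is the justification of strong duality and of the primal--dual relations, but this is routine: $F$ is proper convex l.s.c., the quadratic regularizers are finite everywhere, and the primals are coercive, so Fenchel--Rockafellar duality applies without any constraint qualification beyond properness. Once this is in place, the rest of the argument is purely algebraic and uses the hypothesis $\Psi\Psi^*=\sum_i\Psi_i\Psi_i^*$ exactly once, to identify the two dual objectives.
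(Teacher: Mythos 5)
Your argument is correct, and it reaches the conclusion by a genuinely different route from the paper. The paper stays on the primal side: it writes the optimality conditions $\Psi^*\partial F(\Psi\overline{\lambda})+\overline{\lambda}\ni 0$ and $\PPsi^*\partial F(\PPsi\overline{\llambda})+\overline{\llambda}\ni 0$ with $\PPsi=(\Psi_1,\ldots,\Psi_m)$, uses $\Psi\Psi^*=\PPsi\PPsi^*$ to get $\Ran\Psi=\Ran\PPsi$, picks $\llambda_0$ with $\PPsi\llambda_0=\Psi\overline{\lambda}$, and corrects $\llambda_0$ by an element of $\Ker\PPsi$ so that it satisfies the optimality condition of \eqref{eq:P1}. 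You instead dualize both problems and observe that the hypothesis makes the two dual objectives $-F^*(q)-\tfrac12\langle q,\Psi\Psi^* q\rangle$ literally identical, after which the primal--dual relations $\overline{\lambda}=-\Psi^*\bar q$, $\overline{\lambda_i}=-\Psi_i^*\bar q$ give the claim in one line of algebra. Your version makes the mechanism more transparent (the single filter and the $m$-filter models share the same dual, hence the same subgradient of $F$ at the optimum) and is arguably the cleaner exposition; the paper's version buys a little more robustness in that it never needs a dual maximizer to exist, only a primal one. One caveat: your remark that Fenchel--Rockafellar duality ``applies without any constraint qualification beyond properness'' is too strong for a general proper convex l.s.c.\ $F$ --- dual attainment, which your argument does use, normally requires something like $\Ran\Psi\cap\mathrm{ri}(\dom F)\neq\emptyset$. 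This is harmless here (in the application $F$ is finite-valued) and the paper's own proof implicitly needs the same qualification to invoke the chain rule $\partial(F\circ\Psi)=\Psi^*\partial F(\Psi\,\cdot)$, so the two proofs are on an equal footing in terms of rigor; you should simply state the qualification rather than claim none is needed.
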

\begin{proof}
We define ${\bf \Psi}:\R^{n\times m} \to\R^n$ by ${\bf \Psi}=(\Psi_1, \Psi_2,\ldots,\Psi_m)$, so that for $\llambda~=~\left(\begin{array}{c}\lambda_1\\ \lambda_2\\\cdots\\\lambda_m \end{array}\right)$, ${\bf \Psi}\llambda=\sum_{i=1}^m\Psi_i\lambda_i$.
The optimality condition of \ref{eq:P1} reads:
$$\PPsi^*\partial F(\PPsi \bar \llambda)+\sum_{i=1}^m \bar \lambda_i\ni 0,$$
and the optimality condition of \ref{eq:P2} reads
$$\Psi^*\partial F(\Psi \bar \lambda)+\bar \lambda\ni 0.$$
The minization problem \ref{eq:P2} admits a unique minimizer denoted $\overline{\lambda}$. By hypothesis $\PPsi\PPsi^*=\Psi\Psi^*$, hence $\Ran \PPsi=\Ran \Psi$ and there exists $\llambda_0$ such that
$$\Psi\lambdabar=\PPsi\llambda_0.$$
The optimality condition of \ref{eq:P2} implies that
$$0\in \Psi\Psi^*\partial F(\Psi \lambdabar)+\Psi\lambdabar,$$
hence
$$0\in \Psi\Psi^*\partial F(\PPsi \llambda_0)+\PPsi\llambda_0=\PPsi(\PPsi^*\partial F(\PPsi\llambda_0)+\llambda_0).$$
This proves that every vector $\llambda'\in\PPsi^*\partial F(\PPsi\llambda_0)+\llambda_0$ belongs to $\Ker \PPsi$. If we choose such a $\llambda'$ and set $\llambda=\llambda_0-\llambda'$ we have
$$\PPsi^*\partial F(\PPsi\llambda)+\llambda=\PPsi^*\partial F(\PPsi\llambda_0)+\llambda_0-\llambda'\ni0.$$
This implies that $\llambda$ is the minimizer of \ref{eq:P1} and we have
$$\PPsi\llambda=\PPsi(\llambda_0-\llambda')=\PPsi\llambda_0=\Psi\lambdabar,$$
which ends the proof.
\end{proof}

Let us now turn to the proof of Theorem \ref{eq:corollambda}.
\begin{proof}
To obtain \eqref{eq:equalitynoise}, it suffices to make the change of variable $\lambda_i'=\frac{\lambda_i}{\sqrt{\alpha_i}}$ in problem \eqref{eq:P1} and to apply proposition \eqref{prop:mdonne1} together with condition \eqref{eq:consistency}.
To obtain \eqref{eq:lambdaifromlambda}, it remains to observe that since $\sum_{k=1}^m b_i(\alpha) = b(\alpha)$, the determination of $\lambda_i$ boils down to the following quadratic problem:
\begin{align*}
 (\lambda_i(\alpha))_{i\in \{1,\hdots, m\}} & = \argmin_{\sum_{i=1}^m \lambda_i\star \psi_i = b(\alpha) } \sum_{i=1}^m \frac{\alpha_i}{2}\|\lambda_i\|_2^2 \\
& = \argmin_{\sum_{i=1}^m \hat \lambda_i \odot \hat \psi_i = \hat b(\alpha) } \sum_{i=1}^m \frac{\alpha_i}{2}\|\hat \lambda_i\|_2^2.
\end{align*}
The solution of this problem can be obtained analytically by deriving its optimality conditions. It leads to equation \eqref{eq:lambdaifromlambda}.
\end{proof}

\section*{Conclusion} 
\label{sec:conclusion}

This paper focussed on the problem of stationary noise removal using variational methods. In the first part, we showed that assuming the noise to be Gaussian is reasonable under conditions that are met in many applications such a destriping. In the second part we thus concentrated on variational problems that consist of minimizing $l^1-l^2$ functionals. We derived upper and lower bounds on the $l^2$-norm of the solutions of these functionals and showed that they can be used for simplifying the task of parameter selection. We also provided a numerical trick that allows to drastically reduce the computing times for cases where the noise is described as a sum of stationary processes. Overall this work allows to strongly reduce the computing times, to ease the parameter selection and to make our algorithms robust to different conditions.

As a perspective, let us notice that the lower bound proposed in proposition \eqref{eq:propminor} is coarse and it would be interesting to obtain tighter results highlighting why the upper bound is near tight in practice. We also plan to study the problem of deterministic parameter selection in a more general setting such as $l^p-l^q$ functionals.

\section*{Acknowledgments} 
This work was partially supported by ANR SPH-IM-3D (ANR-12-BSV5-0008).


\begin{thebibliography}{10} 


\bibitem{AravkinSIOPT1012}
 {\sc A. Aravkin, Y. Burke and M. Friedlander} 
 {\em Variational properties of value functions},
  To appear in SIAM Journal on optimization, 2013.


\bibitem{Aujol}
{\sc J.F. Aujol and A. Chambolle},
{\em Dual norms and image decomposition models},
International Journal of Computer Vision, 63, 1, 85--104, 2005

\bibitem{Nikolova}
{\sc F. Bauss, M. Nikolova and G. Steidl},
{\em Fully smoothed  l1-TV models: Bounds for the minimizers and parameter choice},
Journal of Mathematical Imaging and Vision, online Feb. 2013



\bibitem{billingsley2009convergence}
  {\sc P. Billingsley},
  {\em Convergence of probability measures},
  Wiley-Interscience, vol 493, 2009.

\bibitem{carfantan2010statistical}
  {\sc H. Carfantan and J. Idier},
  {\em Statistical linear destriping of satellite-based pushbroom-type images},
  IEEE Transactions on Geoscience and Remote Sensing, 48, 4, 1860--1871, 2010.


\bibitem{CP}
 {\sc A. Chambolle and T. Pock},
 {\em A first-order primal-dual algorithm for convex problems with applications to imaging},
 Journal of Mathematical Imaging and Vision, 40, 1, 120--145, 2011.

\bibitem{shu2011destripe}
  {\sc S. Chen and J.-L. Pellequer},
  {\em DeStripe: frequency-based algorithm for removing stripe noises from AFM images},
  BMC structural biology, 11, 1, 7, 2011.
  

\bibitem{vsnr} 
{\sc J. Fehrenbach, P. Weiss and C. Lorenzo},
{\em Variational algorithms to remove stationary noise. Application to microscopy imaging}, 
IEEE Image Processing, 21, 10, 4420--4430, 2012. 

\bibitem{vsnr2} 
{\sc J. Fehrenbach, P. Weiss and C. Lorenzo},
{\em Variational algorithms to remove stripes: a generalization of the negative norm models}, 
J. Fehrenbach, P. Weiss and C. Lorenzo, Proc. ICPRAM, 2012.


\bibitem{golub1979generalized}
  {\sc G. Golub, M. Heath, and G. Wahba},
  {\em Generalized cross-validation as a method for choosing a good ridge parameter},
  Technometrics, 21, 2, 215--223, 1979.


\bibitem{hendrickx2010matrix}
{\sc J. Hendrickx and A. Olshevsky},
{\em Matrix p-norms are NP-hard to approximate if $p \neq 1,2,\infty$},
SIAM Journal on Matrix Analysis and Applications,
31, 5, 2802--2812, 2010.


\bibitem{leischner2010formalin}
  {\sc U. Leischner, A. Schierloh, W. Zieglg{\"a}nsberger and H.-U. Dodt},
  {\em Formalin-induced fluorescence reveals cell shape and morphology in biological tissue samples},
  PloS one, 5, 4, 2010.


\bibitem{Mallat}
{\sc S. Mallat},
{\em A wavelet tour of signal processing},
Academic Press, 1999.

\bibitem{Meyer}
{\sc Y. Meyer},
{\em Oscillating patterns in image processing and nonlinear evolution equations: the fifteenth Dean Jacqueline B. Lewis memorial lectures},
Amer Mathematical Society, 2001.

\bibitem{morozov1966solution}
{\sc V. Morozov},
{\em On the solution of functional equations by the method of regularization},
Soviet Math. Dokl, 7, 1, 414--417, 1966.


\bibitem{munch2009stripe}
  {\sc B. M{\"u}nch, P. Trtik, F. Marone, and M. Stampanoni},
  {\em Stripe and ring artifact removal with combined wavelet-Fourier filtering},
  Opt. Express, 17, 10, 8567--8591, 2009.


\bibitem{ng2010solving}
  {\sc M. Ng, P. Weiss and X. Yuan},
  {\em Solving constrained total-variation image restoration and reconstruction problems via alternating direction methods},
  SIAM journal on Scientific Computing, 32, 5, 2710--2736, 2010. 


\bibitem{nikolova2004variational}
{\sc M. Nikolova},
{\em A variational approach to remove outliers and impulse noise},
Journal of Mathematical Imaging and Vision, 20, 1-2, 99--120, 2004.


\bibitem{nikolova2007model}
{\sc M. Nikolova}
{\em Model distortions in Bayesian MAP reconstruction},
Inverse Problems and Imaging, 1, 2, 2007.


\bibitem{Osher}
{\sc S. Osher, A. Sol{\'e} and L. Vese},
{\em Image decomposition and restoration using total variation minimization and the H-1 norm},
SIAM Multiscale Modeling \& Simulation, 1, 3, 349--370, 2003.


\bibitem{Rockafellar}
{\sc R.T. Rockafellar},
 {\em Convex analysis},
Vol. 28, Princeton university press, 1996.

\bibitem{rohn2000computing}
{\sc J. Rohn},
{\em Computing the norm $\|A\|_{\infty\rightarrow 1}$ norm is NP-hard},
Linear and Multilinear Algebra, 47, 3, 195--204, 2000, Taylor \& Francis.

\bibitem{teuber2013minimization}
  {\sc T. Teuber, G. Steidl and R. Chan}
  {\em Minimization and parameter estimation for seminorm regularization models with I-divergence constraints},
  Inverse Problems, 29, 3, 2013.


\bibitem{2012-vaiter-acha}
  {\sc S. Vaiter, C. Deledalle, G. Peyr{\'e}, J. Fadili and C. Dossal},
  {\em Local Behavior of Sparse Analysis Regularization: Applications to Risk Estimation},
  Applied and Computational Harmonic Analysis, 2012. 


\bibitem{van2011sparse}
  {\sc E. van den Berg, M.P. Friedlander},
  {\em Sparse optimization with least-squares constraints},
  SIAM Journal on Optimization, 21, 4, 1201--1229, 2011.


\bibitem{Vese}
{\sc L. Vese and S. Osher},
{\em Modeling textures with total variation minimization and oscillating patterns in image processing},
Journal of Scientific Computing, 19, 1, 553--572, 2003


\bibitem{yang2009efficient}
{\sc J. Yang, Y. Zhang, W. Yin},
{\em An efficient TVL1 algorithm for deblurring multichannel images corrupted by impulsive noise},
SIAM Journal on Scientific Computing, 31, 4, 2842--2865, 2009.









\bibitem{weiss2009efficient}
  {\sc P. Weiss, L. Blanc-F{\'e}raud and G. Aubert},
  {\em Efficient schemes for total variation minimization under constraints in image processing},
  SIAM journal on Scientific Computing, 31, 3, 2047--2080, 2009.

\end{thebibliography}
\end{document}